\newif\ifnovenue
\newif\ifwip
\title{Max-Margin Adversarial (MMA) Training: Direct Input Space Margin Maximization through Adversarial Training}
\author{Gavin Weiguang Ding}
\author{Yash Sharma}
\author{Kry Yik Chau Lui}
\author{Ruitong Huang}
\affil{Borealis AI}
\date{} \else
\title{MMA Training: Direct Input Space Margin Maximization through Adversarial Training}
\author{
~~~~~~~~~Gavin Weiguang Ding$^{1}$, Yash Sharma$^{2,3}$
\thanks{Work done at Borealis AI.}
, Kry Yik Chau Lui$^{1}$, Ruitong Huang$^{1}$ \\
~~~~~~~~$^1$Borealis AI \ $^2$University of Tuebingen \ $^3$Max Planck Institute for Intelligent Systems
}
\crefname{equation}{Eq.}{Eqs.}
\newtheorem{theorem}{Theorem}[section]
\newtheorem{proposition}{Proposition}[section]
\newtheorem{corollary}{Corollary}[section]
\newtheorem{remark}{Remark}[section]
\newtheorem{lemma}{Lemma}[section]
\def\eqref#1{equation~\ref{#1}}
\def\1{\bm{1}}
\def\eps{{\epsilon}}
\def\vzero{{\bm{0}}}
\def\vf{{\bm{f}}}
\DeclareMathAlphabet{\mathsfit}{\encodingdefault}{\sfdefault}{m}{sl}
\SetMathAlphabet{\mathsfit}{bold}{\encodingdefault}{\sfdefault}{bx}{n}
\def\gA{{\mathcal{A}}}
\def\gD{{\mathcal{D}}}
\def\gH{{\mathcal{H}}}
\def\gJ{{\mathcal{J}}}
\def\gS{{\mathcal{S}}}
\def\sI{{\mathbb{I}}}
\newcommand{\E}{\mathbb{E}}
\DeclareMathOperator*{\argmin}{arg\,min}
\def\epsinit{\eps_{init}}
\def\mineps{\eps_{min}}
\def\maxeps{\eps_{max}}
\def\veps{\bm{\eps}}
\def\PGD{\mathrm{PGD}}
\def\linf{\ell_\infty}
\def\l2{\ell_2}
\newcommand{\cL}{\mathcal{L}}
\newcommand{\fracpepd}{\frac{\partial \eps(\delta)}{\partial \delta}}
\newcommand{\fracpLpd}{\frac{\partial L_{\theta}(x+\delta, y)}{\partial \delta}}
\newcommand{\fracpepds}{\frac{\partial \eps(\delta^*)}{\partial \delta}}
\newcommand{\fracpLpds}{\frac{\partial L(\delta^*, \theta)}{\partial \delta}}
\newcommand{\Lzo}{L^{\text{01}}_\theta}
\newcommand{\Lce}{L^{\text{CE}}_\theta}
\newcommand{\Llm}{L^{\text{LM}}_\theta}
\newcommand{\Lslm}{L^{\text{SLM}}_\theta}
\newcommand{\Lmma}{L^{\text{MMA}}_\theta}
\newcommand{\Lcb}{L^{\text{CB}}_\theta}
\newcommand{\dslm}{d^{\text{SLM}}_\theta}
\newcommand{\margin}{d_\theta}
\newcommand{\dmax}{d_{\max}}
\def\blfootnote{\gdef\@thefnmark{}\@footnotetext}
\def\figvspace{0cm}
\def\presecvspace{-0.1cm}
\def\postsecvspace{-0.2cm}
\def\presubsecvspace{-0.06cm}
\def\postsubsecvspace{-0.1cm}
\begin{document}

\maketitle

\ifwip
\blfootnote{Preprint. Work in progress.}
\fi

\begin{abstract}
We study adversarial robustness of neural networks from a margin maximization perspective, where margins are defined as the distances from inputs to a classifier's decision boundary.
Our study shows that maximizing margins can be achieved by minimizing the adversarial loss on the decision boundary at the ``shortest successful perturbation'', demonstrating a close connection between adversarial losses and the margins. We propose Max-Margin Adversarial (MMA) training to directly maximize the margins to achieve adversarial robustness. 
Instead of adversarial training with a fixed $\epsilon$, MMA offers an improvement by enabling adaptive selection of the ``correct'' $\epsilon$ as the margin individually for each data point. In addition, we rigorously analyze adversarial training with the perspective of margin maximization, and provide an alternative interpretation for adversarial training, maximizing either a lower or an upper bound of the margins. Our experiments empirically confirm our theory and demonstrate MMA training's efficacy on the MNIST and CIFAR10 datasets w.r.t. $\ell_\infty$ and $\ell_2$ robustness. Code and models are available at \url{https://github.com/BorealisAI/mma_training}.
\end{abstract}

\section{Introduction}
\vspace{\postsecvspace}

\begin{wrapfigure}{r}{0.4\textwidth}
\vspace{-0.5cm}
\begin{center}
\includegraphics[width=\linewidth]{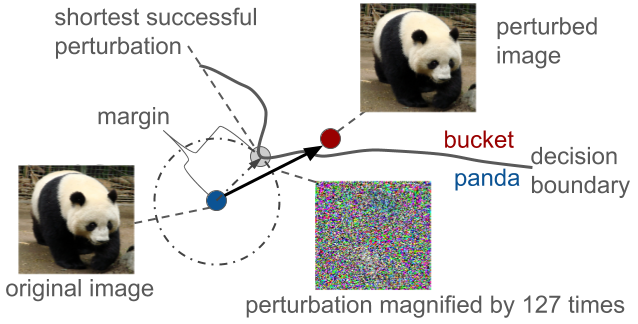}
\end{center}
\caption{Illustration of decision boundary, margin, and shortest successful perturbation on application of an adversarial perturbation. }
\vspace{-0.3cm}
\label{fig:margin}
\end{wrapfigure}

Despite their impressive performance on various learning tasks, neural networks have been shown to be vulnerable to adversarial perturbations \citep{szegedy2013intriguing,biggio2013evasion}.
An artificially constructed imperceptible perturbation can cause a significant drop in the prediction accuracy of an otherwise accurate network. 
The level of distortion is measured by the magnitude of the perturbations (e.g. in $\linf$ or $\l2$ norms), i.e. the distance from the original input to the perturbed input. 
\Cref{fig:margin} shows an example, where the classifier changes its prediction from panda to bucket when the input is perturbed from the blue sample point to the red one. 
\Cref{fig:margin} also shows the natural connection between adversarial robustness and the margins of the data points, where the margin is defined as the distance from a data point to the classifier's decision boundary. Intuitively, the margin of a data point is the minimum distance that $x$ has to be perturbed to change the classifier's prediction.
Thus, the larger the margin is, the farther the distance from the input to the decision boundary is, the more robust the classifier is w.r.t. this input.

Although naturally connected to adversarial robustness, ``directly'' maximizing margins has not yet been thoroughly studied in the adversarial robustness literature. 
Instead, the method of minimax adversarial training \citep{huang2015learning,madry2017towards} is arguably the most common defense to adversarial perturbations due to its effectiveness and simplicity. 
Adversarial training attempts to minimize the maximum loss within a fixed sized neighborhood about the training data using projected gradient descent (PGD). 
Despite advancements made in recent years \citep{hendrycks2019using, zhang2019you, shafahi2019adversarial,zhang2019theoretically,stanforth2019labels,carmon2019unlabeled}, adversarial training still suffers from a fundamental problem, the perturbation length $\eps$ has to be set and is fixed throughout the training process. In general, the setting of $\eps$ is arbitrary, based on assumptions on whether perturbations within the defined ball are ``imperceptible'' or not. Recent work~\citep{guo2018low,sharma2019effectiveness} has demonstrated that these assumptions do not consistently hold true, commonly used $\eps$ settings assumed to only allow imperceptible perturbations in fact do not. If $\epsilon$ is set too small, the resulting models lack robustness, if too large, the resulting models lack in accuracy. Moreover, individual data points may have different intrinsic robustness, the variation in ambiguity in collected data is highly diverse, and fixing one $\eps$ for all data points across the whole training procedure is likely suboptimal.

Instead of improving adversarial training with a fixed perturbation magnitude, we revisit adversarial robustness from the margin perspective, and propose Max-Margin Adversarial (MMA) training for ``direct'' input margin maximization. 
By directly maximizing margins calculated for each data point, MMA training allows for optimizing the ``\emph{\textbf{current}} robustness'' of the data, the ``\emph{\textbf{correct}}'' $\eps$ at this point in training for each sample \emph{\textbf{individually}}, instead of robustness w.r.t. a predefined magnitude. 

While it is intuitive that one can achieve the greatest possible robustness by maximizing the margin of a classifier,
this maximization has technical difficulties. 
In \Cref{sec:mmat}, we overcome these difficulties and show that margin maximization can be achieved by minimizing a classification loss w.r.t. model parameters, at the ``shortest successful perturbation''. 
This makes gradient descent viable for margin maximization, despite the fact that model parameters are entangled in the constraints.

We further analyze adversarial training \citep{huang2015learning,madry2017towards} from the perspective of margin maximization in \Cref{sec:reltdAdvTrain}. 
We show that, for each training example, adversarial training with fixed perturbation length $\eps$ is maximizing a lower (or upper) bound of the margin, if $\eps$ is smaller (or larger) than the margin of that training point.
As such, MMA training improves adversarial training, in the sense that it selects the ``correct'' $\eps$, the margin value for each example.

Finally in \Cref{sec:experiments}, we test and compare MMA training with adversarial training on MNIST and CIFAR10 w.r.t. $\linf$ and $\l2$ robustness. 
Our method achieves higher robustness accuracies on average under a variety of perturbation magnitudes, which echoes its goal of maximizing the average margin. 
Moreover, MMA training automatically balances accuracy vs robustness while being insensitive to its hyperparameter setting, which contrasts sharply with the sensitivity of standard adversarial training to its fixed perturbation magnitude. 
MMA trained models not only match the performance of the best adversarially trained models with carefully chosen training $\eps$ under different scenarios, it also matches the performance of ensembles of adversarially trained models. 
 
In this paper, we focus our theoretical efforts on the formulation for directly maximizing the input space margin, and understanding the standard adversarial training method from a margin maximization perspective. We focus our empirical efforts on thoroughly examining our MMA training algorithm, comparing with adversarial training with a fixed perturbation magnitude. 

\vspace{\presubsecvspace}
\subsection{Related Works}
\label{sec:related_works}
\vspace{\postsubsecvspace}

Although not often explicitly stated, many defense methods are related to increasing the margin.
One class uses regularization to constrain the model's Lipschitz constant \citep{cisse2017parseval,ross2017improving,hein2017formal,sokolic2017robust,tsuzuku2018lipschitz}, thus samples with small loss would have large margin since the loss cannot increase too fast. If the Lipschitz constant is merely regularized at the data points, it is often too local and not accurate in a neighborhood. When globally enforced, the Lipschitz constraint on the model is often too strong that it harms accuracy. So far, such methods have not achieved strong robustness.
There are also efforts using first-order approximation to estimate and maximize input space margin \citep{matyasko2017margin,elsayed2018large,yan2019adversarial}. Similar to local Lipschitz regularization, the reliance on local information often does not provide accurate margin estimation and efficient maximization. Such approaches have also not achieved strong robustness so far. \citet{croce2018provable} aim to enlarge the linear region around a input example, such that the nearest point, to the input and on the decision boundary, is inside the linear region. Here, the margin can be calculated analytically and hence maximized. However, the analysis only works on ReLU networks, and the implementation so far only works on improving robustness under small perturbations.
We defer some detailed discussions on related works to \Cref{sec:more_related_works}, including a comparison between MMA training and SVM.

\vspace{\presubsecvspace}
\subsection{Notations and Definitions}
\label{sec:def}
\vspace{\postsubsecvspace}

We focus on $K$-class classification problems. Denote $\gS=\{x_i, y_i\}$ as the training set of input-label data pairs sampled from data distribution $\gD$. We consider the classifier as a score function $\vf_\theta(x)= \left(f_\theta^1(x), \ldots, f_\theta^K(x)\right)$, parametrized by $\theta$, which assigns score $f_\theta^i(x)$ to the $i$-th class.
The predicted label of $x$ is then decided by $\hat y =  \arg\max_i f_\theta^i(x)$.

Let $\Lzo(x, y)=\sI(\hat y \neq y)$ be the 0-1 loss indicating classification error, where $\sI(\cdot)$ is the indicator function. 
For an input $(x, y)$, we define its margin w.r.t. the classifier $\vf_\theta(\cdot)$ as:
\begin{equation}
d_\theta(x, y) = \|\delta^*\| = \min \|\delta\| \quad \text{ s.t. } \delta: \Lzo(x+\delta, y) = 1,
\label{eq:margin_def}
\end{equation}
where $\delta^* = \argmin_{\Lzo(x+\delta, y) = 1} \|\delta\|$ is the ``shortest successful perturbation''. 
We give an equivalent definition of margin with the 
 \emph{``logit margin loss''}
$\Llm(x, y) = \max_{j\neq y} f_\theta^j(x) - f_\theta^y(x)$.
\footnote{Since the scores $\left(f_\theta^1(x), \ldots, f_\theta^K(x)\right)$ output by $\vf_\theta$ are also called logits in neural network literature.}
The level set $\{x: \Llm(x, y) = 0\}$ corresponds to the decision boundary of class $y$. Also, when $\Llm(x, y) < 0$, the classification is correct, and when $\Llm(x, y) \geq 0$, the classification is wrong. Therefore, we can define the margin in \Cref{eq:margin_def} in an equivalent way by $\Llm(\cdot)$ as:
\begin{equation}
d_\theta(x, y) = \|\delta^*\| = \min \|\delta\| \quad \text{ s.t. } \delta: \Llm(x + \delta, y) \ge 0,
\label{eq:margin_def_by_lm}
\end{equation}
where $\delta^* = \argmin_{\Llm(x+\delta, y) \geq 0} \|\delta\|$ is again the ``shortest successful perturbation''. 
For the rest of the paper, we use the term ``margin'' to denote $d_\theta(x, y)$ in \Cref{eq:margin_def_by_lm}.
For other notions of margin, we will use specific phrases, e.g. ``SLM-margin'' or ``logit margin.''
\if0
\bibliography{refs}
\fi

\vspace{\presecvspace}
\section{Max-Margin Adversarial Training}
\label{sec:mmat}
\vspace{\postsecvspace}

We propose to improve adversarial robustness by maximizing the average margin of the data distribution $\gD$, called Max-Margin Adversarial (MMA) training, by optimizing the following objective:
\begin{equation}
\min_\theta  \{\sum_{i 
	\in \gS_\theta^+} \max \{0, d_{\max} - d_\theta(x_i, y_i)\}  + \beta \sum_{j \in \gS_\theta^-} \gJ_\theta(x_j, y_j) \},
\label{eq:mmat}
\end{equation}
where $\gS_\theta^+ = \{i: \Llm(x_i, y_i) < 0\}$ is the set of correctly classified examples, $\gS_\theta^- = \{i: \Llm(x_i, y_i) \geq 0\}$ is the set of wrongly classified examples, $\gJ_\theta(\cdot)$ is a regular classification loss function, e.g. cross entropy loss, $d_\theta(x_i, y_i)$ is the margin for correctly classified samples, and $\beta$ is the coefficient for balancing correct classification and margin maximization. 
Note that the margin $d_\theta(x_i, y_i)$ is inside the hinge loss with threshold $d_{\max}$ (a hyperparameter), which forces the learning to focus on the margins that are smaller than  $d_{\max}$.
Intuitively, MMA training simultaneously minimizes classification loss on wrongly classified points in $\gS_\theta^-$ and maximizes the margins of correctly classified points in $d_\theta(x_i, y_i)$ until it reaches $\dmax$.
Note that we do not maximize margins on wrongly classified examples. 
Minimizing the objective in \Cref{eq:mmat} turns out to be a technical challenge. 
While $\nabla_\theta \gJ_\theta(x_j, y_j)$ can be easily computed by standard back-propagation, computing the gradient of $d_\theta(x_i, y_i)$ needs some technical developments.

In the next section, we show that margin maximization can still be achieved by minimizing a classification loss w.r.t. model parameters, at the ``shortest successful perturbation''. For smooth functions, a stronger result exists: the gradient of the margin w.r.t. model parameters can be analytically calculated, as a scaled gradient of the loss.
Such results make gradient descent viable for margin maximization, despite the fact that model parameters are entangled in the constraints.

\vspace{\presubsecvspace}
\subsection{Margin Maximization}
\vspace{\postsubsecvspace}

\label{subsec:marginmax}
Recall that 
\[
d_\theta(x, y) = \|\delta^*\| = \min \|\delta\| \quad \text{ s.t. } \delta: \Llm(x + \delta, y) \ge 0.
\]
Note that the constraint of the above optimization problem depends on model parameters, thus margin maximization is a max-min nested optimization problem with a parameter-dependent constraint in its inner minimization.\footnote{In adversarial training \citep{madry2017towards}, the constraint on the inner max does NOT have such problem.}
Computing such gradients for a linear model is easy due to the existence of its closed-form solution, e.g. SVM, but it is not so 
for general functions such as neural networks.

The next theorem provides a viable way to increase $d_\theta(x, y)$. 
\begin{theorem}
	\label{thm:margin_max}
	Gradient descent on $\Llm(x + \delta^*, y)$ w.r.t. $\theta$ with a proper step size increases $d_\theta(x, y)$, where $\delta^* = \argmin_{\Llm(x+\delta, y) \geq 0} \|\delta\|$ is the shortest successful perturbation given the current $\theta$.
\end{theorem}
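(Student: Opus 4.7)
The plan is to reduce the question to an envelope-theorem computation of $\nabla_\theta d_\theta(x,y)$. First I would observe that for any $(x,y)$ with $\Llm(x,y) < 0$, the point $\delta = 0$ is infeasible for the constraint $\Llm(x+\delta, y) \ge 0$, so the constraint is active at $\delta^*$; that is, $\Llm(x+\delta^*, y) = 0$ and $\delta^*$ sits exactly on the decision boundary. This is what will make $\delta^*$ respond smoothly to small perturbations of $\theta$.

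Next I would write the Lagrangian
\[
L(\delta, \lambda; \theta) = \|\delta\| - \lambda\,\Llm(x+\delta, y)
\]
and read off the KKT conditions at the saddle point $(\delta^*, \lambda^*)$. Stationarity in $\delta$ gives $\delta^*/\|\delta^*\| = \lambda^*\,\nabla_\delta \Llm(x+\delta^*, y)$, and since $\delta^* \neq 0$ this forces $\lambda^* > 0$ whenever $\nabla_\delta \Llm(x+\delta^*, y) \neq 0$. The envelope theorem then yields
\[
\nabla_\theta d_\theta(x,y) = \left.\nabla_\theta L(\delta, \lambda^*; \theta)\right|_{\delta=\delta^*} = -\lambda^*\,\nabla_\theta \Llm(x+\delta^*, y),
\]
where on the right $\delta^*$ is frozen. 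This is exactly the gradient that the proposed update uses, up to the positive scalar $\lambda^*$.

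The conclusion is then immediate: a gradient descent step $\theta \mapsto \theta - \eta\,\nabla_\theta \Llm(x+\delta^*, y)$ moves $\theta$ along $-\nabla_\theta \Llm(x+\delta^*, y) = (1/\lambda^*)\nabla_\theta d_\theta(x,y)$, a strictly positive multiple of the margin gradient. A first-order Taylor expansion then gives
\[
d_{\theta_{\text{new}}}(x,y) - d_\theta(x,y) = \frac{\eta}{\lambda^*}\,\|\nabla_\theta \Llm(x+\delta^*, y)\|^2 + o(\eta),
\]
so for every sufficiently small positive $\eta$ the margin strictly increases, unless the gradient vanishes (in which case $\theta$ is already stationary for $d_\theta$ and there is nothing to prove).

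The hard part will be discharging the regularity conditions needed for the envelope step. Three issues stand out: smoothness of $\Llm$ in $(\delta, \theta)$, which can fail at argmax ties in $\max_{j \neq y} f^j_\theta$ and at the kinks of ReLU networks but holds generically; a constraint qualification $\nabla_\delta \Llm(x+\delta^*, y) \neq 0$ so that the implicit function theorem can localize $\delta^*(\theta)$; and strict complementarity $\lambda^* > 0$, which I already noted follows from the first condition. All three are mild and I would state them as standing assumptions. As a backup argument avoiding the implicit-function machinery, I would use a first-order expansion to show that immediately after the update $\Llm(x+\delta^*, y)$ becomes strictly negative, so $\delta^*$ is strictly feasible and hence non-minimal under the new $\theta$, which forces $d_{\theta_{\text{new}}}(x,y)$ to strictly exceed $\|\delta^*\| = d_\theta(x,y)$.
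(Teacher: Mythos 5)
Your main line of argument is, in substance, the paper's own proof of the smooth case: \Cref{thm:theoretical_grad_margin} derives $\nabla_\theta d_\theta(x,y) = C(\theta,x,y)\,\frac{\partial L(\delta^*,\theta)}{\partial \theta}$ from the same KKT stationarity conditions, routing the computation through the implicit function theorem applied to the first-order system rather than citing the envelope theorem by name, and under exactly the regularity you list ($C^2$ smoothness, uniqueness of $\delta^*$, and a full-rank bordered Hessian playing the role of your constraint qualification). So that part is correct and essentially identical. Note, however, that the paper deliberately splits the theorem into two prongs because the intended models are ReLU networks under $\ell_\infty$ norms, where smoothness fails in a way that cannot simply be assumed away; the second prong (\Cref{prop:dtexistence}) replaces gradients with directional derivatives and a Danskin-type theorem for the upper envelope $\sup_{\|\delta\|\le\eps}L(\delta,\theta)$. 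You acknowledge this case as a standing assumption but do not discharge it, so your proof covers only one of the two halves the theorem is summarizing.

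Your backup argument has a genuine gap. Making $\Llm(x+\delta^*,y)$ strictly negative after the update only certifies that the single point $\delta^*$ is no longer a successful perturbation (it becomes \emph{infeasible} for the constraint $\Llm \ge 0$, not ``strictly feasible''); it does not rule out that some other $\delta$ with $\|\delta\|\le\|\delta^*\|$ has become successful under the new parameters, in which case the margin would have \emph{decreased}. Points near $\delta^*$ have loss arbitrarily close to zero before the update, so a pointwise Taylor bound at $\delta^*$ cannot control them. The paper closes exactly this hole by working with the maximum of the loss over the whole ball of radius $d_\theta(x,y)$: \Cref{prop:largermargin} assumes that this maximum does not increase (hence stays $\le 0$) and then uses continuity of the envelope function together with a compactness/open-cover argument (cf.\ \Cref{prop:advmmat}) to conclude that the constraint is not binding and the new margin strictly exceeds the old one. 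If you want to keep the backup route, the first-order expansion must be applied to $\max_{\|\delta\|\le d_\theta(x,y)}\Llm(x+\delta,y)$ --- which is precisely where the Danskin-type result enters --- rather than to the loss at the single point $\delta^*$.
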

 \Cref{thm:margin_max} summarizes the theoretical results, where we show separately later
 \begin{enumerate}[topsep=-1ex,itemsep=-1ex,partopsep=0ex,parsep=1ex]
 	\item[1)] how to calculate the gradient of the margin under some smoothness assumptions;
 	\item[2)] without smoothness, margin maximization can still be achieved by minimizing the loss at the shortest successful perturbation.  
 \end{enumerate}

\paragraph{Calculating gradients of margins for smooth loss and norm:}
Denote $\Llm(x + \delta, y)$ by $L(\delta, \theta) $ for brevity. 
It is easy to see that for a wrongly classified example $(x, y)$, $\delta^*$ is achieved at $\vzero$ and thus  $\nabla_\theta d_\theta(x, y)=0$.
Therefore we focus on correctly classified examples.  
Denote the Lagrangian as
$ \cL_\theta(\delta, \lambda) = \|\delta\| + \lambda L(\delta, \theta)$.
For a fixed $\theta$, denote the optimizers of $\cL_\theta( \delta, \lambda)$ by $\delta^*$ and $\lambda^*$ .
The following theorem shows how to compute $\nabla_\theta d_\theta(x, y)$.

\begin{proposition} 
\label{thm:theoretical_grad_margin}
Let  $\eps(\delta) = \|\delta\|$. Given a fixed $\theta$, assume that $\delta^*$ is unique, $\eps(\delta)$ and $L(\delta, \theta)$ are $C^2$ functions in a neighborhood of $(\delta^*, \theta)$, and the matrix $\left(
\begin{array}{c c}
\frac{\partial^2 \eps(\delta^*)}{\partial \delta^2} + \lambda^* \frac{\partial^2 L( \delta^*, \theta)}{\partial \delta^2} & \frac{\partial L(\delta^*, \theta)}{\partial \delta}\\
\frac{\partial L(\delta^*, \theta)}{\partial \delta}^{\top} & 0 \\
\end{array}
\right)\,$ is full rank, 
then
\[
\nabla_\theta d_\theta(x, y) = C(\theta, x, y) \frac{\partial L(\delta^*, \theta)}{\partial \theta}, \text{~where~} C(\theta, x, y) =\frac{\left\langle\fracpepds , \fracpLpds \right\rangle}{ \| \fracpLpds\|_2^2} \text{~~is a scalar.}
\]  
\end{proposition}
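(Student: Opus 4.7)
The plan is to treat $d_\theta(x,y)$ as the optimal value of a constrained optimization problem with a parameter-dependent constraint, and to combine the implicit function theorem applied to the KKT system with an envelope-theorem-style identity. This is the standard route for sensitivity analysis, and the hypotheses listed in the statement are precisely what one needs to make that route go through.

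First I would note that, for a correctly classified example, the constraint $L(\delta, \theta) \ge 0$ is active at the optimum, so $\delta^{*}$ is characterized by the equality $L(\delta^{*},\theta)=0$ together with the stationarity condition for $\mathcal{L}_\theta(\delta,\lambda) = \epsilon(\delta) + \lambda L(\delta,\theta)$, namely
\[
\tfrac{\partial \epsilon(\delta^{*})}{\partial \delta} + \lambda^{*}\,\tfrac{\partial L(\delta^{*},\theta)}{\partial \delta} = 0.
\]
These two equations form a system $F(\delta,\lambda,\theta)=0$ whose Jacobian in $(\delta,\lambda)$ is exactly the bordered matrix appearing in the hypothesis. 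The full-rank assumption together with the $C^{2}$ regularity and uniqueness of $\delta^{*}$ lets me apply the implicit function theorem to obtain smooth branches $\delta^{*}(\theta)$ and $\lambda^{*}(\theta)$ in a neighborhood of the given $\theta$.

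Next I would compute $\nabla_\theta d_\theta(x,y)$ using the envelope-theorem identity. Since $d_\theta(x,y)=\epsilon(\delta^{*}(\theta)) = \mathcal{L}_\theta(\delta^{*}(\theta),\lambda^{*}(\theta))$ (the Lagrangian agrees with the objective when the constraint binds), differentiating in $\theta$ and using the stationarity of $\mathcal{L}_\theta$ in $\delta$ kills all the terms involving $\partial_\theta \delta^{*}$ and $\partial_\theta \lambda^{*}$, leaving only the direct $\theta$-dependence:
\[
\nabla_\theta d_\theta(x,y) \;=\; \lambda^{*}\,\tfrac{\partial L(\delta^{*},\theta)}{\partial \theta}.
\]
To express the scalar $\lambda^{*}$ in terms of the observable quantities, I would take the inner product of the stationarity condition with $\partial L(\delta^{*},\theta)/\partial \delta$; this produces a one-dimensional equation that solves to the claimed $C(\theta,x,y) = \langle \partial_\delta \epsilon^{*},\partial_\delta L^{*}\rangle / \|\partial_\delta L^{*}\|_{2}^{2}$ (up to sign convention on $\lambda^{*}$). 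Substituting back gives the stated formula.

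The main technical obstacle is justifying the envelope step cleanly given that the constraint itself depends on $\theta$: naively one picks up contributions from $\partial_\theta \delta^{*}$, and these are defined only implicitly through the IFT. The crux is that the KKT stationarity condition in $\delta$ is exactly what is needed to cancel those contributions, which in turn is exactly why the bordered-Hessian full-rank hypothesis (ensuring the IFT yields a smooth selection and that stationarity uniquely pins down $\lambda^{*}$) is the right regularity assumption. Everything else is bookkeeping: verifying that the inner-product calculation is well-defined (which requires $\partial_\delta L^{*} \neq 0$, implicit in the full-rank assumption since otherwise the bordered Hessian would be rank-deficient) and that uniqueness of $\delta^{*}$ rules out nonsmooth kinks in $d_\theta$ near the base point.
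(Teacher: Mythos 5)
Your proposal is correct and follows essentially the same route as the paper's proof: both characterize $\delta^*$ via the KKT system (stationarity of the Lagrangian plus the active constraint $L(\delta^*,\theta)=0$), invoke the implicit function theorem on that system using the full-rank bordered matrix to obtain smooth branches $\delta^*(\theta)$, $\lambda^*(\theta)$, use stationarity and the binding constraint to reduce $\nabla_\theta d_\theta$ to $\lambda^*\,\partial L(\delta^*,\theta)/\partial\theta$, and recover $\lambda^*$ by projecting the stationarity condition onto $\partial L/\partial\delta$. The only cosmetic difference is that you package the cancellation of the $\partial_\theta\delta^*$ terms as an envelope-theorem identity on the Lagrangian, while the paper obtains the same cancellation by differentiating the feasibility identity $L(\delta^*(\theta),\theta)\equiv 0$ directly; your parenthetical caveat about the sign of $\lambda^*$ is also warranted, as the paper's own final expression for $\lambda^*$ drops a minus sign relative to its stationarity condition.
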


\begin{remark}
	By \Cref{thm:theoretical_grad_margin}, the margin's gradient w.r.t. to the model parameter $\theta$ is proportional to the loss' gradient w.r.t. $\theta$ at $\delta^*$, the shortest successful perturbation. Therefore to perform gradient ascent on the margin, we just need to find $\delta^*$ and perform gradient descent on the loss. 
\end{remark}

\paragraph{Margin maximization for non-smooth loss and norm:}
\Cref{thm:theoretical_grad_margin} requires the loss function and the norm to be $C^2$ at $\delta^*$. 
This might not be the case for many functions used in practice, e.g. ReLU networks and the $\linf$ norm. 
Our next result shows that under a weaker condition of directional differentiability (instead of $C^2$), learning $\theta$ to maximize the margin can still be done by decreasing $L(\delta^*, \theta)$ w.r.t. $\theta$, at $\theta = \theta_0$.
Due to space limitations, we only present an informal statement here.
Rigorous statements can be found in Appendix \ref{subsec:rigorous_non_smooth}. 

\begin{proposition}
    Let $\delta^*$ be unique and $L(\delta, \theta)$ be the loss of a deep ReLU network. 
    There exists some direction $\vec{v}$ in the parameter space, such that the loss $L(\delta, \theta )|_{\delta = \delta^*}$ can be reduced in the direction of $\vec{v}$.
    Furthermore, by reducing $L(\delta, \theta )|_{\delta = \delta^*}$, the margin is also guaranteed to be increased.
    \label{prop:dtexistence}
\end{proposition}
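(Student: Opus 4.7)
The plan is to establish the two sub-claims separately and to link them through an envelope-theorem argument adapted to the non-smooth setting. The starting observation is that, by continuity of $L(\cdot, \theta)$ in $\delta$ and the definition of the shortest successful perturbation, $L(\delta^*, \theta_0) = 0$; moreover, the uniqueness of $\delta^*$ implies that $\delta^*$ is the unique maximizer of $L(\cdot, \theta_0)$ on the closed ball $A = \{\delta : \|\delta\| \le \|\delta^*\|\}$, with maximum value $0$. This gives the bridge between ``loss at $\delta^*$'' and ``margin'': the margin strictly exceeds $\|\delta^*\|$ at a parameter $\theta$ if and only if $\max_{\delta \in A} L(\delta, \theta) < 0$.

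For the first sub-claim, I would invoke the fact that the loss arising from a deep ReLU network composed with a smooth classification head is piecewise smooth (indeed, semi-algebraic) in $\theta$, so the scalar map $\theta \mapsto L(\delta^*, \theta)$ admits one-sided directional derivatives in every direction at $\theta_0$. Unless $\theta_0$ is already a local minimum of this scalar map, there exists a direction $\vec{v}$ in parameter space along which the directional derivative is strictly negative; any sufficiently small step in $\vec{v}$ then strictly decreases $L(\delta^*, \theta)$. This is the weak non-smooth analogue of the gradient-based descent used in Proposition~\ref{thm:theoretical_grad_margin}.

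For the second sub-claim, I would introduce the auxiliary function $\phi(\theta) = \max_{\delta \in A} L(\delta, \theta)$, which satisfies $\phi(\theta_0) = 0$ and whose strict negativity is equivalent to a strict increase of the margin. A Danskin-type envelope identity, adapted to directionally differentiable functions, then yields that the directional derivative of $\phi$ at $\theta_0$ in direction $\vec{v}$ equals that of $L(\delta^*, \cdot)$ at $\theta_0$. Consequently, the descent direction from the first sub-claim also makes $\phi$ decrease; since $\phi(\theta_0) = 0$, for all sufficiently small step sizes $t > 0$ we get $\phi(\theta_0 + t\vec{v}) < 0$, so no $\delta$ with $\|\delta\| \le \|\delta^*\|$ remains a successful perturbation and the margin strictly grows.

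The main obstacle is the rigorous justification of the envelope identity in the non-smooth regime. In the classical $C^2$ setting this is immediate from Danskin's theorem, but for piecewise smooth $L$ one must carry out a uniform-continuity step: any sequence of near-maximizers of $L(\cdot, \theta_n)$ on the compact set $A$, with $\theta_n \to \theta_0$, must cluster at $\delta^*$ by uniqueness of the maximizer and continuity of $L$ in both arguments. Once this localization is in place, the one-sided behavior of $\phi$ near $\theta_0$ is controlled by that of $L(\delta^*, \cdot)$, and the remaining implications follow by a standard first-order Taylor estimate along the direction $\vec{v}$. I would defer the fully rigorous verification, including the precise directional-differentiability hypotheses, to the appendix referenced in the statement.
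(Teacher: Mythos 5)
Your proposal follows essentially the same route as the paper's proof: both reduce the margin claim to the sign of the envelope function $\max_{\|\delta\|\le\|\delta^*\|}L(\delta,\theta)$, which equals $0$ at $\theta_0$ with $\delta^*$ as unique maximizer, and both invoke a non-smooth Danskin-type identity (the paper cites a theorem for directionally differentiable/upper semi-continuous envelopes, you sketch the near-maximizer localization that underlies it) to equate the envelope's directional derivative with that of $\theta\mapsto L(\delta^*,\theta)$, so that any descent direction for the loss at $\delta^*$ (which exists unless $\theta_0$ is a local minimum, a caveat both arguments share) drives the envelope strictly negative and hence strictly enlarges the margin. No substantive gap relative to the paper's own (sketched) argument.
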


\begin{figure}[tb]
    \centering
    \includegraphics[width=\linewidth]{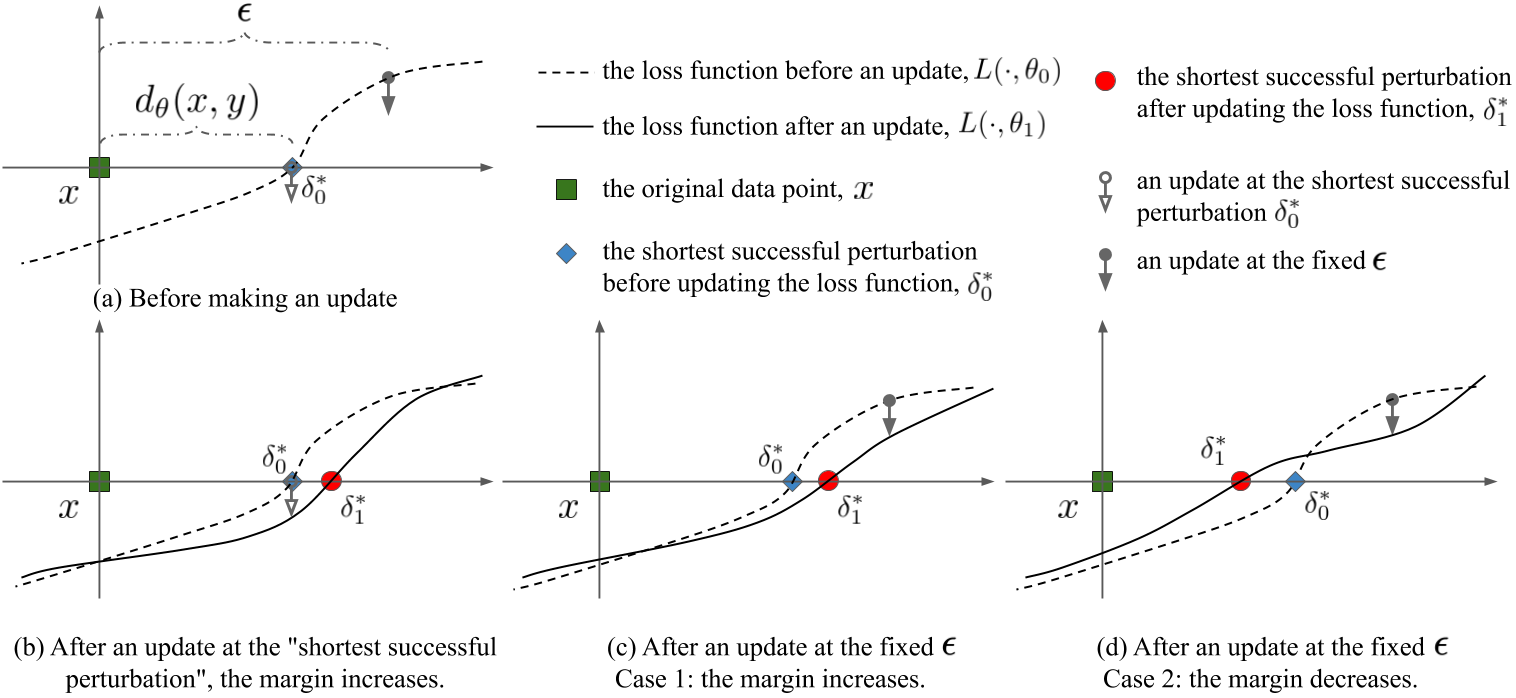}
    \caption{A 1-D example on how margin is affected by decreasing the loss at different locations.
    }
    \label{fig:1d_maxmargin}
\end{figure}

\Cref{fig:1d_maxmargin} illustrates the relationship between the margin and the adversarial loss with an imaginary example. 
Consider a 1-D example in \Cref{fig:1d_maxmargin} (a), 
where the input example $x$ is a scalar. We perturb $x$ in the positive direction with perturbation $\delta$. As we fix $(x, y)$, we overload $L(\delta, \theta) = \Llm(x+\delta, y)$, which is monotonically increasing on $\delta$, namely larger perturbation results in higher loss. 
Let $L(\cdot, \theta_0)$ (the dashed curve) denote the original function before an update step, and $\delta^*_0=\argmin_{L(\delta, \theta_0) \geq 0} \|\delta\|$ denote the corresponding margin (same as shortest successful perturbation in 1D). 
As shown in \Cref{fig:1d_maxmargin} (b), as the parameter is updated to $\theta_1$ such that $L(\delta^*_0, \theta_1)$ is reduced, the new margin $\delta^*_1=\argmin_{L(\delta, \theta_1) \geq 0} \|\delta\|$ is enlarged. Intuitively, a reduced value of the loss at the shortest successful perturbation leads to an increase in margin.

\if0
\bibliography{refs}
\fi

\vspace{\presubsecvspace}
\subsection{Stabilizing the Learning with Cross Entropy Surrogate Loss}
\vspace{\postsubsecvspace}

In practice, we find the gradients of the ``logit margin loss'' $\Llm$ to be unstable. The piecewise nature of the $\Llm$ loss can lead to discontinuity of its gradient, causing large fluctuations on the boundary between the pieces. It also does not fully utilize information provided by all the logits.

In our MMA algorithm, we instead use the \emph{``soft logit margin loss''} (SLM)
\[
\Lslm(x, y) = \log\sum_{j\neq y} \exp(f_\theta^j(x)) - f_\theta^y(x),
\]
which serves as a surrogate loss to the \emph{``logit margin loss''} $\Llm(x, y)$ by replacing the the $\max$ function by the LogSumExp (sometimes also called softmax) function.
One immediate property is that the SLM loss is smooth and convex (w.r.t. logits). The next proposition shows that SLM loss is a good approximation to the LM loss.
\begin{proposition}
	\label{prop:lslmllm}
	\begin{equation}
	\Lslm(x, y) - \log(K-1) \le \Llm(x, y) \le \Lslm(x, y),
	\label{thm:slm_lm}
	\end{equation}
	where  $K$ denote the number of classes.
\end{proposition}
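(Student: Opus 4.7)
The plan is to reduce the claim to the classical LogSumExp sandwich bound on the maximum. Let me set $a_j = f_\theta^j(x)$ for $j \neq y$, and write $M = \max_{j \neq y} a_j$ and $S = \log \sum_{j \neq y} \exp(a_j)$. Since the term $-f_\theta^y(x)$ appears on both $\Llm$ and $\Lslm$, after subtracting it from all three sides the inequality to prove becomes the purely LogSumExp statement
\[
S - \log(K-1) \le M \le S.
\]

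For the upper bound $M \le S$, I would note that $\exp(M) \le \sum_{j \neq y} \exp(a_j)$ because the sum contains the single term $\exp(M)$ (which is non-negative and at most the full sum), and then take logs using monotonicity of $\log$. For the lower bound, I would use the fact that each summand satisfies $\exp(a_j) \le \exp(M)$, so
\[
\sum_{j \neq y} \exp(a_j) \le (K-1)\exp(M),
\]
and taking logs gives $S \le \log(K-1) + M$, i.e., $S - \log(K-1) \le M$. Combining the two bounds and adding $-f_\theta^y(x)$ back yields exactly \Cref{thm:slm_lm}.

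There is essentially no obstacle here — the argument is elementary and relies only on monotonicity of $\log$ and the fact that the sum is taken over exactly $K-1$ terms (the classes other than $y$). The only sanity check worth flagging is that the constant is $\log(K-1)$ rather than $\log K$, which follows precisely because the sums and max in both $\Llm$ and $\Lslm$ range over $j \neq y$, giving $K-1$ terms. No smoothness or differentiability hypotheses are needed, so the proof is a few lines long and does not require any of the earlier theorems in the paper.
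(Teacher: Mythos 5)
Your proof is correct and is essentially identical to the paper's: both establish the standard LogSumExp sandwich $\max_{j\neq y} f_\theta^j(x) \le \log\sum_{j\neq y}\exp(f_\theta^j(x)) \le \log(K-1) + \max_{j\neq y} f_\theta^j(x)$ over the $K-1$ classes other than $y$ and then subtract $f_\theta^y(x)$. No differences worth noting.
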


\begin{remark}
By using the soft logit margin loss, MMA maximizes a lower bound of the margin, the SLM-margin, $d^{SLM}_\theta(x, y)$:
\begin{equation*}
\dslm(x, y) = \|\delta^*\| = \min \|\delta\| \quad \text{ s.t. } \delta: \Lslm(x + \delta, y) \ge 0.
\end{equation*}
To see that,  note by \Cref{prop:lslmllm},  $\Lslm(x, y)$ upper bounds $\Llm(x, y)$. So we have $\{\delta: \Llm(x+\delta, y) \leq 0\} \subseteq \{\delta: \Lslm(x+\delta, y) \leq 0\}$. 
Therefore, $\dslm(x, y) \leq \margin(x, y)$, i.e. the SLM-margin is a lower bound of the margin.
\end{remark}

We next show the gradient of the SLM loss is proportional to the gradient of the cross entropy loss, $\Lce(x, y)=\log\sum_j \exp(f_\theta^j(x)) - f_\theta^y(x)$, thus minimizing $\Lce(x+\delta^*, y)$ w.r.t. $\theta$ ``is'' minimizing $\Lslm(x+\delta^*, y)$.
\begin{proposition}
\label{thm:slm_and_ce}
For a fixed $(x,y)$ and $\theta$,
\begin{align}
\nabla_\theta  \Lce(x, y) = r(\theta, x, y) \nabla_\theta \Lslm(x, y), \text{~and~} \nabla_x  \Lce(x, y) = r(\theta, x, y) \nabla_x \Lslm(x, y)\\
\text{~where the scalar~~} r(\theta, x, y) = \frac{\sum_{i\neq y} \exp(f_\theta^i(x))}{\sum_i \exp(f_\theta^i(x))}.
\end{align} 
\end{proposition}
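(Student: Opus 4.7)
The plan is to reduce both gradients to gradients through the logits $z_j := f_\theta^j(x)$ by the chain rule, and then check that at the logit level the two losses already satisfy the claimed scalar proportionality. Since $\Lce$ and $\Lslm$ depend on $\theta$ and $x$ only through the logits $z_1,\dots,z_K$, we will have
\[
\nabla_\theta \Lce = \sum_j \frac{\partial \Lce}{\partial z_j}\,\nabla_\theta z_j, \qquad \nabla_\theta \Lslm = \sum_j \frac{\partial \Lslm}{\partial z_j}\,\nabla_\theta z_j,
\]
and analogous identities for $\nabla_x$. Thus it suffices to establish that there is a single scalar $r(\theta,x,y)$, independent of $j$, such that $\partial \Lce/\partial z_j = r \cdot \partial \Lslm/\partial z_j$ for every $j$.

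First I would compute the partials directly. For the cross-entropy loss, a standard softmax calculation gives $\partial \Lce/\partial z_j = p_j - \mathbb{1}[j=y]$, where $p_j = e^{z_j}/\sum_k e^{z_k}$. For the soft logit margin loss, $\partial \Lslm/\partial z_j = e^{z_j}\mathbb{1}[j\neq y]/\sum_{k\neq y} e^{z_k}$ when $j\neq y$, and $\partial \Lslm/\partial z_y = -1$. Next I would check the proportionality case by case. For $j=y$, I need $p_y - 1 = -r$, which forces $r = 1 - p_y = \sum_{k\neq y} e^{z_k}/\sum_k e^{z_k}$, exactly the scalar in the statement. For $j\neq y$, I need $p_j = r\cdot e^{z_j}/\sum_{k\neq y} e^{z_k}$, i.e.\ $e^{z_j}/\sum_k e^{z_k} = r\cdot e^{z_j}/\sum_{k\neq y} e^{z_k}$, which again gives the same $r$. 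So the same scalar works uniformly across all $j$.

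Finally I would plug this logit-level identity back into the chain-rule expansions for both $\nabla_\theta$ and $\nabla_x$ and factor out $r(\theta,x,y)$, yielding $\nabla_\theta \Lce = r(\theta,x,y)\,\nabla_\theta \Lslm$ and $\nabla_x \Lce = r(\theta,x,y)\,\nabla_x \Lslm$. There is no real obstacle here — the argument is essentially a one-line softmax computation plus the chain rule; the only thing to be careful about is handling the $j=y$ term correctly, since $\Lslm$ is linear in $z_y$ while $\Lce$ has the full softmax contribution at $y$, and one must verify that both cases produce the same $r$.
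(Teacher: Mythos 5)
Your proof is correct: the logit-level partials are computed accurately, the single scalar $r(\theta,x,y)=\sum_{k\neq y}e^{z_k}/\sum_k e^{z_k}$ works uniformly for both the $j=y$ and $j\neq y$ cases, and the chain-rule reduction through the logits immediately yields both the $\nabla_\theta$ and $\nabla_x$ identities. The paper does not actually include a proof of this proposition in its appendix, and your argument is exactly the standard computation the authors leave implicit.
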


Therefore, to simplify the learning algorithm, we perform gradient descent on model parameters using $\Lce(x+\delta^*, y)$. As such, we use $\Lce$ on both clean and adversarial examples, which in practice stabilizes training:
\begin{equation}
\min_\theta  \Lmma(\gS), ~\text{where}~ \Lmma(\gS) = \sum_{i 
	\in \gS_\theta^+ \cap \gH_\theta } \Lce(x_i + \delta^*, y_i)  + \sum_{j \in \gS_\theta^-} \Lce(x_j, y_j),
\label{eq:mmat_slm_CE}
\end{equation}

where $\delta^* = \argmin_{\Lslm(x + \delta, y) \ge 0} \|\delta\|$ is found with the SLM loss, and $\gH_\theta = \{i: d_\theta(x_i, y_i) < d_{\max}\}$ is the set of examples that have margins smaller than the hinge threshold.

\if0

The above properties anticipate that when $\eps$ is too large, adversarial training might not necessarily increase the margin. For adversarial training with a large $\eps$, starting with smaller $\eps$ then gradually increasing it could help, since the lower bound of margin is maximized at the beginning of training. Our experiments in \Cref{subsec:exp_maxmargin,subsec:pgdls} corroborate exactly with this theoretical prediction.

\fi

\vspace{\presubsecvspace}
\subsection{Finding the optimal perturbation $\delta^*$}
\vspace{\postsubsecvspace}

To implement MMA, we still need to find the $\delta^*$, which is intractable in general settings.
We propose an adaptation of the projected gradient descent (PGD) \citep{madry2017towards} attack to give an approximate solution of $\delta^*$, the Adaptive Norm Projective Gradient Descent Attack (AN-PGD). In AN-PGD, we apply PGD on an initial perturbation magnitude $\epsinit$ to find a norm-constrained perturbation $\delta_1$, 
then we search along the direction of $\delta_1$ to find a scaled perturbation that gives $L=0$, we then use this scaled perturbation to approximate $\eps^*$.
Note that AN-PGD here only serves as an algorithm to give an approximate solution of $\delta^*$, and it can be decoupled from the remaining parts of MMA training. Other attacks that can serve a similar purpose can also fit into our MMA training framework, e.g. the Decoupled Direction and Norm (DDN) attack \citep{rony2018decoupling}. 
\Cref{alg:anpgd} describes the Adaptive Norm PGD Attack (AN-PGD) algorithm. 

\begin{remark}
	Finding the $\delta^*$ in \Cref{prop:dtexistence,thm:theoretical_grad_margin} requires solving a non-convex optimization problem, where the optimality cannot be guaranteed in practice. Previous adversarial training methods, e.g. \citet{madry2017towards}, suffer the same problem. Nevertheless, as we show later in \Cref{fig:margin_maximization}, our proposed MMA training algorithm does achieve the desired behavior of maximizing the margin of each individual example in practice.
\end{remark}

\begin{algorithm}[t]
\small
    \caption{Adaptive Norm PGD Attack for approximately solving $\delta^*$.\\
        \textbf{Inputs}: 
        $(x, y)$ is the data example.
        $\epsinit$ is the initial norm constraint used in the first PGD attack.
        \textbf{Outputs}: $\delta^*$, approximate shortest successful perturbation.
        \textbf{Parameters}: 
        $\maxeps$ is the maximum perturbation length.
        $\PGD(x, y, \eps)$ represents PGD perturbation $\delta$ with magnitude $\eps$.
    }
    \label{alg:anpgd}
    \begin{algorithmic}[1]
        \STATE Adversarial example $\delta_1 = \mathrm{PGD}(x, y, \epsinit)$
        \STATE Unit perturbation $\delta_{u} = \frac{\delta_1}{\|\delta_1\|}$
        \IF {prediction on $x+\delta_1$ is correct}
        \STATE Bisection search to find $\eps' \text{,~the~zero-crossing~of~} L(x +  \eta \delta_{u}, y) \text{~w.r.t.~} \eta, \eta \in [\|\delta_1\|, \maxeps]$
        \ELSE
        \STATE Bisection search to find $\eps' \text{,~the~zero-crossing~of~} L(x +  \eta \delta_{u}, y) \text{~w.r.t.~} \eta, \eta \in [0, \|\delta_1\|)$
        \ENDIF
        \STATE $\delta^* = \eps' \delta_{u}$
    \end{algorithmic}
 \end{algorithm}

\vspace{\presubsecvspace}
\subsection{Additional Clean Loss during Training}
\vspace{\presubsecvspace}

\label{subsec:mmat_clean_loss}

In practice, we observe that when the model is only trained with the objective function in \Cref{eq:mmat_slm_CE}, the input space loss landscape is very flat,
which makes PGD less efficient in finding $\delta^*$ for training, as shown in \Cref{fig:loss_lanscape}. 
Here we choose 50 examples from both the training and test sets respectively, then perform the PGD attack with $\eps=8/255$ and keep those failed perturbations. 
For each example, we linearly interpolate 9 more points between the original example and the perturbed example, and plot their logit margin losses.
In each sub-figure, the horizontal axis is the relative position of the interpolated example: e.g. 0.0 represents the original example, 1.0 represents the perturbed example with $\eps=8/255$, 0.5 represents the average of them. The vertical axis is the logit margin loss. Recall that when $\Llm(x+\delta, y) < 0$, the perturbation $\delta$ fails.

OMMA-32 in \Cref{subfig:OMMA32} represents model trained with only $\Lmma$ in \Cref{eq:mmat_slm_CE} with $\dmax=8$. PGD-8 \Cref{subfig:PGD8} represents model trained with PGD training \citep{madry2017towards} with $\eps=8$.
As one can see, OMMA-32 has ``flatter'' loss curves compared to PGD-8. This could potentially weaken the adversary during training, which leads to poor approximation of $\delta^*$ and hampers training.

To alleviate this issue, we add an additional clean loss term to the MMA objective in \Cref{eq:mmat_slm_CE}
to lower the loss on clean examples, such that the input space loss landscape is steeper. Specifically, we use the following combined loss 

\begin{equation}
\Lcb(\gS) = \frac{1}{3} \sum_{j \in \gS} \Lce(x_j, y_j) + \frac{2}{3}\Lmma(\gS) .
\label{eq:mmat_combined_loss}
\end{equation} 
The model trained with this combined loss and $\dmax=32$ is the MMA-32 shown in \Cref{subfig:MMA32}. Adding the clean loss is indeed effective. Most of the loss curves are more tilted, and the losses of perturbed examples are lower. We use $\Lcb$ for MMA training in the rest of the paper due to its higher performance. A more detailed comparison between $\Lcb$ and $\Lmma$ is delayed to \Cref{sec:observation_analysis}.

\begin{figure}[tb]
    \centering
    \begin{subfigure}[b]{0.32\linewidth}
        \centering
        \includegraphics[width=\linewidth]{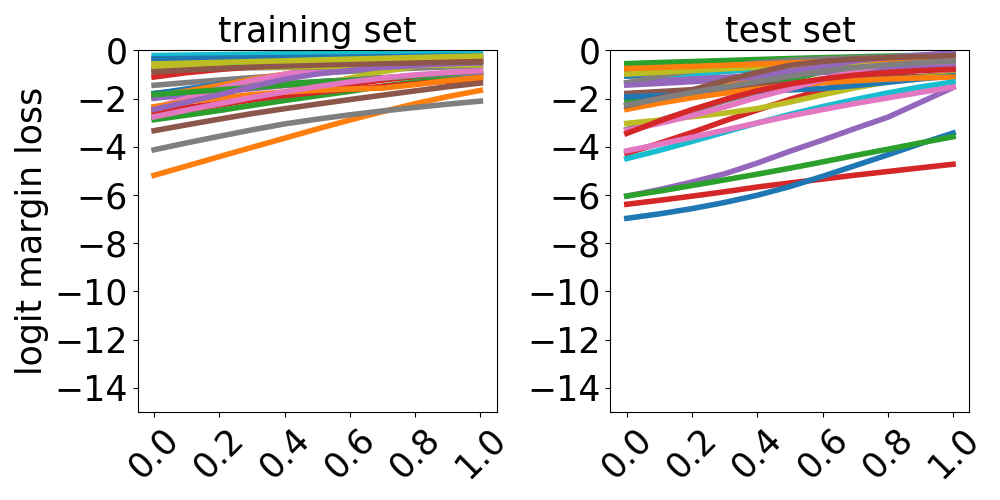}
        \caption{OMMA-32}
        \label{subfig:OMMA32}
    \end{subfigure}
    \begin{subfigure}[b]{0.32\linewidth}
        \centering
        \includegraphics[width=\linewidth]{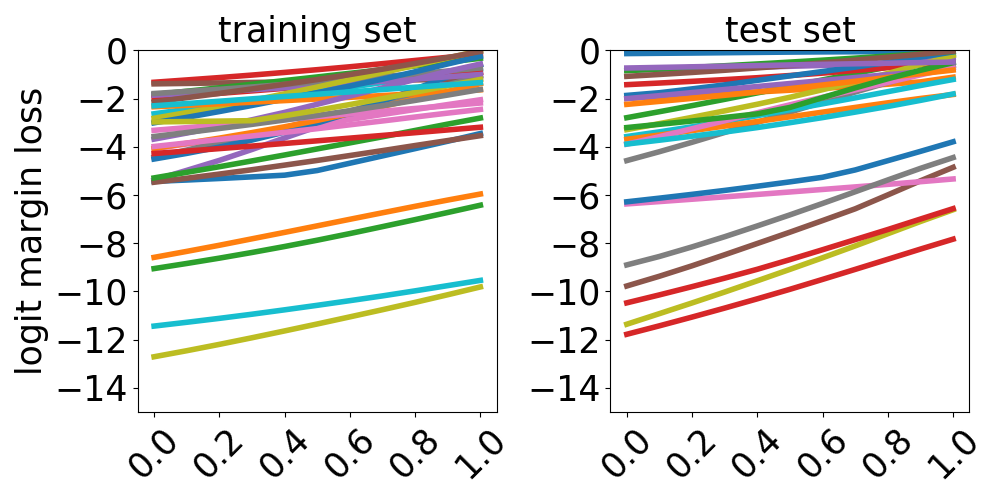}
        \caption{PGD-8}
        \label{subfig:PGD8}
    \end{subfigure}
    \begin{subfigure}[b]{0.32\linewidth}
        \centering
        \includegraphics[width=\linewidth]{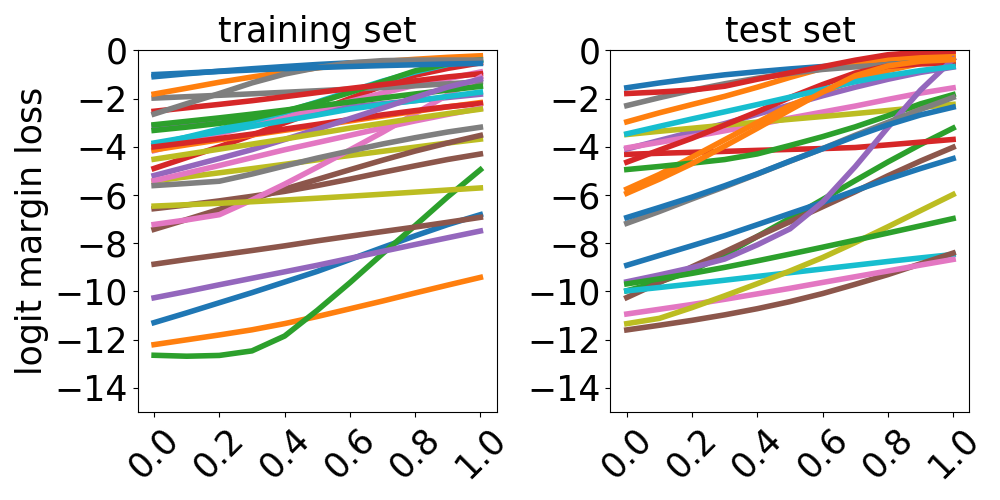}
        \caption{MMA-32}
        \label{subfig:MMA32}
    \end{subfigure}
    \caption{Visualization of loss landscape in the input space for MMA and PGD trained models.
    }
    \label{fig:loss_lanscape}
\end{figure}

\vspace{\presubsecvspace}
\subsection{The MMA Training Algorithm}
\vspace{\postsubsecvspace}

\begin{algorithm}[t]
\small
    \caption{Max-Margin Adversarial Training.\\
        \textbf{Inputs}: The training set $\{(x_i, y_i)\}$.
        \textbf{Outputs}: the trained model $\vf_\theta(\cdot)$.
        \textbf{Parameters}:
        $\veps$ contains perturbation lengths of training data.
        $\mineps$ is the minimum perturbation length.
        $\maxeps$ is the maximum perturbation length.
        $\gA(x, y, \epsinit)$ represents the approximate shortest successful perturbation returned by an algorithm $\gA$ (e.g. AN-PGD) on the data example $(x, y)$ and at the initial norm $\epsinit$.
    }
    \label{alg:mmat}
    
    \begin{algorithmic}[1]
        \STATE Randomly initialize the parameter $\theta$ of model $\vf$, and initialize every element of $\veps$ as $\mineps$
        \REPEAT
        \STATE Read minibatch $B = \{ (x_1, y_1), \ldots, (x_m, y_m) \}$
        \STATE Make predictions on $B$ and into two: wrongly predicted $B_0$ and correctly predicted $B_1$
        \STATE Initialize an empty batch $B_1^{\text{adv}}$
        \FOR {$(x_i, y_i)$ in $B_1$}
        \STATE Retrieve perturbation length $\eps_i$ from $\veps$
        \STATE $\delta^*_i = \gA(x_i, y_i, \eps_i)$
        \STATE Update the $\eps_i$ in $\veps$ as $\|\delta^*_i\|$. If $\|\delta^*_i\| < \dmax$ then put $(x_i + \delta^*_i, y_i)$ into $B_1^{\text{adv}}$
        \ENDFOR
        \STATE Calculate gradients of $\sum_{j \in B^0} \Lce(x_j, y_j) + \frac{1}{3}\sum_{j \in B^1} \Lce(x_j, y_j) + \frac{2}{3}\sum_{j \in B_1^{\text{adv}} } \Lce(x_j, y_j)$, the combined loss, on $B_0$, $B_1$, and $B_1^{\text{adv}}$, w.r.t. $\theta$, according to \Cref{eq:mmat_slm_CE,eq:mmat_combined_loss}
        \STATE Perform one step gradient step update on $\theta$
        \UNTIL meet training stopping criterion
    \end{algorithmic}
 \end{algorithm}
\Cref{alg:mmat} summarizes our practical MMA training algorithm. During training for each minibatch, we 1) separate it into 2 batches based on if the current prediction matches the label; 2) find $\delta^*$ for each example in the ``correct batch''; 3) calculate the gradient of $\theta$ based on \Cref{eq:mmat_slm_CE,eq:mmat_combined_loss}.

\vspace{\presecvspace}
\section{Understanding Adversarial Training w.r.t. Margin}
\label{sec:reltdAdvTrain}
\vspace{\postsecvspace}

\newcommand{\thetamma}{{\theta_{\text{MMA}}}}
\newcommand{\thetaadv}{{\theta_{\text{adv}}}}
Through our development of MMA training in the last section, we have shown that margin maximization is closely related to adversarial training with the optimal perturbation length $\|\delta^*\|$. 
In this section, we further investigate the behavior of adversarial training in the perspective of margin maximization. 
Adversarial training \citep{huang2015learning,madry2017towards}  minimizes the ``worst-case'' loss under a fixed perturbation magnitude $\epsilon$, as follows.
\begin{equation}
\min_\theta \E_{x, y\sim \gD} \max_{\|\delta\| \leq \eps} L_\theta(x+\delta, y).
\label{eq:adv_train}
\end{equation}

Looking again at \Cref{fig:1d_maxmargin}, we can see that an adversarial training update step does not necessarily increase the margin.
In particular, as we perform an update to reduce the value of loss at the fixed perturbation $\eps$, the parameter is updated from  $\theta_0$ to $\theta_1$.
After this update, we can imagine two different scenarios for the updated loss functions $L_{\theta_1}(\cdot)$ (the solid curve) in \Cref{fig:1d_maxmargin} (c) and (d). In both (c) and (d), $L_{\theta_1}(\eps)$ is decreased by the same amount. However, the margin is increased in (c) with $\delta^*_1 > \delta^*_0$, but decreased in (d) with $\delta^*_1 < \delta^*_0$. Formalizing the intuitive analysis, we present two theorems connecting adversarial training and margin maximization. For brevity, fixing $(x, y)$, let $L(\delta, \theta) = \Llm(x + \delta, y)$, $d_{\theta} = d_{\theta}(x, y)$, and $\eps_{\theta}^*(\rho) = \min_{\delta: L(\delta, \theta) \ge \rho} \|\delta\|$.

\begin{theorem}
\label{thm:advtrainLM}
Assuming an update from adversarial training changes $\theta_0$ to $\theta_1$, such that $\rho^* = \max_{\|\delta\| \leq \eps} L(\delta, \theta_0) > \max_{\|\delta\| \leq \eps} L(\delta, \theta_1)$, then
\begin{enumerate}[topsep=-1ex,itemsep=-1ex,partopsep=0ex,parsep=1ex]
\item[1)] if $\eps = d_{\theta_0}$, then $\rho^* = 0$, $\eps^*_{\theta_1}(\rho^*) = d_{\theta_1} \geq d_{\theta_0} = \eps^*_{\theta_0}(\rho^*)$;
\item[2)] if $\eps < d_{\theta_0}$, then $\rho^* \leq 0$, $\eps^*_{\theta_0}(\rho^*) \leq d_{\theta_0}$, $\eps^*_{\theta_1}(\rho^*) \leq d_{\theta_1}$, and $\eps^*_{\theta_1}(\rho^*) \geq \eps^*_{\theta_0}(\rho^*)$; 
\item[3)] if $\eps > d_{\theta_0}$, then $\rho^* \geq 0$, $\eps^*_{\theta_0}(\rho^*) \geq d_{\theta_0}$, $\eps^*_{\theta_1}(\rho^*) \geq d_{\theta_1}$, and $\eps^*_{\theta_1}(\rho^*) \geq \eps^*_{\theta_0}(\rho^*)$.
\end{enumerate}
\end{theorem}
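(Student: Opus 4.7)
The plan is to reduce the entire theorem to two elementary facts: (i) the function $\rho \mapsto \eps^*_\theta(\rho)$ is non-decreasing (since the feasible set $\{\delta : L(\delta, \theta) \ge \rho\}$ shrinks as $\rho$ grows), and (ii) $\eps^*_\theta(0) = d_\theta$ by the definition of margin given in \Cref{eq:margin_def_by_lm}. Everything else will follow by chasing signs.

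First I would handle the common part that appears in all three cases, namely the inequality $\eps^*_{\theta_1}(\rho^*) \ge \eps^*_{\theta_0}(\rho^*)$. Let $\delta^\dagger$ be a maximizer for $\max_{\|\delta\| \le \eps} L(\delta, \theta_0) = \rho^*$, so $\|\delta^\dagger\| \le \eps$ and $L(\delta^\dagger, \theta_0) = \rho^*$. This immediately yields $\eps^*_{\theta_0}(\rho^*) \le \eps$. Under the adversarial training assumption that $\max_{\|\delta\|\le\eps} L(\delta, \theta_1) < \rho^*$, any $\delta$ with $L(\delta, \theta_1) \ge \rho^*$ must satisfy $\|\delta\| > \eps$, so $\eps^*_{\theta_1}(\rho^*) \ge \eps$, and chaining gives the desired inequality. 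This single argument covers the last claim in each of the three cases.

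Next, I would determine the sign of $\rho^*$. In Case 1 ($\eps = d_{\theta_0}$), by definition of the margin $L(\delta,\theta_0) < 0$ for $\|\delta\| < d_{\theta_0}$ and the boundary value $L = 0$ is attained (or approached) at $\|\delta\| = d_{\theta_0}$, so $\rho^* = 0$. In Case 2 ($\eps < d_{\theta_0}$), the same reasoning gives $L(\delta,\theta_0) < 0$ throughout the closed $\eps$-ball, hence $\rho^* \le 0$. In Case 3 ($\eps > d_{\theta_0}$), the shortest successful perturbation sits strictly inside the $\eps$-ball and witnesses $L \ge 0$, so $\rho^* \ge 0$. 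With these signs in hand, the remaining claims become direct consequences of monotonicity: when $\rho^* \le 0$ we get $\eps^*_{\theta}(\rho^*) \le \eps^*_\theta(0) = d_\theta$, and when $\rho^* \ge 0$ we get $\eps^*_{\theta}(\rho^*) \ge d_\theta$; applied at both $\theta_0$ and $\theta_1$ this yields the stated inequalities. In Case 1, since $\rho^* = 0$ exactly, the inequalities collapse into equalities $\eps^*_{\theta_i}(\rho^*) = d_{\theta_i}$.

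The main obstacle, and really the only subtle point, is justifying attainment of $\rho^*$ and the boundary value in Case 1. The argument implicitly uses continuity of $L(\cdot,\theta)$ in $\delta$ so that the supremum over the closed ball is achieved and so that the margin loss level set coincides with the decision boundary. For piecewise-smooth networks this is immediate; more generally, I would replace maxima by suprema and $\eps^*_\theta(\rho)$ by $\inf \{\|\delta\| : L(\delta,\theta) \ge \rho\}$, noting that the monotonicity argument and the witness-inequality both go through without change. Beyond this continuity caveat, no computation is required — the entire theorem is bookkeeping around the monotone function $\rho \mapsto \eps^*_\theta(\rho)$.
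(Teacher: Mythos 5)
Your proof is correct and follows essentially the same route as the paper's: establish $\eps^*_{\theta_1}(\rho^*)\ge\eps^*_{\theta_0}(\rho^*)$ as a common step, determine the sign of $\rho^*$ in each case, and finish via monotonicity of $\rho\mapsto\eps^*_\theta(\rho)$ together with $\eps^*_\theta(0)=d_\theta$. The only cosmetic differences are that you chain the common inequality directly through $\eps$ (giving $\eps^*_{\theta_1}(\rho^*)\ge\eps\ge\eps^*_{\theta_0}(\rho^*)$) where the paper argues by contradiction via its Lemma~A.1, and you flag the continuity/attainment issue inline rather than delegating it to that lemma.
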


\begin{remark}
In other words, adversarial training, with the logit margin loss and a fixed $\eps$ 
\begin{enumerate}[topsep=-1ex,itemsep=-1ex,partopsep=0ex,parsep=1ex]
\item[1)] exactly maximizes the margin, if $\eps$ is equal to the margin;
\item[2)] maximizes a lower bound of the margin, if $\eps$ is less than the margin;
\item[3)] maximizes an upper bound of the margin, if $\eps$ is greater than the margin.
\end{enumerate}
\end{remark}

Next we look at adversarial training with the cross entropy loss through the connection between cross entropy and the soft logit margin loss from \Cref{thm:slm_and_ce}. We first look at adversarial training on the SLM loss. Fixing $(x, y)$, let $d^{\text{SLM}}_{\theta} = d^{\text{SLM}}_{\theta}(x, y)$, and $\eps_{\text{SLM},\theta}^*(\rho) = \min_{L^{\text{SLM}}_{\theta}(x+\delta, y) \geq \rho}\|\delta\|$.

\begin{corollary}
\label{thm:advtrainSLM}
Assuming an update from adversarial training changes $\theta_0$ to $\theta_1$, such that $\max_{\|\delta\| \leq \eps} L^{\text{SLM}}_{\theta_0}(x+\delta, y) > \max_{\|\delta\| \leq \eps} L^{\text{SLM}}_{\theta_1}(x+\delta, y)$, if $\eps \leq d^\text{SLM}_{\theta_0}$, then $\rho^* = \max_{\|\delta\| \leq \eps} L^{\text{SLM}}_{\theta_0}(x+\delta, y) \leq 0$, $\eps_{\text{SLM},\theta_0}^*(\rho^*) \leq d_{\theta_0}$, $\eps_{\text{SLM},\theta_1}^*(\rho^*) \leq d_{\theta_1}$, and $\eps_{\text{SLM},\theta_1}^*(\rho^*) \geq \eps_{\text{SLM},\theta_0}^*(\rho^*)$.
\end{corollary}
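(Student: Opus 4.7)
The plan is to treat this corollary as Theorem 3.1 specialized to $L^{\text{SLM}}_\theta$ instead of $L^{\text{LM}}_\theta$, and then use Proposition 2.2 to transfer the resulting SLM-margin bounds into the margin bounds actually stated. The key observation is that $d^{\text{SLM}}_\theta$ is defined from $L^{\text{SLM}}_\theta$ in exactly the same way that $d_\theta$ is defined from $L^{\text{LM}}_\theta$: both are minimum-norm perturbations pushing the corresponding loss up to $0$. So the case analysis behind cases (1) and (2) of Theorem 3.1, which only uses this level-set structure plus the monotonicity implied by the adversarial update, carries over verbatim after the substitution. One cheap inclusion via Proposition 2.2 then closes the remaining gap.

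First I would establish the three SLM-side facts. Since $\eps \le d^{\text{SLM}}_{\theta_0}$, no $\delta$ with $\|\delta\| \le \eps$ can have $L^{\text{SLM}}_{\theta_0}(x+\delta,y) > 0$ (otherwise continuity of $L^{\text{SLM}}_{\theta_0}$ along the segment from $0$ to $\delta$ would produce a smaller-norm perturbation attaining level $0$, contradicting the definition of $d^{\text{SLM}}_{\theta_0}$), so $\rho^* \le 0$. Because $\rho^* \le 0$, the set $\{\delta: L^{\text{SLM}}_{\theta_i}(x+\delta,y) \ge \rho^*\}$ contains $\{\delta: L^{\text{SLM}}_{\theta_i}(x+\delta,y) \ge 0\}$, so minimizing $\|\delta\|$ over the two sets yields $\eps^*_{\text{SLM},\theta_i}(\rho^*) \le d^{\text{SLM}}_{\theta_i}$ for $i=0,1$. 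For the monotonicity in $\theta$, the definition of $\rho^*$ as a maximum attained inside the $\eps$-ball at $\theta_0$ gives $\eps^*_{\text{SLM},\theta_0}(\rho^*) \le \eps$, while the adversarial-update hypothesis $\max_{\|\delta\|\le\eps} L^{\text{SLM}}_{\theta_1}(x+\delta,y) < \rho^*$ means no perturbation of norm $\le \eps$ reaches level $\rho^*$ at $\theta_1$, forcing $\eps^*_{\text{SLM},\theta_1}(\rho^*) > \eps$. Chaining yields $\eps^*_{\text{SLM},\theta_1}(\rho^*) \ge \eps^*_{\text{SLM},\theta_0}(\rho^*)$.

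The final step is to strengthen the middle bounds from $d^{\text{SLM}}_{\theta_i}$ to $d_{\theta_i}$. This is precisely the remark after Proposition 2.2: since $L^{\text{LM}}_\theta(x+\delta,y) \le L^{\text{SLM}}_\theta(x+\delta,y)$ pointwise, the inclusion $\{\delta: L^{\text{LM}}_\theta \ge 0\} \subseteq \{\delta: L^{\text{SLM}}_\theta \ge 0\}$ holds, and taking min-norm elements gives $d^{\text{SLM}}_\theta \le d_\theta$. Composing $\eps^*_{\text{SLM},\theta_i}(\rho^*) \le d^{\text{SLM}}_{\theta_i} \le d_{\theta_i}$ closes the proof. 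I do not anticipate any real obstacle; the whole result is a reduction to the SLM case of Theorem 3.1 plus a one-line consequence of Proposition 2.2. The only fussy step is the continuity argument used to obtain $\rho^* \le 0$, but this is immediate for the smooth surrogate $L^{\text{SLM}}$.
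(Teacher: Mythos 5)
Your proposal is correct and matches the route the paper intends: the corollary is Theorem~\ref{thm:advtrainLM} (cases 1--2) re-run with $\Lslm$ in place of $\Llm$, followed by the bound $\dslm(x,y)\le d_\theta(x,y)$ from the remark after Proposition~\ref{prop:lslmllm} to convert the SLM-margin bounds into margin bounds. Your direct chaining $\eps^*_{\text{SLM},\theta_1}(\rho^*) > \eps \ge \eps^*_{\text{SLM},\theta_0}(\rho^*)$ and the intermediate-value argument for $\rho^*\le 0$ are just inlined versions of the paper's Lemma~\ref{prop:advmmat} and its proof-by-contradiction, so the substance is the same.
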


\begin{remark}
\label{remark:advtrain_slm}
In other words, if $\eps$ is less than or equal to the SLM-margin, adversarial training, with the SLM loss and a fixed perturbation length $\eps$, maximizes a lower bound of the SLM-margin, thus a lower bound of the margin.
\end{remark}

Recall \Cref{thm:slm_and_ce} shows that $\Lce$ and $\Lslm$ have the same gradient direction w.r.t. both the model parameter and the input. In adversarial training \citep{madry2017towards}, the PGD attack only uses the gradient direction w.r.t. the input, but not the gradient magnitude. Therefore, in the inner maximization loop, using the SLM and CE loss will result in the same approximate $\delta^*$. Furthermore, $\nabla_\theta  \Lce(x+\delta^*, y)$ and $\nabla_\theta  \Lslm(x+\delta^*, y)$ have the same direction. If the step size is chosen appropriately, then a gradient update that reduces $\Lce(x+\delta^*, y)$ will also reduce $\Lslm(x+\delta^*, y)$. Combined with \Cref{remark:advtrain_slm}, these suggest: 

\emph{Adversarial training with cross entropy loss \citep{madry2017towards} approximately maximizes a lower bound of the margin,  if $\eps$ is smaller than or equal to the SLM-margin.}

\vspace{2px}
On the other hand, when $\eps$ is larger then the margin, such a relation no longer exists. When $\eps$ is too large, adversarial training is maximizing an upper bound of the margin, which might not necessarily increase the margin. This suggests that for adversarial training with a large $\eps$, starting with a smaller $\eps$ then gradually increasing it could help, since the lower bound of the margin is maximized at the start of training. Results in \Cref{subsec:exp_maxmargin,subsec:pgdls} corroborate exactly with this theoretical prediction.
\vspace{\presecvspace}
\section{Experiments}
\label{sec:experiments}
\vspace{\postsecvspace}
We empirically examine several hypotheses and compare MMA training with different adversarial training algorithms on the MNIST and CIFAR10 datasets under $\ell_\infty$/$\ell_2$-norm constrained perturbations. 
Due to space limitations, we mainly present results on CIFAR10-$\linf$ for representative models in \Cref{tab:cifar10_linf_main_body}. 
Full results are in Table \ref{tab:mnist_cw_l2} to \ref{tab:cifar10_l2_gap} in \Cref{sec:full_tables}. Implementation details are also left to the appendix, including neural network architecture, training and attack hyperparameters.

Our results confirm our theory and show that MMA training is stable to its hyperparameter $d_{\max}$, and balances better among various attack lengths compared to adversarial training with fixed perturbation magnitude.
This suggests that MMA training is a better choice for defense when the perturbation length is unknown, which is often the case in practice.

{\bf Measuring Adversarial Robustness: } We use the robust accuracy under multiple projected gradient descent (PGD) attacks \citep{madry2017towards} as the robustness measure. Specifically, given an example, each model is attacked by both repeated randomly initialized whitebox PGD attacks and numerous transfer attacks, generated from whitebox PGD attacking other models. If any one of these attacks succeed, then the model is considered ``not robust under attack'' on this example.
For each dataset-norm setting and for each example, under a particular magnitude $\eps$, we first perform $N$ randomly initialized whitebox PGD attacks on each individual model, then use $N \cdot (m-1)$ PGD attacks from all the other models to perform transfer attacks, where $m$ is the total number of models considered under each setting (e.g. CIFAR10-$\linf$).
In our experiments, we use $N=10$ for models trained on CIFAR10, thus the total 
number of the ``combined'' (whitebox and transfer) set of attacks is 320 for CIFAR10-$\linf$ ($m = 32$). \footnote{$N=50$ for models trained on MNIST. Total number of attacks is 900 for MNIST-$\linf$ ($m=18$), 1200 for MNIST-$\l2$ ($m=24$) and 260 for CIFAR10-$\l2$ ($m=26$). We explain relevant details in \Cref{sec:details_advtrain_settings}.}
We use \emph{ClnAcc} for clean accuracy, \emph{AvgAcc} for the average over both clean accuracy and robust accuracies at different $\eps$'s, \emph{AvgRobAcc} for the average over only robust accuracies under attack.

\textbf{Correspondence between $\dmax$ and $\eps$:} In MMA training, we maximize each example's margin until the margin reaches $\dmax$. In standard adversarial training, each example is trained to be robust at $\epsilon$. 
Therefore in MMA training, we usually set $\dmax$ to be larger than $\epsilon$ in standard adversarial training.
It is difficult to know the ``correct'' value of $\dmax$, therefore we test different $\dmax$ values, and compare a group of MMA trained models to a group of PGD trained models.

\vspace{\presubsecvspace}
\subsection{Effectiveness of Margin Maximization during Training}
\label{subsec:exp_maxmargin}
\vspace{\postsubsecvspace}

\begin{figure}[tb]
\centering
\includegraphics[height=\linewidth,angle=90]{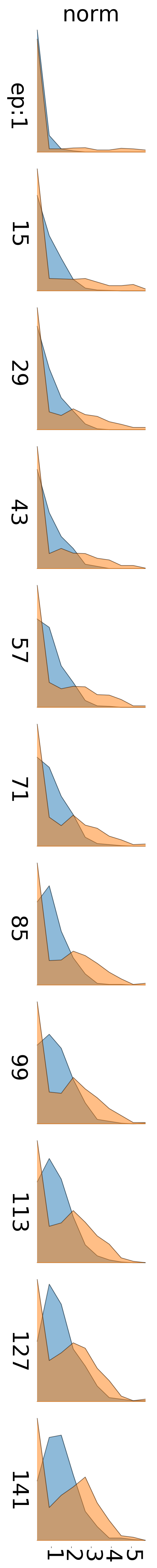}
\caption{Margin distributions during training, under the CIFAR10-$\l2$ case.
Each \textcolor{NavyBlue}{blue} histogram represents the margin value distribution of MMA-3.0, and the \textcolor{YellowOrange}{orange} represents PGD-2.5.
}
\label{fig:margin_maximization}
\vspace{\figvspace}
\end{figure}

As discussed in \Cref{sec:reltdAdvTrain}, 
 MMA training enlarges margins of all training points,while PGD training, by minimizing the adversarial loss with a fixed $\epsilon$, might fail to enlarge margins for points with initial margins smaller than $\epsilon$. This is because when $d_\theta(x, y)<\epsilon$, PGD training is maximizing an upper bound of $d_\theta(x, y)$, which may not necessarily increase the margin. 
To verify this, we track how the margin distribution changes during training processes in two models under the CIFAR10-$\l2$
\footnote{Here we choose CIFAR10-$\l2$ because the DDN-$\l2$ attack is both fast and effective for estimating margins.}
case, MMA-3.0 vs PGD-2.5. We use \emph{MMA-$\dmax$} to denote the MMA trained model with the combined loss in \Cref{eq:mmat_combined_loss} and hinge threshold $\dmax$, and \emph{PGD-$\eps$} to represent the PGD trained \citep{madry2017towards} model with fixed perturbation magnitude $\eps$. 

Specifically, we randomly select 500 training points, and measure their margins after each training epoch. We use the norm of the 1000-step DDN attack \citep{rony2018decoupling} to approximate the margin.
The results are shown in \Cref{fig:margin_maximization},
where each subplot is a histogram (rotated by 90$^{\circ}$) of margin values. For the convenience of comparing across epochs, we use the vertical axis to indicate margin value, and the horizontal axis for counts in the histogram. 
The number below each subplot is the corresponding training epoch.
Margins mostly concentrate near 0 for both models at the beginning. 
As training progresses, both enlarge margins on average. 
However, in PGD training, a portion of the margins stay close to 0 throughout the training process, at the same time, also pushing some margins to be even higher than 2.5, %
likely because PGD training continues to maximize lower bounds of this subset of the total margins, as we discussed in \Cref{sec:reltdAdvTrain}, the $\eps$ value that the PGD-2.5 model is trained for.
MMA training, on the other hand, does not ``give up'' on those data points with small margins. 
At the end of training, 37.8\% of the data points for PGD-2.5 have margins smaller than 0.05, while only 20.4\% for MMA.
As such, PGD training enlarges the margins of ``easy data'' which are already robust enough, but ``gives up'' on ``hard data'' with small margins.
Instead, MMA training pushes the margin of every data point, by finding the proper $\eps$.
In general, when the attack magnitude is unknown, MMA training is more capable in achieving a better balance between small and large margins, and thus a better balance among adversarial attacks with various $\epsilon$'s as a whole.

\begin{table}[tb]
\caption{Accuracies of representative models trained on CIFAR10 with $\linf$-norm constrained attacks. Robust accuracies are calculated under combined (whitebox+transfer) PGD attacks. AvgAcc averages over clean and all robust accuracies; AvgRobAcc averages over all robust accuracies.}

\label{tab:cifar10_linf_main_body}
\begin{center}
\begin{scriptsize}
\begin{tabular}{c|c|c|c|cccccccc}
\toprule
\multirow{2}{*}{\shortstack{CIFAR10 \\ Model}} & \multirow{2}{*}{Cln Acc} & \multirow{2}{*}{AvgAcc} & \multirow{2}{*}{AvgRobAcc} & \multicolumn{8}{c}{RobAcc under different $\eps$, combined (whitebox+transfer) attacks}  \\
 & & & & 4 & 8 & 12 & 16 & 20 & 24 & 28 & 32  \\\midrule
PGD-8 & 85.14 & 27.27 & 20.03 & 67.73 & 46.47 & 26.63 & 12.33 & 4.69 & 1.56 & 0.62 & 0.22 \\
PGD-16 & 68.86 & 28.28 & 23.21 & 57.99 & 46.09 & 33.64 & 22.73 & 13.37 & 7.01 & 3.32 & 1.54 \\
PGD-24 & 10.90 & 9.95 & 9.83 & 10.60 & 10.34 & 10.11 & 10.00 & 9.89 & 9.69 & 9.34 & 8.68 \\
\midrule
PGDLS-8 & 85.63 & 27.20 & 19.90 & 67.96 & 46.19 & 26.19 & 12.22 & 4.51 & 1.48 & 0.44 & 0.21 \\
PGDLS-16 & 70.68 & 28.44 & 23.16 & 59.43 & 47.00 & 33.64 & 21.72 & 12.66 & 6.54 & 2.98 & 1.31 \\
PGDLS-24 & 58.36 & 26.53 & 22.55 & 49.05 & 41.13 & 32.10 & 23.76 & 15.70 & 9.66 & 5.86 & 3.11 \\
\midrule
MMA-12 & 88.59 & 26.87 & 19.15 & 67.96 & 43.42 & 24.07 & 11.45 & 4.27 & 1.43 & 0.45 & 0.16 \\
MMA-20 & 86.56 & 28.86 & 21.65 & 66.92 & 46.89 & 29.83 & 16.55 & 8.14 & 3.25 & 1.17 & 0.43 \\
MMA-32 & 84.36 & 29.39 & 22.51 & 64.82 & 47.18 & 31.49 & 18.91 & 10.16 & 4.77 & 1.97 & 0.81 \\
\midrule
PGD-ens & 87.38 & 28.10 & 20.69 & 64.59 & 46.95 & 28.88 & 15.10 & 6.35 & 2.35 & 0.91 & 0.39 \\
PGDLS-ens & 76.73 & 29.52 & 23.62 & 60.52 & 48.21 & 35.06 & 22.14 & 12.28 & 6.17 & 3.14 & 1.43 \\
\midrule
TRADES & 84.92 & 30.46 & 23.65 & 70.96 & 52.92 & 33.04 & 18.23 & 8.34 & 3.57 & 1.4 & 0.69  \\
\bottomrule
\end{tabular}
\end{scriptsize}
\end{center}
\end{table} 

\vspace{\presubsecvspace}
\subsection{Gradually Increasing $\eps$ Helps PGD Training when $\eps$ is Large}
\label{subsec:pgdls}
\vspace{\postsubsecvspace}

Our previous analysis in \Cref{sec:reltdAdvTrain} suggests that when the fixed perturbation magnitude $\eps$ is small, PGD training increases the lower bound of the margin. On the other hand, when $\eps$ is larger than the margin, PGD training does not necessarily increase the margin. This is indeed confirmed by our experiments. PGD training fails at larger $\eps$, in particular $\eps=24/255$ for the CIFAR10-$\linf$ as shown in \Cref{tab:cifar10_linf_main_body}. We can see that PGD-24's accuracies at all test $\eps$'s are around 10\%.

Aiming to improve PGD training, we propose a variant of PGD training, named PGD with Linear Scaling (PGDLS), in which we grow the perturbation magnitude from 0 to the fixed magnitude linearly in 50 epochs.
According to our theory, gradually increasing the perturbation magnitude could avoid picking a $\eps$ that is larger than the margin, thus managing to maximizing the lower bound of the margin rather than its upper bound, which is more sensible. 
It can also be seen as a ``global magnitude scheduling'' shared by all data points, which is to be contrasted to MMA training that gives magnitude scheduling for each individual example.
We use \emph{PGDLS-$\eps$} to represent these models and show their performances also in \Cref{tab:cifar10_linf_main_body}. We can see that PGDLS-24 is trained successfully, whereas PGD-24 fails. At $\eps=8$ or $16$, PGDLS also performs similar or better than PGD training, confirming the benefit of training with small perturbation at the beginning.

\vspace{\presubsecvspace}
\subsection{Comparing MMA Training with PGD Training}
\label{subsec:compare_mma_pgd}
\vspace{\postsubsecvspace}

From the first three columns in \Cref{tab:cifar10_linf_main_body}, we can see that MMA training is very stable with respect to its hyperparameter, the hinge threshold $d_{\max}$. 
When $\dmax$ is set to smaller values (e.g. 12 and 20), MMA models attain good robustness across different attacking magnitudes, with the best clean accuracies in the comparison set. 
When $d_{\max}$ is large, MMA training can still learn a reasonable model that is both accurate and robust. For MMA-32, although $\dmax$ is set to a ``impossible-to-be-robust'' level at 32/255, it still achieves 84.36\% clean accuracy and 47.18\% robust accuracy at $8/255$, thus automatically ``ignoring'' the demand to be robust at larger $\eps$'s, including 20, 24, 28 and 32, recognizing its infeasibility due to the intrinsic difficulty of the problem.
In contrast, PGD trained models are more sensitive to their specified fixed perturbation magnitude.
In terms of the overall performance, we notice that MMA training with a large $\dmax$, e.g. 20 or 32, achieves high AvgAcc values, e.g. 28.86\% or 29.39\%. However, for PGD training to achieve a similar performance, $\epsilon$ needs to be carefully picked (PGD-16 and PGDLS-16), and their clean accuracies suffer a significant drop.

We also compare MMA models with ensembles of PGD trained models.
\emph{PGD-ens/PGDLS-ens} represents the ensemble of PGD/PGDLS
trained models with different $\eps$'s. The ensemble produces a prediction by performing a majority vote on label predictions, and using the softmax scores as the tie breaker. 
MMA training achieves similar performance compared to the 
ensembled PGD models.
PGD-ens maintains a good clean accuracy, but it is still marginally outperformed by MMA-32 w.r.t. robustness at various $\epsilon$'s.
Further note that 1) the ensembling requires significantly higher computation costs both at training and test times;
2) Unlike attacking individual models, attacking ensembles is still relatively unexplored in the literature, thus our whitebox PGD attacks on the ensembles may not be sufficiently effective;
\footnote{Attacks on ensembles are explained in \Cref{sec:attack_details}.}
and 3) as shown in \Cref{sec:full_tables}, for MNIST-$\linf$/$\l2$, MMA trained models significantly outperform the PGD ensemble models.

\textbf{Comparison with TRADES:} TRADES \citep{zhang2019theoretically} improves PGD training by decoupling the minimax training objective into an accuracy term and an robustness regularization term. We compare MMA-32 with TRADES \footnote{Downloaded models from \url{https://github.com/yaodongyu/TRADES}}
in \Cref{tab:cifar10_linf_main_body} as their clean accuracies are similar. We see that TRADES outperforms MMA on the attacks of lengths that are less than $\eps=12/255$, but it sacrifices the robustness under larger attacks. We note that MMA training and TRADES' idea of optimizing a calibration loss are progresses in orthogonal directions and could potentially be combined.

\textbf{Testing on gradient free attacks:} As a sanity check for gradient obfuscation \citep{athalye2018obfuscated}, we also performed the gradient-free SPSA attack \citep{uesato2018adversarial}, to all our $\linf$-MMA trained models on the first 100 test examples. We find that, in all cases, SPSA does not compute adversarial examples successfully when gradient-based PGD did not. 
\vspace{\presecvspace}
\section{Conclusions}
\vspace{\postsecvspace}
In this paper, we proposed to directly maximize the margins to improve adversarial robustness. 
We developed the MMA training algorithm that optimizes the margins via adversarial training with perturbation magnitude adapted both throughout training and individually for the distinct datapoints in the training dataset. Furthermore, we rigorously analyzed the relation between adversarial training and margin maximization. 
Our experiments on CIFAR10 and MNIST empirically confirmed our theory and demonstrate that MMA training outperforms adversarial training in terms of sensitivity to hyperparameter setting and robustness to variable attack lengths, suggesting MMA is a better choice for defense when the adversary is unknown, which is often the case in practice.
\clearpage

\section*{Acknowledgement}
\vspace{\postsecvspace}

We thank Marcus Brubaker, Bo Xin, Zhanxing Zhu, Joey Bose and Hamidreza Saghir for proofreading earlier drafts and useful feedbacks, also Jin Sung Kang for helping with computation resources.

\bibliography{refs}

\ifnovenue
\bibliographystyle{apalike}
\else
\bibliographystyle{iclr2020_conference}
\fi

\clearpage

\appendix
\part*{Appendix}

\section{Proofs}
\label{sec:proofs}

\subsection{Proof of \cref{thm:theoretical_grad_margin}}
\begin{proof}
Recall $\eps(\delta) = \|\delta\|$. Here we compute the gradient for $d_\theta(x, y)$ in its general form. Consider the following optimization problem:
\[
d_\theta(x, y) = \min_{\delta \in \Delta(\theta)} \eps(\delta),
\]
where $\Delta(\theta) = \{ \delta: L_\theta(x + \delta, y) = 0 \}$, $\eps$ and $L(\delta, \theta)$ are both $C^2$ functions \footnote{Note that a simple application of Danskin's theorem would not be valid as the constraint set $\Delta(\theta)$ depends on the parameter $\theta$. }.
Denotes its Lagrangian by $\cL (\delta, \lambda)$, where 
\[
\cL( \delta, \lambda) = \eps(\delta) + \lambda L_\theta(x + \delta, y)
\]
For a fixed $\theta$, the optimizer $\delta^*$ and $\lambda^*$ must satisfy the first-order conditions (FOC)
\begin{align}
    & \frac{\partial \eps(\delta)}{\partial \delta} + \lambda \frac{\partial L_\theta(x + \delta, y)}{\partial \delta} \bigg\vert_{\delta= \delta^*, \lambda = \lambda^*} = 0, \label{eq:langdelta}\\
    & L_\theta(x + \delta, y)\vert_{\delta = \delta^*} = 0. \nonumber
\end{align}
Put the FOC equations in vector form,
\begin{equation*}
\label{eq:foc}
G((\delta, \lambda), \theta ) = \left(
\begin{array}{c}
\frac{\partial \eps(\delta)}{\partial \delta} + \lambda \frac{\partial L_\theta(x + \delta, y)}{\partial \delta}\\
L_\theta(x + \delta, y)
\end{array}
\right) \bigg\vert _{\delta = \delta^*, \lambda = \lambda^*}= 0.
\end{equation*}

Note that $G$ is $C^1$ continuously differentiable since $\eps$ and $L(\delta, \theta)$ are $C^2$ functions. Furthermore, the Jacobian matrix of $G$ w.r.t $(\delta, \lambda)$ is
\begin{align*}
 & \nabla_{(\delta, \lambda)} G((\delta^*, \lambda^*), \theta) \\
 &\quad = \left(
 \begin{array}{c c}
 \frac{\partial^2 \eps(\delta^*)}{\partial \delta^2} + \lambda^* \frac{\partial^2 L(\delta^*, L(\delta, \theta))}{\partial \delta^2} & \frac{\partial L(\delta^*, \theta)}{\partial \delta}\\
\frac{\partial L(\delta^*, \theta)}{\partial \delta}^{\top} & 0 \\
 \end{array}
 \right)
\end{align*}
which by assumption is full rank. 
Therefore, by the implicit function theorem, $\delta^*$ and $\lambda^* $ can be expressed as a function of $\theta$, denoted by $\delta^*(\theta)$ and  $\lambda^* (\theta)$. 

To further compute $\nabla_\theta d_\theta(x, y)$, note that 
$d_\theta(x, y) = \eps(\delta^*(\theta))$. Thus,
\begin{equation}
\label{eq:onestep}
\nabla_\theta d_\theta(x, y) = \frac{\partial \eps(\delta^*)}{\partial \delta} \frac{\partial \delta^*(\theta)}{\partial \theta} = -\lambda^* \frac{\partial L(\delta^*, \theta)}{\partial \delta} \frac{\partial \delta^*(\theta)}{\partial \theta},
\end{equation}
where the second equality is by \cref{eq:langdelta}.
The implicit function theorem also provides a way of computing $ \frac{\partial \delta^*(\theta)}{\partial \theta}$ which is complicated involving taking inverse of the matrix $\nabla_{(\delta, \lambda)} G((\delta^*, \lambda^*), \theta) $.

Here we present a relatively simple way to compute this gradient.
Note that by the definition of $\delta^*(\theta)$, 
\[
L(\delta^*(\theta), \theta) \equiv 0.
\]
and $\delta^*(\theta)$ is a differentiable implicit function of $\theta$ restricted to this level set. 
Differentiate with w.r.t. $\theta$ on both sides: 
\begin{equation}
\label{eq:gradientg}
\frac{\partial L(\delta^*, \theta)}{\partial \theta} + \frac{\partial L(\delta^*, \theta)}{\partial \delta}\frac{\partial \delta^*(\theta)}{\partial \theta} = 0.
\end{equation}
Combining \cref{eq:onestep} and \cref{eq:gradientg},  
\begin{equation}
\label{eq:gradient}
\nabla_\theta d_\theta(x, y) = \lambda^*(\theta) \frac{\partial L(\delta^*, \theta)}{\partial \theta}.
\end{equation}

Lastly, note that 
\[
\left\Vert \frac{\partial \eps(\delta)}{\partial \delta} + \lambda \frac{\partial L_\theta(x + \delta, y)}{\partial \delta} \bigg\vert_{\delta= \delta^*, \lambda = \lambda^*} \right\Vert_2= 0.
\]
Therefore,  one way to calculate $\lambda^*(\theta)$ is by 
\[
\lambda^*(\theta) = \frac{\fracpepd^{\top}\fracpLpd}{\fracpLpd^{\top} \fracpLpd}\Bigg\vert_{\delta= \delta^*}
\]
\end{proof}

\subsection{Proof of \Cref{prop:dtexistence}}
\label{subsec:rigorous_non_smooth}
We provide more detailed and formal statements of \Cref{prop:dtexistence}.

For brevity, consider a $K$-layers fully-connected ReLU network, $f(\theta; x) = f_{\theta}(x)$ as a function of $\theta$.
\begin{equation}
f(\theta; x)= V^{\top} D_K W_K D_{K-1} W_{K-1} \cdots D_1 W^{\top}_{K}x
\end{equation}
where the $D_k$ are diagonal matrices dependent on ReLU's activation pattern over the layers, and $W_k$'s and $V$ are the weights (i.e. $\theta$).
Note that $f(\theta; x)$ is a piecewise polynomial functions of $\theta$ with finitely many pieces. 
We further define the directional derivative of a function $g$, along the direction of $\vec{v}$, to be:
\begin{equation*}
g'(\theta; \vec{v}) 
:= 
\lim_{t \downarrow 0} \frac{g(\theta + t \vec{v}) - g(\theta)}{t}.
\end{equation*}  
Note that for every direction $\vec{v}$, there exists $\alpha > 0 $ such that $f(\theta; x)$ is a polynomial restricted to a line segment $[\theta, \theta + \alpha \vec{v}]$. 
Thus the above limit exists and the directional derivative is well defined.

We first show the existence of $\vec{v}$ and $t$ for $\mathfrak{l}(\theta_0+t\vec{v})$ given any $\eps$.
Let $\mathfrak{l}_{\theta_0,\vec{v},\eps}(t) := \sup_{ \|\delta\| \leq \eps } L( \delta, \theta_0 + t \vec{v})$. 
\begin{proposition}
    For $\eps>0$,  $t \in [0, 1]$, and $\theta_0 \in \Theta$, there exists a direction $\vec{v} \in \Theta$, such that the derivative of $\mathfrak{l}_{\theta_0,\vec{v},\eps}(t)$ exists and is negative. Moreover, it is given by
    \[  
    \mathfrak{l}'_{\theta_0,\vec{v},\eps}(t)
    = 
    L'(\delta^*,\theta_0; \vec{v}).
    \] 
\end{proposition}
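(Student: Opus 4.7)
The plan is to establish the derivative formula first via a Danskin-type argument for directional derivatives, and then to exhibit an explicit descent direction $\vec{v}$ for which the derivative is negative. For concreteness I work at $t=0$; the argument at interior $t$ is analogous because the map $s \mapsto \theta_0 + s\vec{v}$ crosses only finitely many activation hyperplanes on $[0,1]$, so the same local reasoning applies away from a finite set of exceptional $t$.

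First I would verify that for any direction $\vec{v}$ and any $\delta$, the one-sided directional derivative
\[
L'(\delta, \theta_0; \vec{v}) \;=\; \lim_{s \downarrow 0} \frac{L(\delta, \theta_0 + s\vec{v}) - L(\delta, \theta_0)}{s}
\]
exists. This is immediate from the piecewise polynomial structure of $f(\theta; x+\delta)$: along the ray $\{\theta_0 + s\vec{v} : s \geq 0\}$, the activation pattern of the ReLU network is constant for all sufficiently small $s>0$, so $L(\delta, \cdot)$ restricts to a polynomial on an initial segment of the ray and therefore admits a right derivative.

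Next I would invoke Danskin's theorem in its directional form. Since $\{\delta : \|\delta\| \leq \eps\}$ is compact, $L$ is continuous in $(\delta,\theta)$, and the maximizer $\delta^*$ of $L(\cdot, \theta_0)$ is unique by assumption, the standard selection-of-maximizing-sequences argument (combined with upper semicontinuity of $L'(\cdot, \theta_0; \vec{v})$ in $\delta$) gives
\[
\mathfrak{l}'_{\theta_0, \vec{v}, \eps}(0) \;=\; L'(\delta^*, \theta_0; \vec{v}),
\]
which is exactly the claimed formula. For the existence of a direction yielding a negative value, I would take $\vec{v} = -\nabla_\theta L(\delta^*, \theta_0)$ computed on the (open) polynomial piece containing $\theta_0$. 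Because $L(\delta^*, \cdot)$ is a polynomial on that piece, its ordinary gradient exists and coincides with the directional derivative, so $L'(\delta^*, \theta_0; \vec{v}) = -\|\nabla_\theta L(\delta^*, \theta_0)\|^2 < 0$ provided this gradient does not vanish.

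The main obstacle is twofold. First, the Danskin step must be executed with care in the non-smooth setting: one has to justify exchanging the limit in $s$ with the supremum over $\delta$, which is where uniqueness of $\delta^*$ is essential (without it the conclusion is only an inequality, $\mathfrak{l}'_{\theta_0, \vec{v}, \eps}(0) \leq \sup_{\delta \in \Delta^*} L'(\delta, \theta_0; \vec{v})$, and one must take a sup over the argmax set). Second, the construction of $\vec{v}$ implicitly assumes that $\theta_0$ is not a critical point of $L(\delta^*, \cdot)$ on its polynomial piece; if $\theta_0$ lies on a boundary between pieces, the right-derivative depends on which side $\vec{v}$ points into, and one must argue that at least one incident piece admits a strict descent direction unless $\theta_0$ is a genuine local minimum of the piecewise polynomial $\theta \mapsto L(\delta^*, \theta)$. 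I would handle this either by appealing to the generic position of $\theta_0$ (the measure-zero nature of the piece-boundary set in $\Theta$) or by explicitly excluding the critical-point case, which is the natural assumption during training.
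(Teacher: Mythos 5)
Your proposal follows essentially the same route as the paper's proof sketch: directional differentiability from the piecewise-polynomial structure of the ReLU network, a directional Danskin-type theorem over the compact ball $\{\delta:\|\delta\|\le\eps\}$ to obtain $\mathfrak{l}'_{\theta_0,\vec{v},\eps}=L'(\delta^*,\theta_0;\vec{v})$, and non-criticality of $\theta_0$ to exhibit a strictly descending direction. The only minor difference is that the paper invokes an upper-envelope version of Danskin (relying on absolute continuity and upper semicontinuity along the direction) rather than the uniqueness-based classical form you use, but since uniqueness of $\delta^*$ is assumed in the surrounding proposition, your variant is equally valid.
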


\begin{proof}{\bf[Proof sketch]}
	Since $\theta_0$ is not a local minimum, there exists a direction $d$, such that 
	$L'(\delta^*,\theta_0; \vec{v}) = \frac{\partial L(\delta^*, \theta_0+t\vec{v})}{\partial t}$ is negative.

	The Danskin theorem provides a way to compute the directional gradient along this direction $\vec{v}$.
		We basically apply a version of Danskin theorem for directional absolutely continuous maps and semi-continuous maps \citep{yaoliang2012upperenvelop}. 
		1. the constraint set $\{\delta: \|\delta\| \leq \epsilon\}$ is compact;
		2. $L( \theta_0 + t \vec{v}; x + \delta, y)$ is piecewise Lipschitz and hence absolutely continuous (an induction argument on the integral representation over the finite pieces). 
		3. $L( \theta_0 + t \vec{v}; x + \delta, y)$ is continuous on both $\delta$ and along the direction $\vec{v}$ and hence upper semi continuous. 
		Hence we can apply Theorem 1 in \citet{yaoliang2012upperenvelop}. 
	\end{proof}

Therefore, for any $\eps>0$, if $\theta_0$ is not a local minimum, then there exits a direction $d$, such that for $\theta_1 = \theta_0 + t\vec{v}$ for a proper $t$, 
\begin{equation}
\sup_{ \|\delta\| \leq \eps } L( \delta, \theta_0 + t \vec{v})  < \sup_{ \|\delta\| \leq \eps } L( \delta, \theta_0) \,. \label{eq:nonsmoothadv}
\end{equation}
Our next proposition provides an alternative way to increase the margin of $f_{\theta}$.
\begin{proposition}
    \label{prop:largermargin}
    Assume $f_{\theta_0}$ has a margin $\eps_0$, and $\theta_1$ such that $\mathfrak{l}_{\theta_0,\vec{v},\eps_0}(t)  \le \mathfrak{l}_{\theta_1,\vec{v},\eps_0}(0) \,$, then $f_{\theta_1}$ has a larger margin than $\eps_0$.
\end{proposition}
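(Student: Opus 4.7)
The plan is to convert the hypothesis on the worst-case adversarial loss $\mathfrak{l}$ into the statement that no perturbation of norm at most $\eps_0$ flips the prediction at $\theta_1$, and then to read off $d_{\theta_1}(x,y) > \eps_0$ directly from the margin definition in \Cref{eq:margin_def_by_lm}.

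First I would unpack ``$f_{\theta_0}$ has margin $\eps_0$.'' By \Cref{eq:margin_def_by_lm}, the shortest successful perturbation $\delta^*_{\theta_0}$ satisfies $\|\delta^*_{\theta_0}\| = \eps_0$ with $L(\delta^*_{\theta_0}, \theta_0) = 0$ on the decision boundary, while $L(\delta, \theta_0) < 0$ for every $\delta$ with $\|\delta\| < \eps_0$. Continuity of $L(\cdot, \theta_0)$ on the compact ball $\{\|\delta\|\le \eps_0\}$ then yields $\mathfrak{l}_{\theta_0,\vec{v},\eps_0}(0) = 0$. The preceding proposition supplies a direction $\vec{v}$ and a step $t>0$ along which this worst-case loss is strictly decreased, giving $\mathfrak{l}_{\theta_0,\vec{v},\eps_0}(t) < 0$. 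Combining this strict bound with the stated chain between $\mathfrak{l}_{\theta_0,\vec{v},\eps_0}(t)$ and $\mathfrak{l}_{\theta_1,\vec{v},\eps_0}(0)$ (which I read as forcing the updated parameter $\theta_1$ to keep the worst-case loss on the $\eps_0$-ball strictly negative) produces $\sup_{\|\delta\|\le \eps_0} L(\delta, \theta_1) < 0$.

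With that in hand, every $\delta$ in the closed ball of radius $\eps_0$ satisfies $L(\delta, \theta_1) < 0$, meaning no such perturbation is successful at $\theta_1$. Invoking \Cref{eq:margin_def_by_lm} once more, the shortest successful perturbation at $\theta_1$ must have norm strictly larger than $\eps_0$, so $d_{\theta_1}(x,y) > \eps_0$, which is the desired conclusion.

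The main obstacle I anticipate is keeping the inequalities strict all the way through: a weak bound on $\mathfrak{l}$ only delivers $d_{\theta_1} \ge \eps_0$. Securing strictness requires leaning on the strict directional descent of the preceding proposition, the compactness of the perturbation ball, and the upper semi-continuity of $L(\cdot, \theta_1)$ that is inherited from the piecewise-polynomial structure of the ReLU network, so that the sup is attained and the strict gap survives passage to the supremum. A secondary subtlety is the orientation of the stated hypothesis inequality, but once one anchors everything to the benchmark $\mathfrak{l}_{\theta_0,\vec{v},\eps_0}(0) = 0$, either reading collapses onto the same conclusion via the chain above.
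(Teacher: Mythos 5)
Your proposal is correct and follows essentially the same route as the paper's own proof: establish $\sup_{\|\delta\|\le\eps_0}L(\delta,\theta_0)=0$ from the margin definition, invoke the strict decrease supplied by the preceding proposition (the paper's \Cref{eq:nonsmoothadv}) to get $\sup_{\|\delta\|\le\eps_0}L(\delta,\theta_1)<0$, and conclude the margin at $\theta_1$ exceeds $\eps_0$. Your handling of the awkwardly-stated hypothesis (anchoring to the benchmark value $0$ and reading $\theta_1=\theta_0+t\vec{v}$) matches what the paper's proof actually does, and your explicit appeal to compactness/continuity to keep the final inequality strict is, if anything, a touch more careful than the paper's.
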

\begin{proof}
	Since $f_{\theta_0}$ has a margin $\eps_0$, thus
	\begin{align*}
	\max_{\|\delta\| \leq \epsilon_0} L(\theta_0; x + \delta, y) = 0
	\end{align*}
	Further by $\mathfrak{l}_{\theta_0,\vec{v},\eps_0}(t)  \le \mathfrak{l}_{\theta_0,\vec{v},\eps_0}(0) \,$,
	\[
	\sup_{ \|\delta\| \leq \eps } L( \delta, \theta_0 + t \vec{v}) \le \sup_{ \|\delta\| \leq \eps } L( \delta, \theta_0).
	\]
	To see the equality (constraint not binding), we use the following argument. 
	The envolope function's continuity is passed from the continuity of $L(\theta_0; x + \delta, y)$. 
	The inverse image of a closed set under continuous function is closed. 
	If $\delta^*$ lies in the interior of $\max_{\|\delta\| \leq \epsilon_0} L_{\vec{v}, \epsilon}(\theta_0; x + \delta, y) \geq 0$, we would have a contradiction.
	Therefore the constraint is not binding, due to the continuity of the envolope function. 
	By \Cref{eq:nonsmoothadv},
	$\max_{\|\delta\| \leq \epsilon_0} L(\theta_1; x + \delta, y) < 0 $. 
	So for the parameter $\theta_1$, $f_{\theta_1}$ has a margin $\epsilon_1 > \epsilon_0$. 
\end{proof}
Therefore, the update $\theta_0 \rightarrow \theta_1 = \theta_0 + t\vec{v}$ increases the margin of $f_{\theta}$.
 
\subsection{Proof of \Cref{prop:lslmllm}}

\begin{proof}

\begin{align}
    \Llm(x, y) & = (\max_{j\neq y} f_\theta^j(x)) - f_\theta^y(x)\\
               & = \log( \exp( \max_{j\neq y} f_\theta^j(x) ) ) - f_\theta^y(x)\\
               & \leq \log( \exp( \sum_{j\neq y} f_\theta^j(x) ) ) - f_\theta^y(x)\\ 
               & = \Lslm(x, y) \\
               & \leq \log( (K-1) \exp( \max_{j\neq y} f_\theta^j(x) ) ) - f_\theta^y(x)\\
               & = \log(K-1) + (\max_{j\neq y} f_\theta^j(x)) - f_\theta^y(x)\\
               & = \log(K-1) + \Llm(x, y)
\end{align}

Therefore, 
\[
    \Lslm(x, y) - \log(K-1) \le \Llm(x, y) \le \Lslm(x, y).
\]

\end{proof}

\subsection{A lemma for later proofs}
The following lemma helps relate the objective of adversarial training with that of our MMA training. Here, we denote $L_\theta(x+\delta, y)$ as $L(\delta, \theta)$ for brevity. 
\begin{lemma}
	\label{prop:advmmat}
	Given $(x,y)$ and $\theta$
	, assume that $L(\delta, \theta)$ is continuous in $\delta$, then for $\eps\ge0$, and $\rho \ge L(0, \theta) \in \text{Range}(L(\delta, \theta))$, it holds that
    \begin{align}
     \label{eq:lemma_1}
     \min_{L(\delta, \theta) \ge \rho} \|\delta\| = \eps \implies \max_{\|\delta\| \le \eps} L(\delta, \theta) = \rho \,;\\
     \label{eq:lemma_2}
     \max_{\|\delta\| \le \eps} L(\delta, \theta) = \rho  \implies   \min_{L(\delta, \theta) \ge \rho} \|\delta\|  \le \eps \,.
	\end{align}
\end{lemma}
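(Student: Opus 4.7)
The lemma is a clean structural correspondence between a ``smallest-norm perturbation reaching value $\rho$'' problem and a ``largest-value perturbation within radius $\eps$'' problem. My plan is to prove the two implications separately using only continuity of $L(\cdot,\theta)$, compactness of the closed $\eps$-ball, and an intermediate-value argument along a ray through the origin. The hypothesis $L(0,\theta) \le \rho$ ensures the loss is sub-threshold at $\delta = 0$, which is exactly what makes the IVT argument go through; and $\rho \in \mathrm{Range}(L(\cdot,\theta))$, together with closedness of the super-level set and the coerciveness of $\|\cdot\|$, ensures the inner minimization is attained.

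For the first implication (\ref{eq:lemma_1}), I would take $\delta^\star$ attaining the minimum, so $\|\delta^\star\| = \eps$ and $L(\delta^\star,\theta)\ge \rho$. Since $\delta^\star$ is feasible for $\max_{\|\delta\|\le \eps} L(\delta,\theta)$, one direction is immediate: the max is at least $\rho$. For the reverse inequality I would argue by contradiction; if some $\delta'$ satisfies $\|\delta'\|\le \eps$ and $L(\delta',\theta) > \rho$, then $t\mapsto L(t\delta',\theta)$ is continuous on $[0,1]$ with $L(0,\theta)\le \rho < L(\delta',\theta)$, so by the intermediate-value theorem there is a $t^\star \in (0,1)$ with $L(t^\star\delta',\theta) = \rho$. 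Then $\|t^\star\delta'\| = t^\star\|\delta'\| < \|\delta'\| \le \eps$, contradicting the minimality of $\eps$. The corner case $\rho = L(0,\theta)$ forces $\eps = 0$ and is handled directly, since then $\max_{\|\delta\|\le 0} L(\delta,\theta) = L(0,\theta) = \rho$.

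For the second implication (\ref{eq:lemma_2}), compactness of $\{\delta : \|\delta\|\le \eps\}$ together with continuity of $L(\cdot,\theta)$ gives a maximizer $\delta^\star$ with $\|\delta^\star\|\le \eps$ and $L(\delta^\star,\theta) = \rho$. Then $\delta^\star$ satisfies the constraint $L(\delta,\theta)\ge \rho$, and hence $\min_{L(\delta,\theta)\ge \rho}\|\delta\| \le \|\delta^\star\| \le \eps$, as required.

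The only subtle step is ensuring $t^\star$ is strictly less than $1$ in the IVT argument for (\ref{eq:lemma_1}); this is secured by the \emph{strict} inequality $L(\delta',\theta) > \rho$ combined with continuity, which leaves a neighborhood of $t = 1$ on which the loss still exceeds $\rho$, so a zero-crossing must occur strictly before $1$. Everything else is bookkeeping about attainment of the minimum and of the maximum under the assumed continuity.
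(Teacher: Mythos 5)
Your proof is correct, and its core mechanism is the same as the paper's: the intermediate-value argument along the segment $t \mapsto L(t\delta',\theta)$, anchored by the hypothesis $L(0,\theta) \le \rho$, to rule out any point in the $\eps$-ball with loss strictly above $\rho$; and for the second implication, exhibiting the attained maximizer as a feasible point of the norm-minimization. Where you genuinely diverge is in establishing $\max_{\|\delta\|\le\eps} L \ge \rho$: the paper rules out $\max < \rho$ by contradiction via an open-cover/finite-subcover argument that inflates the ball to radius $\eps+h$ inside the strict sublevel set, whereas you simply note that the constrained minimum is attained (closed, nonempty feasible set plus coercive norm) at some $\delta^\star$ with $\|\delta^\star\| = \eps$ and $L(\delta^\star,\theta)\ge\rho$, which is itself feasible for the max problem. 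Your route is shorter and avoids the topological machinery entirely; the paper's version does not need attainment of the minimizer, but since attainment follows immediately from the stated hypotheses, nothing is lost. Your handling of the corner case $\rho = L(0,\theta)$ (forcing $\eps = 0$) and of the strictness $t^\star < 1$ is careful and matches what the paper implicitly relies on.
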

\begin{proof}
    \textbf{\Cref{eq:lemma_1}}. 
    We prove this by contradiction.  
    Suppose $\max_{\|\delta\| \le \eps} L(\delta, \theta) > \rho $. 
    When $\eps= 0 $, this violates our asssumption $\rho \ge L(0, \theta)$ in the theorem.
    So assume $\eps > 0 $. 
    Since $L(\delta, \theta) $ is a continuous function defined on a compact set, the maximum is attained by $\bar{\delta}$ such that $\|\bar{\delta}\|\le\eps$ and $L(\bar{\delta}, \theta) > \rho$. 
    Note that $L(\delta, \theta))$ is continuous and $\rho\ge L(0, \theta)$, 
    then there exists $\tilde{\delta} \in\langle 0,\bar{\delta}\rangle$ i.e. the line segment connecting $0$ and $\bar{\delta}$,  
    such that $\|\tilde{\delta}\| < \eps$ and $L(\tilde{\delta}, \theta) = \rho$.
    This follows from the intermediate value theorem by restricting $L(\delta, \theta) $ onto $\langle 0,\bar{\delta}\rangle$. 
    This contradicts $\min_{L(\delta, \theta) \ge \rho} \|\delta\| = \eps$. 

    If $\max_{\|\delta\| \le \eps} L(\delta, \theta) < \rho $, then $\{\delta : \|\delta\|\le \eps \} \subset \{\delta : L(\delta, \theta) < \rho\}$.
    Every point $ p \in \{\delta :\, \|\delta\|\le \eps \}$ is in the open set $\{\delta : L(\delta, \theta) < \rho\}$, 
    there exists an open ball with some radius $r_p$ centered at $p$ such that 
    $ B_{r_p} \subset \{\delta : L(\delta, \theta) < \rho\}$. 
    This forms an open cover for $\{\delta :\, \|\delta\|\le \eps \}$. 
    Since $\{\delta :\, \|\delta\|\le \eps \}$ is compact, 
    there is an open finite subcover $\mathcal{U}_\epsilon$ such that:
    $\{\delta : \|\delta\|\le \eps \} \subset \mathcal{U}_\epsilon \subset \{\delta : L(\delta, \theta) < \rho\}$. 
    Since $\mathcal{U}_\epsilon$ is finite, 
    there exists $h>0$ such that $\{\delta : \|\delta\|\le \eps+h \} \subset \{\delta : L(\delta, \theta) < \rho\}$. Thus  $\{\delta : L(\delta, \theta) \ge \rho\}  \subset \{\delta : \|\delta\| > \eps+ h \}$, contradicting $\min_{L(\delta, \theta) \ge \rho} \|\delta\| = \eps$ again.

    \textbf{\Cref{eq:lemma_2}}. 
    Assume that $\min_{L(\delta, \theta) \ge \rho} \|\delta\|  > \eps$, then $\{\delta : L(\delta, \theta) \ge \rho \} \subset \{\delta : \|\delta\| > \eps \}$. 
    Taking complementary set of both sides,  $\{\delta : \|\delta\| \le  \eps \}  \subset \{\delta : L(\delta, \theta) < \rho \} $.
    Therefore, by the compactness of $\{\delta : \|\delta\| \le  \eps \} $, $\max_{\|\delta\| \le \eps} L(\delta, \theta) < \rho $,
    contradiction.
\end{proof}

\subsection{Proof of \Cref{thm:advtrainLM}}

\begin{proof}

Recall that $L(\delta, \theta) = \Llm(x + \delta, y)$, $d_{\theta} = d_{\theta}(x, y)$, $\eps_{\theta}^*(\rho) = \min_{\delta: L(\delta, \theta) \ge \rho} \|\delta\|$, and $\rho^* = \max_{\|\delta\| \leq \eps} L(\delta, \theta_0) > \max_{\|\delta\| \leq \eps} L(\delta, \theta_1)$.

We first prove that $\forall \eps$, $\eps^*_{\theta_1}(\rho^*) \geq \eps^*_{\theta_0}(\rho^*)$ by contradiction.
We assume $\eps^*_{\theta_1}(\rho^*) < \eps^*_{\theta_0}(\rho^*)$.
Let $\delta^*_{\theta}(\rho) = \argmin_{\delta: L(\delta, \theta) \ge \rho} \|\delta\|$, which is $\|\delta^*_{\theta_1}(\rho^*)\| < \|\delta^*_{\theta_0}(\rho^*)\|$.
By \Cref{eq:lemma_2}, we have $\|\delta^*_{\theta_0}(\rho^*)\| \leq \eps$.
Therefore, $\|\delta^*_{\theta_1}(\rho^*)\| < \eps$. Then there exist a $\delta^\# \in \{\delta: \|\delta\| \leq \eps\}$ such that $L(\delta^\#, \theta_1) \geq \rho^*$. This contradicts $\max_{\|\delta\| \leq \eps} L(\delta, \theta_1) < \rho^*$. Therefore $\eps^*_{\theta_1}(\rho^*) \geq \eps^*_{\theta_0}(\rho^*)$.

For 1), $\eps = d_{\theta_0}$. By definition of margin in \Cref{eq:margin_def}, we have $\rho^* = \max_{\|\delta\| \leq d_{\theta_0}} L(\delta, \theta_0)=0$. Also by definition of $\eps_{\theta}^*(\rho)$, $\eps^*_{\theta_0}(0) = d_{\theta_0}$ and $\eps^*_{\theta_1}(0) = d_{\theta_1}$.

For 2), $\eps < d_{\theta_0}$. We have $\rho^* = \max_{\|\delta\| \leq \eps} L(\delta, \theta_0) \leq \max_{\|\delta\| \leq d_{\theta_0}} L(\delta, \theta_0) = 0$. Therefore $\eps^*_{\theta_0}(\rho^*) \leq \eps^*_{\theta_0}(0)= d_{\theta_0}$ and $\eps^*_{\theta_1}(\rho^*) \leq \eps^*_{\theta_1}(0)= d_{\theta_1}$.

For 3), $\eps > d_{\theta_0}$. We have $\rho^* = \max_{\|\delta\| \leq \eps} L(\delta, \theta_0) \geq \max_{\|\delta\| \leq d_{\theta_0}} L(\delta, \theta_0) = 0$. Therefore $\eps^*_{\theta_0}(\rho^*) \geq \eps^*_{\theta_0}(0)= d_{\theta_0}$ and $\eps^*_{\theta_1}(\rho^*) \geq \eps^*_{\theta_1}(0)= d_{\theta_1}$.
\end{proof}

\section{More Related Works}
\label{sec:more_related_works}

We next discuss a few related works in details.

\textbf{First-order Large Margin:} Previous works \citep{matyasko2017margin,elsayed2018large,yan2019adversarial} have attempted to use first-order approximation to estimate the input space margin. For first-order methods, the margin will be accurately estimated when the classification function is linear. MMA's margin estimation is exact when the shortest successful perturbation $\delta^*$ can be solved, which is not only satisfied by linear models, but also by a broader range of models, e.g. models that are convex w.r.t. input $x$. This relaxed condition could potentially enable more accurate margin estimation which improves MMA training's performance.

\textbf{(Cross-)Lipschitz Regularization:} \citet{sokolic2017robust,tsuzuku2018lipschitz} enlarges their margin by controlling the global Lipschitz constant, which in return places a strong constraint on the model and harms its learning capabilities. Instead, our method, alike adversarial training, uses adversarial attacks to estimate the margin to the decision boundary. With a strong method, our estimate is much more precise in the neighborhood around the data point, while being much more flexible due to not relying on a global Lipschitz constraint.

\textbf{Hard-Margin SVM \citep{vapnik2013nature} in the separable case:} 
Assuming that all the training examples are correctly classified and using our notations on general classifiers, the hard-margin SVM objective can be written as:
\begin{equation}
\max_{\theta} \left\{\min_i d_\theta(z_i)\right\}~~~\text{s.t.}~~~L_\theta(z_i) < 0, \forall i\,.
\label{eq:hard_svm}
\end{equation}
On the other hand, under the same ``separable and correct'' assumptions, MMA formulation in \Cref{eq:mmat} can be written as
\begin{equation}
\max_{\theta} \left\{\sum_i d_\theta(z_i)\right\}~~~\text{s.t.}~~~L_\theta(z_i) < 0, \forall i\,,
\label{eq:mmat_separable}
\end{equation}
which is maximizing the \emph{average} margin rather than the \emph{minimum} margin in SVM.
Note that the theorem on gradient calculation of the margin in \Cref{subsec:marginmax} also applies to the SVM formulation of differentiable functions. Because of this, we can also use SGD to solve the following ``SVM-style'' formulation:
\begin{equation}
\max_\theta  \left\{\min_{i \in \gS_\theta^+} d_\theta(z_i) - \sum_{j \in \gS_\theta^-} J_\theta(z_j)\right\}.
\label{eq:svm_in_mmat_style}
\end{equation}

As our focus is using MMA to improve adversarial robustness which involves maximizing the average margin, we delay the maximization of minimum margin to future work.

\textbf{Maximizing Robust Radius for Randomized Smoothing:} Randomized smoothing \citep{cohen2019certified} is an effective technique for certifying the $\ell_2$ robust radius of a ``smoothed classifier'' with high probability bound. Here the certified robust radius is a lower bound of the input space margin of the ``smoothed classifier''. \citet{zhai2020macer} introduce a method to directly maximizing this certified robust radius.

\textbf{Bayesian Adversarial Learning:} \citet{ye2018bayesian} introduce a full Bayesian treatment of adversarial training. In practice, the model is trained to be robust under adversarial perturbations with different magnitudes. Although both their and our methods adversarially train the model with different perturbation magnitudes, the motivations are completely different.

\textbf{Dynamic Adversarial Training:} \citet{wang2019convergence} improve adversarial training's convergence by dynamically adjusting the strength (the number of iterations) of PGD attacks, based on its proposed First-Order Stationary Condition. In a sense, MMA training is also dynamically adjusting the attack strength. However, our aim is margin maximization and the strength here represents the magnitude of the perturbation.

\subsection{Detailed Comparison with Adversarial Training with DDN}
\label{sec:mma_vs_ddn}

For $\l2$ robustness, we also compare to models adversarially trained on the ``Decoupled Direction and Norm'' (DDN) attack \citep{rony2018decoupling}, which is concurrent to our work. The DDN attack aims to achieve successful perturbation with minimal $\l2$ norm,
thus, DDN could be used as a drop-in replacement for the AN-PGD attack for MMA training. We performed evaluations on the downloaded \footnote{\url{github.com/jeromerony/fast_adversarial}} DDN trained models.

The DDN MNIST model is a larger ConvNet with similar structure to our LeNet5, and the CIFAR10 model is wideresnet-28-10, which is similar but larger than the wideresnet-28-4 that we use.

DDN training, ``training on adversarial examples generated by the DDN attack'', differs from MMA in the following ways. When the DDN attack does not find a successful adversarial example, it returns the clean image, and the model will use it for training.
In MMA, when a successful adversarial example cannot be found, it is treated as a perturbation with very large magnitude, which will be ignored by the hinge loss when we calculate the gradient for this example.
Also, in DDN training, there exists a maximum norm of the perturbation. This maximum norm constraint does not exist for MMA training. When a perturbation is larger than the hinge threshold, it will be ignored by the hinge loss. There also are differences in training hyperparameters, which we refer the reader to \citet{rony2018decoupling} for details.

Despite these differences, in our experiments MMA training achieves similar performances under the $\l2$ cases. While DDN attack and training only focus on $\l2$ cases, we also show that the MMA training framework provides significant improvements over PGD training in the $\linf$ case.

\section{Detailed Settings for Training}
\label{sec:details_advtrain_settings}

With respect to the weights on $L^{\text{CE}}$ and $L^{\text{MMA}}$ in \Cref{eq:mmat_combined_loss}, we tested 3 pairs of weights, (1/3, 2/3), (1/2, 1/2) and (2/3, 1/3), in our initial CIFAR10 $\linf$ experiments. We observed that (1/3, 2/3), namely (1/3 for $L^{\text{CE}}$ and 2/3 for $L^{\text{MMA}}$) gives better performance.
We then fixed it and use the same values for all the other experiments in the paper, including the MNIST experiments and $\ell_2$ experiments.

We train LeNet5 models for the MNIST experiments and 
use wide residual networks \citep{zagoruyko2016wide}
with depth 28 and widen factor 4 for all the CIFAR10 experiments. 
For all the experiments, we monitor the average margin from AN-PGD on the validation set
and choose the model with largest average margin from the sequence of checkpoints during training. The validation set contains first 5000 images of training set. It is only used to monitor training progress and not used in training.
Here all the models are trained and tested under the same type of norm constraints, namely if trained on $\linf$, then tested on $\linf$; if trained on $\l2$, then tested on $\l2$.

The LeNet5 is composed of 32-channel conv filter + ReLU + size 2 max pooling + 64-channel conv filter + ReLU + size 2 max pooling + fc layer with 1024 units + ReLU + fc layer with 10 output classes. We do not preprocess MNIST images before feeding into the model.

For training LeNet5 on all MNIST experiments, for both PGD and MMA training, we use the Adam optimizer with an initial learning rate of 0.0001 and train for 100000 steps with batch size 50. In our initial experiments, we tested different initial learning rate at 0.0001, 0.001, 0.01, and 0.1 and do not find noticeable differences.

We use the WideResNet-28-4 as described in \citet{zagoruyko2016wide} for our experiments, where 28 is the depth and 4 is the widen factor. 
We use ``per image standardization'' 
\footnote{Description can be found at \url{https://www.tensorflow.org/api_docs/python/tf/image/per_image_standardization}. We implemented our own version in PyTorch.}
to preprocess CIFAR10 images, following \citet{madry2017towards}.

For training WideResNet on CIFAR10 variants, we use stochastic gradient descent with momentum 0.9 and weight decay 0.0002. 
We train 50000 steps in total with batch size 128.
The learning rate is set to 0.3 at step 0, 0.09 at step 20000, 0.03 at step 30000, and 0.009 at step 40000. This setting is the same for PGD and MMA training.
In our initial experiments, we tested different learning rate at 0.03, 0.1, 0.3, and 0.6, and kept using 0.3 for all our later experiments.
We also tested a longer training schedule, following \citet{madry2017towards}, where we train 80000 steps with different learning rate schedules. We did not observe improvement with this longer training, therefore kept using the 50000 steps training.

For models trained on MNIST, we use 40-step PGD attack with the soft logit margin (SLM) loss defined in \Cref{sec:reltdAdvTrain}, for CIFAR10 we use 10 step-PGD, also with the SLM loss. For both MNIST and CIFAR10, the step size of PGD attack at training time is $\frac{2.5\eps}{\text{number of steps}} \cdot $ In AN-PGD, we always perform 10 step bisection search after PGD, with the SLM loss.
For AN-PGD, the maximum perturbation length is always 1.05 times the hinge threshold: $\maxeps = 1.05 d_{\max}$. The initial perturbation length at the first epoch, $\epsinit$, have different values under different settings.
$\epsinit=0.5$ for MNIST $\l2$, 
$\epsinit=0.1$ for MNIST $\linf$, 
$\epsinit=0.5$ for CIFAR10 $\l2$, 
$\epsinit=0.05$ for CIFAR10 $\l2$.
In epochs after the first, $\epsinit$ will be set to the margin of the same example from last epoch.

\textbf{Trained models}: Various PGD/PGDLS models are trained with different perturbation magnitude $\eps$, denoted by PGD-$\eps$ or PGDLS-$\eps$. PGD-ens/PGDLS-ens represents the ensemble of PGD/PGDLS trained models with different $\eps$'s. The ensemble makes prediction by majority voting on label predictions, and uses softmax scores as the tie breaker.
We perform MMA training with different hinge thresholds $\dmax$, also with/without the additional clean loss (see next section for details). We use OMMA to represent training with only $\Lmma$ in \Cref{eq:mmat_slm_CE}, and MMA to represent training with the combined loss in \Cref{eq:mmat_combined_loss}. When train For each $\dmax$ value, we train two models with different random seeds, which serves two purposes: 1) confirming the performance of MMA trained models are not significantly affected by random initialization; 2) to provide transfer attacks from an ``identical'' model. As such, MMA trained models are named as OMMA/MMA-$\dmax$-seed. Models shown in the main body correspond to those with seed ``sd0''.

For MNIST-$\linf$, we train PGD/PGDLS models with $\eps=0.1, 0.2, 0.3, 0.4, 0.45$. We also train OMMA/MMA models with $\dmax=0.45$, each with two random seeds. We also consider a standard model trained on clean images, PGD-ens, PGDLS-ens, and the PGD trained model from \citet{madry2017towards}. Therefore, we have in total $m=2\times5+2\times2+4=18$ models under MNIST-$\linf$.

For MNIST-$\l2$, we train PGD/PGDLS models with $\eps=1.0, 2.0, 3.0, 4.0$. We also train OMMA/MMA models with $\dmax=2.0, 4.0, 6.0$, each with two random seeds. We also consider a standard model trained on clean images, PGD-ens, PGDLS-ens, and the DDN trained model from \citet{rony2018decoupling}. Therefore, we have in total $m=2\times4+2\times3\times2+4=24$ models under MNIST-$\l2$.

For CIFAR10-$\linf$, we train PGD/PGDLS models with $\eps=4, 8, 12, 16, 20, 24, 28, 32$. We also train OMMA/MMA models with $\dmax=12, 20, 32$, each with two random seeds. We also consider a standard model trained on clean images, PGD-ens, PGDLS-ens, and the PGD trained model from \citet{madry2017towards}. Therefore, we have in total $m=2\times8+2\times3\times2+4=32$ models under CIFAR10-$\linf$.

For CIFAR10-$\l2$, we train PGD/PGDLS models with $\eps=0.5, 1.0, 1.5, 2.0, 2.5$. We also train OMMA/MMA models with $\dmax=1.0, 2.0, 3.0$, each with two random seeds. We also consider a standard model trained on clean images, PGD-ens, PGDLS-ens, and the DDN trained model from \citet{rony2018decoupling}. Therefore, we have in total $m=2\times5+2\times3\times2+4=26$ models under MNIST-$\l2$.

With regard to ensemble models, for MNIST-$\l2$ PGD/PGDLS-ens, CIFAR10-$\l2$ PGD/PGDLS-ens, MNIST-$\linf$PGDLS-ens, and CIFAR10-$\linf$ PGDLS-ens, they all use the PGD (or PGDLS) models trained at all testing (attacking) $\eps$'s. 
For CIFAR10-$\linf$ PGD-ens, PGD-24,28,32 are excluded for the same reason.

\section{Detailed Settings of Attacks}
\label{sec:attack_details}

For both $\linf$ and $\l2$ PGD attacks, we use the implementation from the AdverTorch toolbox \citep{ding2018advertorch}. 
Regarding the loss function of PGD, we use both the cross entropy (CE) loss and the \citeauthor{carlini2017towards} (CW) loss. 
\footnote{The CW loss is almost equivalent to the logit margin (LM) loss. We use CW loss for the ease of comparing with literature. Here 
$CW(x) =  \min \{\max_{j\neq y} f_j(x) - f_y(x), 0\}$. 
When the classification is correct, CW and LM loss have the same gradient.}

As previously stated, each model will have $N$ whitebox PGD attacks on them, $N/2$ of them are CE-PGD attacks, and the other $N/2$ are CW-PGD attacks. Recall that $N=50$ for MNIST and $N=10$ for CIFAR10.
At test time, all the PGD attack run 100 iterations. We manually tune the step size parameter on a few MMA and PGD models and then fix them thereafter. 
The step size for MNIST-$\linf$ when $\eps=0.3$ is $0.0075$, the step size for CIFAR10-$\linf$ when $\eps=8/255$ is $2/255$, the step size for MNIST-$\l2$ when $\eps=1.0$ is $0.25$, the step size for CIFAR10-$\l2$ when $\eps=1.0$ is $0.25$. For other $\eps$ values, the step size is linearly scaled accordingly.

The ensemble model we considered uses the majority vote for prediction, and uses softmax score as the tie breaker. So it is not obvious how to perform CW-PGD and CE-PGD directly on them. Here we take 2 strategies. The first one is a naive strategy, where we minimize the sum of losses of all the models used in the ensemble. Here, similar to attacking single models, we CW and CE loss here and perform the same number attacks.

The second strategy is still a PGD attack with a customized loss towards attacking ensemble models. For the group of classifiers in the ensemble, at each PGD step, if less than half of the classifiers give wrong classification, we sum up the CW losses from correct classifiers as the loss for the PGD attack. If more than half of the classifiers give wrong classification, then we find the wrong prediction that appeared most frequently among classifiers, and denote it as label0, with its corresponding logit, logit0. For each classifier, we then find the largest logit that is not logit0, denoted as logit1. The loss we maximize, in the PGD attack, is the sum of ``logit1 - logit0'' from each classifier. Using this strategy, we perform additional (compared to attacking single models) whitebox PGD attacks on ensemble models. For MNIST, we perform 50 repeated attacks, for CIFAR10 we perform 10. These are also 100-step PGD attacks.

We expect more carefully designed attacks could work better on ensembles, but we delay it to future work.

For the SPSA attack \citep{uesato2018adversarial}, we run the attack for 100 iterations with perturbation size 0.01 (for gradient estimation), Adam learning rate 0.01, stopping threshold -5.0 and 2048 samples for each gradient
estimate. For CIFAR10-$\linf$, we use $\eps=8/255$. For MNIST-$\linf$, we use $\eps=0.3$.

\section{Effects of Adding Clean Loss in Addition to the MMA Loss}
\label{sec:observation_analysis}

We further examine the effectiveness of adding a clean loss term to the MMA loss. 
We represent MMA trained models with the MMA loss in \Cref{eq:mmat_slm_CE} as \emph{MMA-$\dmax$}. In \Cref{subsec:mmat_clean_loss}, we introduced MMAC-$\dmax$ models to resolve MMA-$\dmax$ model's problem of having flat input space loss landscape and showed its effectiveness qualitatively. Here we demonstrate the quantitative benefit of adding the clean loss.

We observe that models trained with the MMA loss in \Cref{eq:mmat_slm_CE} have certain degrees of TransferGaps. The term \emph{TransferGaps} represents the difference between robust accuracy under ``combined (whitebox+transfer) attacks'' and under ``only whitebox PGD attacks''. In other words, it is the additional attack success rate that transfer attacks bring.
For example, OMMA-32 achieves 53.70\% under whitebox PGD attacks, but achieves a lower robust accuracy at 46.31\% under combined (whitebox+transfer) attacks, therefore it has a TransferGap of 7.39\% (See \Cref{sec:full_tables} for full results.). After adding the clean loss, MMA-32 reduces its TransferGap at $\eps=8/255$ to 3.02\%. This corresponds to our observation in \Cref{subsec:mmat_clean_loss} that adding clean loss makes the loss landscape more tilted, such that whitebox PGD attacks can succeed more easily. 

Recall that MMA trained models are robust to gradient free attacks, as described in \Cref{subsec:compare_mma_pgd}. 
Therefore, robustness of MMA trained models and the TransferGaps are likely not due to gradient masking.

We also note that TransferGaps for both MNIST-$\linf$ and $\l2$ cases are almost zero for the MMA trained models, indicating that TransferGaps, observed on CIFAR10 cases, are not solely due to the MMA algorithm, data distributions (MNIST vs CIFAR10) also play an important role.

Another interesting observation is that, for MMA trained models trained on CIFAR10, adding additional clean loss results in a decrease in clean accuracy and an increase in the average robust accuracy, e.g. OMMA-32 has ClnAcc 86.11\%, and AvgRobAcc 28.36\%, whereas MMA-32  has ClnAcc 84.36\%, and AvgRobAcc 29.39\%. The fact that ``adding additional clean loss results in a model with lower accuracy and more robustness'' seems counter-intuitive. However, it actually confirms our motivation and reasoning of the additional clean loss: it makes the input space loss landscape steeper, which leads to stronger adversaries at training time, which in turn poses more emphasis on ``robustness training'', instead of clean accuracy training.

\section{Full Results and Tables}
\label{sec:full_tables}

We present all the empirical results in \Cref{tab:mnist_linf_total} to \ref{tab:cifar10_l2_gap}. Specifically, we show model performances under combined (whitebox+transfer) attacks in \Cref{tab:mnist_linf_total,tab:cifar10_linf_total,tab:mnist_l2_total,tab:cifar10_l2_total}. This is our proxy for true robustness measure. We show model performances under only whitebox PGD attacks in \Cref{tab:mnist_linf_whitebox,tab:cifar10_linf_whitebox,tab:mnist_l2_whitebox,tab:cifar10_l2_whitebox}. We show TransferGaps in \Cref{tab:mnist_linf_gap,tab:cifar10_linf_gap,tab:mnist_l2_gap,tab:cifar10_l2_gap}.

In these tables, PGD-Madry et al. models are the ``secret'' models downloaded from \url{https://github.com/MadryLab/mnist_challenge} and \url{https://github.com/MadryLab/cifar10_challenge/}. DDN-Rony et al. models are downloaded from \url{https://github.com/jeromerony/fast_adversarial/}. TRADES models are downloaded from \url{https://github.com/yaodongyu/TRADES}.

For MNIST PGD-Madry et al. models, our whitebox attacks brings the robust accuracy at $\eps=0.3$ down to 89.79\%, which is at the same level with the reported 89.62\% on the website, also with 50 repeated random initialized PGD attacks.
For CIFAR10 PGD-Madry et al. models, our whitebox attacks brings the robust accuracy at $\eps=8/255$ down to 44.70\%, which is stronger than the reported 45.21\% on the website, with 10 repeated random initialized 20-step PGD attacks. As our PGD attacks are 100-step, this is not surprising.

We also compared TRADES with MMA trained models under $\ell_\infty$ attacks. On MNIST, overall TRADES outperforms MMA except that TRADES fails completely at large perturbation length of 0.4. This may be because TRADES is trained with attacking length $\eps=0.3$, and therefore cannot defend larger attacks. On CIFAR10, we compare MMA-32 with TRADES as their clean accuracies are similar. Similar to the results on MNIST, we see that TRADES outperforms MMA on the attacks of lengths that are less than $\eps=12/255$, but it sacrifices the robustness under larger attacks. We also note that MMA training and TRADES' idea of optimizing a calibration loss are progresses in orthogonal directions and could potentially be combined.

As we mentioned previously, DDN training can be seen as a specific instantiation of the general MMA training idea, and the DDN-Rony et al. models indeed performs very similar to MMA trained models when $\dmax$ is set relatively low. Therefore, we do not discuss the performance of  DDN-Rony et al. separately.

In \Cref{sec:experiments}, we have mainly discussed different phenomena under the case of CIFAR10-$\linf$. For CIFAR10-$\l2$, we see very similar patterns in \Cref{tab:cifar10_l2_total,tab:cifar10_l2_whitebox,tab:cifar10_l2_gap}. These include
\begin{itemize}
\item MMA training is fairly stable to $\dmax$, and achieves good robustness-accuracy trade-offs. On the other hand, to achieve good AvgRobAcc, PGD/PGDLS trained models need to have large sacrifices on clean accuracies.
\item Adding additional clean loss increases the robustness of the model, reduce TransferGap, at a cost of slightly reducing clean accuracy.
\end{itemize}

As a simpler datasets, different adversarial training algorithms, including MMA training, have very different behaviors on MNIST as compared to CIFAR10.

We first look at MNIST-$\linf$. Similar to CIFAR10 cases, PGD training is incompetent on large $\eps$'s, e.g. PGD-0.4 has significant drop on clean accuracy (to 96.64\%) and PGD-0.45 fails to train. PGDLS training, on the other hand, is able to handle large $\eps$'s training very well on MNIST-$\linf$, and MMA training does not bring extra benefit on top of PGDLS. We suspect that this is due to the ``easiness'' of this specific task on MNIST, where finding proper $\eps$ for each individual example is not necessary, and a global scheduling of $\eps$ is enough. We note that this phenomenon confirms our understanding of adversarial training from the margin maximization perspective in \Cref{sec:reltdAdvTrain}.

Under the case of MNIST-$\l2$, we notice that MMA training almost does not need to sacrifice clean accuracy in order to get higher robustness. All the models with $\dmax \geq 4.0$ behaves similarly w.r.t. both clean and robust accuracies. Achieving 40\% robust accuracy at $\eps=3.0$ seems to be the robustness limit of MMA trained models. On the other hand, PGD/PGDLS models are able to get higher robustness at $\eps=3.0$ with robust accuracy of 44.5\%, although with some sacrifices to clean accuracy. This is similar to what we have observed in the case of CIFAR10.

We notice that on both MNIST-$\linf$ and MNIST-$\l2$, unlike CIFAR10 cases, PGD(LS)-ens model performs poorly in terms of robustness. This is likely due to that PGD trained models on MNIST usually have a very sharp robustness drop when the $\eps$ used for attacking is larger than the $\eps$ used for training.

Another significant differences between MNIST cases and CIFAR10 cases is that
TransferGaps are very small for OMMA/MMA trained models on MNIST cases. This again is likely due to that MNIST is an ``easier'' dataset. It also indicates that the TransferGap is not purely due to the MMA training algorithm, it is also largely affected by the property of datasets. Although previous literature \citep{ding2019sensitivity,zhang2019limitations} also discusses related topics on the difference between MNIST and CIFAR10 w.r.t. adversarial robustness, they do not directly explain the observed phenomena here. We delay a thorough understanding of this topic to future work.

\subsection{Additional evaluation with the CW-$\ell_2$ attack}

We run the CW-$\ell_2$ attack \citep{carlini2017towards} on the first 1000 test examples for models trained with $\ell_2$ attacks.
This gives the minimum $\ell_2$ norm a perturbation needs to change the prediction. For each model, we show the clean accuracy, also mean, median, 25th percentile and 75th percentile of these minimum distances in \Cref{tab:mnist_cw_l2,tab:cifar10_cw_l2} below.

Looking at CW-$\ell_2$ results, we have very similar observations as compared to the observations we made based on robust accuracies (under PGD attacks) at different $\eps$'s and the average robust accuracy.

On MNIST, for MMA trained models as $d_{\max}$ increases from 2.0 to 6.0,
the mean minimum distance goes from 2.15 to 2.76,
with the clean accuracy only drops to 97.7\% from 99.3\%. On the contrary,
when we perform PGD/PGDLS with a larger $\epsilon$, e.g. $\epsilon=4.0$,
the clean accuracy drops significantly to 91.8\%,
with the mean minimum distance increased to 2.53.
Therefore MMA training achieves better performance on both clean accuracy and robustness under the MNIST-$\ell_2$ case.

On CIFAR10, we observed that MMA training is fairly stable to $d_{\max}$, and achieves good robustness-accuracy trade-offs.
With carefully chosen $\epsilon$ value, PGD-1.0/PGDLS-1.0 models can also achieve similar robustness-accuracy tradeoffs as compared to MMA training. However, when we perform PGD training with a larger $\epsilon$, the clean accuracy drops significantly.

\clearpage

\begin{table}
\caption{CW-$\ell_2$ attack results on models trained on MNIST with $\ell_2$-norm constrained attacks.}
\label{tab:mnist_cw_l2}
\begin{center}
\begin{scriptsize}
\begin{tabular}{c|c|c|ccc}
\toprule
MNIST Model & Cln Acc & Avg Norm & 25\% Norm & Median Norm & 75\% Norm\\
\midrule
STD        & 99.0 & 1.57 & 1.19 & 1.57 & 1.95 \\
\midrule
PGD-1.0    & 99.2 & 1.95 & 1.58 & 1.95 & 2.33 \\
PGD-2.0    & 98.6 & 2.32 & 1.85 & 2.43 & 2.82 \\
PGD-3.0    & 96.8 & 2.54 & 1.79 & 2.76 & 3.39 \\
PGD-4.0    & 91.8 & 2.53 & 1.38 & 2.75 & 3.75 \\
\midrule
PGDLS-1.0  & 99.3 & 1.88 & 1.53 & 1.88 & 2.24 \\
PGDLS-2.0  & 99.3 & 2.23 & 1.82 & 2.30 & 2.68 \\
PGDLS-3.0  & 96.9 & 2.51 & 1.81 & 2.74 & 3.29 \\
PGDLS-4.0  & 91.8 & 2.53 & 1.46 & 2.79 & 3.72 \\
\midrule
MMA-2.0    & 99.3 & 2.15 & 1.81 & 2.22 & 2.58 \\
MMA-4.0    & 98.5 & 2.67 & 1.86 & 2.72 & 3.52 \\
MMA-6.0    & 97.7 & 2.76 & 1.84 & 2.72 & 3.61 \\
\midrule
\bottomrule
\end{tabular}
\end{scriptsize}
\end{center}
\end{table}

\begin{table}
\caption{CW-$\ell_2$ attack results on models trained on CIFAR10 with $\ell_2$-norm constrained attacks.}
\label{tab:cifar10_cw_l2}
\begin{center}
\begin{scriptsize}
\begin{tabular}{c|c|c|ccc}
\toprule
CIFAR10 Model & Cln Acc & Avg Norm & 25\% Norm & Median Norm & 75\% Norm\\
\midrule
STD       & 95.1 & 0.09 & 0.05 & 0.08 & 0.13 \\
\midrule
PGD-0.5   & 89.2 & 0.80 & 0.32 & 0.76 & 1.21 \\
PGD-1.0   & 83.8 & 1.01 & 0.30 & 0.91 & 1.62 \\
PGD-1.5   & 76.0 & 1.11 & 0.04 & 0.96 & 1.87 \\
PGD-2.0   & 71.6 & 1.16 & 0.00 & 0.96 & 2.03 \\
PGD-2.5   & 65.2 & 1.19 & 0.00 & 0.88 & 2.10 \\
\midrule
PGDLS-0.5 & 90.5 & 0.79 & 0.34 & 0.76 & 1.17 \\
PGDLS-1.0 & 83.8 & 1.00 & 0.30 & 0.91 & 1.59 \\
PGDLS-1.5 & 77.6 & 1.10 & 0.11 & 0.97 & 1.83 \\
PGDLS-2.0 & 73.1 & 1.16 & 0.00 & 0.98 & 2.02 \\
PGDLS-2.5 & 66.0 & 1.19 & 0.00 & 0.88 & 2.13 \\
\midrule
MMA-1.0   & 88.4 & 0.85 & 0.32 & 0.80 & 1.27 \\
MMA-2.0   & 84.2 & 1.03 & 0.26 & 0.92 & 1.62 \\
MMA-3.0   & 81.2 & 1.09 & 0.21 & 0.92 & 1.73 \\
\midrule
\bottomrule
\end{tabular}
\end{scriptsize}
\end{center}
\end{table}

\clearpage

\begin{table}[tb]
\caption{Accuracies of models trained on MNIST with $\linf$-norm constrained attacks. These robust accuracies are calculated under both combined (whitebox+transfer) PGD attacks. sd0 and sd1 indicate 2 different random seeds.}

\label{tab:mnist_linf_total}
\begin{center}
\begin{scriptsize}
\begin{tabular}{c|c|c|c|cccc}
\toprule
\multirow{2}{*}{\shortstack{MNIST \\ Model}} & \multirow{2}{*}{Cln Acc} & \multirow{2}{*}{AvgAcc} & \multirow{2}{*}{AvgRobAcc} & \multicolumn{4}{c}{RobAcc under different $\eps$, combined (whitebox+transfer) attacks}  \\
 & & & & 0.1 & 0.2 & 0.3 & 0.4  \\\midrule
STD & 99.21 & 35.02 & 18.97 & 73.58 & 2.31 & 0.00 & 0.00 \\
\midrule
PGD-0.1 & 99.40 & 48.85 & 36.22 & 96.35 & 48.51 & 0.01 & 0.00 \\
PGD-0.2 & 99.22 & 57.92 & 47.60 & 97.44 & 92.12 & 0.84 & 0.00 \\
PGD-0.3 & 98.96 & 76.97 & 71.47 & 97.90 & 96.00 & 91.76 & 0.22 \\
PGD-0.4 & 96.64 & 89.37 & 87.55 & 94.69 & 91.57 & 86.49 & 77.47 \\
PGD-0.45 & 11.35 & 11.35 & 11.35 & 11.35 & 11.35 & 11.35 & 11.35 \\
\midrule
PGDLS-0.1 & 99.43 & 46.85 & 33.71 & 95.41 & 39.42 & 0.00 & 0.00 \\
PGDLS-0.2 & 99.38 & 58.36 & 48.10 & 97.38 & 89.49 & 5.53 & 0.00 \\
PGDLS-0.3 & 99.10 & 76.56 & 70.93 & 97.97 & 95.66 & 90.09 & 0.00 \\
PGDLS-0.4 & 98.98 & 93.07 & 91.59 & 98.12 & 96.29 & 93.01 & 78.96 \\
PGDLS-0.45 & 98.89 & 94.74 & 93.70 & 97.91 & 96.34 & 93.29 & 87.28 \\
\midrule
MMA-0.45-sd0 & 98.95 & 94.13 & 92.93 & 97.87 & 96.01 & 92.59 & 85.24 \\
MMA-0.45-sd1 & 98.90 & 94.04 & 92.82 & 97.82 & 96.00 & 92.63 & 84.83 \\
\midrule
OMMA-0.45-sd0 & 98.98 & 93.94 & 92.68 & 97.90 & 96.05 & 92.35 & 84.41 \\
OMMA-0.45-sd1 & 99.02 & 94.03 & 92.78 & 97.93 & 96.02 & 92.44 & 84.73 \\
\midrule
PGD-ens & 99.28 & 57.98 & 47.65 & 97.25 & 89.99 & 3.37 & 0.00 \\
PGDLS-ens & 99.34 & 59.04 & 48.96 & 97.48 & 90.40 & 7.96 & 0.00 \\
\midrule
PGD-Madry et al. & 98.53 & 76.04 & 70.41 & 97.08 & 94.83 & 89.64 & 0.11 \\
\midrule
TRADES & 99.47 & 77.94 & 72.56 & 98.81 & 97.27 & 94.11 & 0.05  \\
\midrule
\bottomrule
\end{tabular}
\end{scriptsize}
\end{center}
\end{table} 
\begin{table}[tb]
\caption{Accuracies of models trained on CIFAR10 with $\linf$-norm constrained attacks. These robust accuracies are calculated under both combined (whitebox+transfer) PGD attacks. sd0 and sd1 indicate 2 different random seeds.}

\label{tab:cifar10_linf_total}
\begin{center}
\begin{scriptsize}
\begin{tabular}{c|c|c|c|cccccccc}
\toprule
\multirow{2}{*}{\shortstack{CIFAR10 \\ Model}} & \multirow{2}{*}{Cln Acc} & \multirow{2}{*}{AvgAcc} & \multirow{2}{*}{AvgRobAcc} & \multicolumn{8}{c}{RobAcc under different $\eps$, combined (whitebox+transfer) attacks}  \\
 & & & & 4 & 8 & 12 & 16 & 20 & 24 & 28 & 32  \\\midrule
STD & 94.92 & 10.55 & 0.00 & 0.00 & 0.00 & 0.00 & 0.00 & 0.00 & 0.00 & 0.00 & 0.00 \\
\midrule
PGD-4 & 90.44 & 22.95 & 14.51 & 66.31 & 33.49 & 12.22 & 3.01 & 0.75 & 0.24 & 0.06 & 0.01 \\
PGD-8 & 85.14 & 27.27 & 20.03 & 67.73 & 46.47 & 26.63 & 12.33 & 4.69 & 1.56 & 0.62 & 0.22 \\
PGD-12 & 77.86 & 28.51 & 22.34 & 63.88 & 48.22 & 32.13 & 18.67 & 9.48 & 4.05 & 1.56 & 0.70 \\
PGD-16 & 68.86 & 28.28 & 23.21 & 57.99 & 46.09 & 33.64 & 22.73 & 13.37 & 7.01 & 3.32 & 1.54 \\
PGD-20 & 61.06 & 27.34 & 23.12 & 51.72 & 43.13 & 33.73 & 24.55 & 15.66 & 9.05 & 4.74 & 2.42 \\
PGD-24 & 10.90 & 9.95 & 9.83 & 10.60 & 10.34 & 10.11 & 10.00 & 9.89 & 9.69 & 9.34 & 8.68 \\
PGD-28 & 10.00 & 10.00 & 10.00 & 10.00 & 10.00 & 10.00 & 10.00 & 10.00 & 10.00 & 10.00 & 10.00 \\
PGD-32 & 10.00 & 10.00 & 10.00 & 10.00 & 10.00 & 10.00 & 10.00 & 10.00 & 10.00 & 10.00 & 10.00 \\
\midrule
PGDLS-4 & 89.87 & 22.39 & 13.96 & 63.98 & 31.92 & 11.47 & 3.32 & 0.68 & 0.16 & 0.08 & 0.05 \\
PGDLS-8 & 85.63 & 27.20 & 19.90 & 67.96 & 46.19 & 26.19 & 12.22 & 4.51 & 1.48 & 0.44 & 0.21 \\
PGDLS-12 & 79.39 & 28.45 & 22.08 & 64.62 & 48.08 & 31.34 & 17.86 & 8.69 & 3.95 & 1.48 & 0.65 \\
PGDLS-16 & 70.68 & 28.44 & 23.16 & 59.43 & 47.00 & 33.64 & 21.72 & 12.66 & 6.54 & 2.98 & 1.31 \\
PGDLS-20 & 65.81 & 27.60 & 22.83 & 54.96 & 44.39 & 33.13 & 22.53 & 13.80 & 7.79 & 4.08 & 1.95 \\
PGDLS-24 & 58.36 & 26.53 & 22.55 & 49.05 & 41.13 & 32.10 & 23.76 & 15.70 & 9.66 & 5.86 & 3.11 \\
PGDLS-28 & 50.07 & 24.20 & 20.97 & 40.71 & 34.61 & 29.00 & 22.77 & 16.83 & 11.49 & 7.62 & 4.73 \\
PGDLS-32 & 38.80 & 19.88 & 17.52 & 26.16 & 24.96 & 23.22 & 19.96 & 16.22 & 12.92 & 9.82 & 6.88 \\
\midrule
MMA-12-sd0 & 88.59 & 26.87 & 19.15 & 67.96 & 43.42 & 24.07 & 11.45 & 4.27 & 1.43 & 0.45 & 0.16 \\
MMA-12-sd1 & 88.91 & 26.23 & 18.39 & 67.08 & 42.97 & 22.57 & 9.76 & 3.37 & 0.92 & 0.35 & 0.12 \\
MMA-20-sd0 & 86.56 & 28.86 & 21.65 & 66.92 & 46.89 & 29.83 & 16.55 & 8.14 & 3.25 & 1.17 & 0.43 \\
MMA-20-sd1 & 85.87 & 28.72 & 21.57 & 65.44 & 46.11 & 29.96 & 17.30 & 8.27 & 3.60 & 1.33 & 0.56 \\
MMA-32-sd0 & 84.36 & 29.39 & 22.51 & 64.82 & 47.18 & 31.49 & 18.91 & 10.16 & 4.77 & 1.97 & 0.81 \\
MMA-32-sd1 & 84.76 & 29.08 & 22.11 & 64.41 & 45.95 & 30.36 & 18.24 & 9.85 & 4.99 & 2.20 & 0.92 \\
\midrule
OMMA-12-sd0 & 88.52 & 26.31 & 18.54 & 66.96 & 42.58 & 23.22 & 10.29 & 3.43 & 1.24 & 0.46 & 0.13 \\
OMMA-12-sd1 & 87.82 & 26.24 & 18.54 & 66.23 & 43.10 & 23.57 & 10.32 & 3.56 & 1.04 & 0.38 & 0.14 \\
OMMA-20-sd0 & 87.06 & 27.41 & 19.95 & 66.54 & 45.39 & 26.29 & 13.09 & 5.32 & 1.96 & 0.79 & 0.23 \\
OMMA-20-sd1 & 87.44 & 27.77 & 20.31 & 66.28 & 45.60 & 27.33 & 14.00 & 6.04 & 2.23 & 0.74 & 0.25 \\
OMMA-32-sd0 & 86.11 & 28.36 & 21.14 & 66.02 & 46.31 & 28.88 & 15.98 & 7.44 & 2.94 & 1.12 & 0.45 \\
OMMA-32-sd1 & 86.36 & 28.75 & 21.55 & 66.86 & 47.12 & 29.63 & 16.09 & 7.56 & 3.38 & 1.31 & 0.47 \\
\midrule
PGD-ens & 87.38 & 28.10 & 20.69 & 64.59 & 46.95 & 28.88 & 15.10 & 6.35 & 2.35 & 0.91 & 0.39 \\
PGDLS-ens & 76.73 & 29.52 & 23.62 & 60.52 & 48.21 & 35.06 & 22.14 & 12.28 & 6.17 & 3.14 & 1.43 \\
\midrule
PGD-Madry et al. & 87.14 & 27.22 & 19.73 & 68.01 & 44.68 & 25.03 & 12.15 & 5.18 & 1.95 & 0.64 & 0.23 \\
\midrule
TRADES & 84.92 & 30.46 & 23.65 & 70.96 & 52.92 & 33.04 & 18.23 & 8.34 & 3.57 & 1.4 & 0.69  \\
\midrule
\bottomrule
\end{tabular}
\end{scriptsize}
\end{center}
\end{table} 
\begin{table}[tb]
\caption{Accuracies of models trained on MNIST with $\l2$-norm constrained attacks. These robust accuracies are calculated under both combined (whitebox+transfer) PGD attacks. sd0 and sd1 indicate 2 different random seeds.}

\label{tab:mnist_l2_total}
\begin{center}
\begin{scriptsize}
\begin{tabular}{c|c|c|c|cccc}
\toprule
\multirow{2}{*}{\shortstack{MNIST \\ Model}} & \multirow{2}{*}{Cln Acc} & \multirow{2}{*}{AvgAcc} & \multirow{2}{*}{AvgRobAcc} & \multicolumn{4}{c}{RobAcc under different $\eps$, combined (whitebox+transfer) attacks}  \\
 & & & & 1.0 & 2.0 & 3.0 & 4.0  \\\midrule
STD & 99.21 & 41.84 & 27.49 & 86.61 & 22.78 & 0.59 & 0.00 \\
\midrule
PGD-1.0 & 99.30 & 48.78 & 36.15 & 95.06 & 46.84 & 2.71 & 0.00 \\
PGD-2.0 & 98.76 & 56.14 & 45.48 & 94.82 & 72.70 & 14.20 & 0.21 \\
PGD-3.0 & 97.14 & 60.36 & 51.17 & 90.01 & 71.03 & 38.93 & 4.71 \\
PGD-4.0 & 93.41 & 59.52 & 51.05 & 82.34 & 66.25 & 43.44 & 12.18 \\
\midrule
PGDLS-1.0 & 99.39 & 47.61 & 34.66 & 94.33 & 42.44 & 1.89 & 0.00 \\
PGDLS-2.0 & 99.09 & 54.73 & 43.64 & 95.22 & 69.33 & 10.01 & 0.01 \\
PGDLS-3.0 & 97.52 & 60.13 & 50.78 & 90.86 & 71.91 & 36.80 & 3.56 \\
PGDLS-4.0 & 93.68 & 59.49 & 50.95 & 82.67 & 67.21 & 43.68 & 10.23 \\
\midrule
MMA-2.0-sd0 & 99.27 & 53.85 & 42.50 & 95.59 & 68.37 & 6.03 & 0.01 \\
MMA-2.0-sd1 & 99.28 & 54.34 & 43.10 & 95.78 & 68.18 & 8.45 & 0.00 \\
MMA-4.0-sd0 & 98.71 & 62.25 & 53.13 & 93.93 & 74.01 & 39.34 & 5.24 \\
MMA-4.0-sd1 & 98.81 & 61.88 & 52.64 & 93.98 & 73.70 & 37.78 & 5.11 \\
MMA-6.0-sd0 & 98.32 & 62.32 & 53.31 & 93.16 & 72.63 & 38.78 & 8.69 \\
MMA-6.0-sd1 & 98.50 & 62.49 & 53.48 & 93.48 & 73.50 & 38.63 & 8.32 \\
\midrule
OMMA-2.0-sd0 & 99.26 & 54.01 & 42.69 & 95.94 & 67.78 & 7.03 & 0.03 \\
OMMA-2.0-sd1 & 99.21 & 54.04 & 42.74 & 95.72 & 68.83 & 6.42 & 0.00 \\
OMMA-4.0-sd0 & 98.61 & 62.17 & 53.06 & 94.06 & 73.51 & 39.66 & 5.02 \\
OMMA-4.0-sd1 & 98.61 & 62.01 & 52.86 & 93.72 & 73.18 & 38.98 & 5.58 \\
OMMA-6.0-sd0 & 98.16 & 62.45 & 53.52 & 92.90 & 72.59 & 39.68 & 8.93 \\
OMMA-6.0-sd1 & 98.45 & 62.24 & 53.19 & 93.37 & 72.93 & 37.63 & 8.83 \\
\midrule
PGD-ens & 98.87 & 56.13 & 45.44 & 94.37 & 70.16 & 16.79 & 0.46 \\
PGDLS-ens & 99.14 & 54.71 & 43.60 & 94.52 & 67.45 & 12.33 & 0.11 \\
\midrule
DDN-Rony et al. & 99.02 & 59.93 & 50.15 & 95.65 & 77.65 & 25.44 & 1.87 \\
\midrule
\bottomrule
\end{tabular}
\end{scriptsize}
\end{center}
\end{table} 
\begin{table}[tb]
\caption{Accuracies of models trained on CIFAR10 with $\l2$-norm constrained attacks. These robust accuracies are calculated under both combined (whitebox+transfer) PGD attacks. sd0 and sd1 indicate 2 different random seeds.}

\label{tab:cifar10_l2_total}
\begin{center}
\begin{scriptsize}
\begin{tabular}{c|c|c|c|ccccc}
\toprule
\multirow{2}{*}{\shortstack{CIFAR10 \\ Model}} & \multirow{2}{*}{Cln Acc} & \multirow{2}{*}{AvgAcc} & \multirow{2}{*}{AvgRobAcc} & \multicolumn{5}{c}{RobAcc under different $\eps$, combined (whitebox+transfer) attacks}  \\
 & & & & 0.5 & 1.0 & 1.5 & 2.0 & 2.5  \\\midrule
STD & 94.92 & 15.82 & 0.00 & 0.01 & 0.00 & 0.00 & 0.00 & 0.00 \\
\midrule
PGD-0.5 & 89.10 & 33.63 & 22.53 & 65.61 & 33.21 & 11.25 & 2.31 & 0.28 \\
PGD-1.0 & 83.25 & 39.70 & 30.99 & 66.69 & 46.08 & 26.05 & 11.92 & 4.21 \\
PGD-1.5 & 75.80 & 41.75 & 34.94 & 62.70 & 48.32 & 33.72 & 20.07 & 9.91 \\
PGD-2.0 & 71.05 & 41.78 & 35.92 & 59.76 & 47.85 & 35.29 & 23.15 & 13.56 \\
PGD-2.5 & 65.17 & 40.93 & 36.08 & 55.60 & 45.76 & 35.76 & 26.00 & 17.27 \\
\midrule
PGDLS-0.5 & 89.43 & 33.41 & 22.21 & 65.49 & 32.40 & 10.73 & 2.09 & 0.33 \\
PGDLS-1.0 & 83.62 & 39.46 & 30.63 & 67.29 & 45.30 & 25.43 & 11.08 & 4.03 \\
PGDLS-1.5 & 77.03 & 41.74 & 34.68 & 63.76 & 48.43 & 33.04 & 19.00 & 9.17 \\
PGDLS-2.0 & 72.14 & 42.15 & 36.16 & 60.90 & 48.22 & 35.21 & 23.19 & 13.26 \\
PGDLS-2.5 & 66.21 & 41.21 & 36.21 & 56.45 & 46.66 & 35.93 & 25.51 & 16.51 \\
\midrule
MMA-1.0-sd0 & 88.02 & 35.55 & 25.06 & 66.18 & 37.75 & 15.58 & 4.74 & 1.03 \\
MMA-1.0-sd1 & 88.92 & 35.69 & 25.05 & 66.81 & 37.16 & 15.71 & 4.49 & 1.07 \\
MMA-2.0-sd0 & 84.22 & 40.48 & 31.73 & 65.91 & 45.66 & 27.40 & 14.18 & 5.50 \\
MMA-2.0-sd1 & 85.16 & 39.81 & 30.75 & 65.36 & 44.44 & 26.42 & 12.63 & 4.88 \\
MMA-3.0-sd0 & 82.11 & 41.59 & 33.49 & 64.22 & 46.41 & 30.23 & 17.85 & 8.73 \\
MMA-3.0-sd1 & 81.79 & 41.16 & 33.03 & 63.58 & 45.59 & 29.77 & 17.52 & 8.69 \\
\midrule
OMMA-1.0-sd0 & 89.02 & 35.18 & 24.41 & 65.43 & 36.89 & 14.77 & 4.18 & 0.79 \\
OMMA-1.0-sd1 & 89.97 & 35.20 & 24.25 & 66.16 & 36.10 & 14.04 & 4.17 & 0.79 \\
OMMA-2.0-sd0 & 86.06 & 39.32 & 29.97 & 65.28 & 43.82 & 24.85 & 11.53 & 4.36 \\
OMMA-2.0-sd1 & 85.04 & 39.68 & 30.61 & 64.69 & 44.36 & 25.89 & 12.92 & 5.19 \\
OMMA-3.0-sd0 & 83.86 & 40.62 & 31.97 & 64.14 & 45.61 & 28.12 & 15.00 & 6.97 \\
OMMA-3.0-sd1 & 84.00 & 40.66 & 32.00 & 63.81 & 45.22 & 28.47 & 15.41 & 7.08 \\
\midrule
PGD-ens & 85.63 & 40.39 & 31.34 & 62.98 & 45.87 & 27.91 & 14.23 & 5.72 \\
PGDLS-ens & 86.11 & 40.38 & 31.23 & 63.74 & 46.21 & 27.58 & 13.32 & 5.31 \\
\midrule
DDN-Rony et al. & 89.05 & 36.23 & 25.67 & 66.51 & 39.02 & 16.60 & 5.02 & 1.20 \\
\midrule
\bottomrule
\end{tabular}
\end{scriptsize}
\end{center}
\end{table} 
\begin{table}[tb]
\caption{Accuracies of models trained on MNIST with $\linf$-norm constrained attacks. These robust accuracies are calculated under only whitebox PGD attacks. sd0 and sd1 indicate 2 different random seeds.}

\label{tab:mnist_linf_whitebox}
\begin{center}
\begin{scriptsize}
\begin{tabular}{c|c|c|c|cccc}
\toprule
\multirow{2}{*}{\shortstack{MNIST \\ Model}} & \multirow{2}{*}{Cln Acc} & \multirow{2}{*}{AvgAcc} & \multirow{2}{*}{AvgRobAcc} & \multicolumn{4}{c}{RobAcc under different $\eps$, whitebox only}  \\
 & & & & 0.1 & 0.2 & 0.3 & 0.4  \\\midrule
STD & 99.21 & 35.02 & 18.97 & 73.59 & 2.31 & 0.00 & 0.00 \\
\midrule
PGD-0.1 & 99.40 & 48.91 & 36.29 & 96.35 & 48.71 & 0.09 & 0.00 \\
PGD-0.2 & 99.22 & 57.93 & 47.60 & 97.44 & 92.12 & 0.86 & 0.00 \\
PGD-0.3 & 98.96 & 77.35 & 71.95 & 97.90 & 96.00 & 91.86 & 2.03 \\
PGD-0.4 & 96.64 & 91.51 & 90.22 & 94.79 & 92.27 & 88.82 & 85.02 \\
PGD-0.45 & 11.35 & 11.35 & 11.35 & 11.35 & 11.35 & 11.35 & 11.35 \\
\midrule
PGDLS-0.1 & 99.43 & 46.94 & 33.82 & 95.41 & 39.85 & 0.02 & 0.00 \\
PGDLS-0.2 & 99.38 & 58.44 & 48.20 & 97.38 & 89.49 & 5.95 & 0.00 \\
PGDLS-0.3 & 99.10 & 76.85 & 71.29 & 97.98 & 95.66 & 90.63 & 0.90 \\
PGDLS-0.4 & 98.98 & 95.49 & 94.61 & 98.13 & 96.42 & 94.02 & 89.89 \\
PGDLS-0.45 & 98.89 & 95.72 & 94.92 & 97.91 & 96.64 & 94.54 & 90.60 \\
\midrule
MMA-0.45-sd0 & 98.95 & 94.97 & 93.97 & 97.89 & 96.26 & 93.57 & 88.16 \\
MMA-0.45-sd1 & 98.90 & 94.83 & 93.81 & 97.83 & 96.18 & 93.34 & 87.91 \\
\midrule
OMMA-0.45-sd0 & 98.98 & 95.06 & 94.07 & 97.91 & 96.22 & 93.63 & 88.54 \\
OMMA-0.45-sd1 & 99.02 & 95.45 & 94.55 & 97.96 & 96.30 & 94.16 & 89.80 \\
\midrule
PGD-ens & 99.28 & 58.02 & 47.70 & 97.31 & 90.11 & 3.38 & 0.00 \\
PGDLS-ens & 99.34 & 59.09 & 49.02 & 97.50 & 90.56 & 8.03 & 0.00 \\
\midrule
PGD-Madry et al. & 98.53 & 76.08 & 70.47 & 97.08 & 94.87 & 89.79 & 0.13 \\
\midrule
TRADES & 99.47 & 77.95 & 72.57 & 98.81 & 97.27 & 94.13 & 0.07  \\
\midrule
\bottomrule
\end{tabular}
\end{scriptsize}
\end{center}
\end{table} 
\begin{table}[tb]
\caption{Accuracies of models trained on CIFAR10 with $\linf$-norm constrained attacks. These robust accuracies are calculated under only whitebox PGD attacks. sd0 and sd1 indicate 2 different random seeds.}

\label{tab:cifar10_linf_whitebox}
\begin{center}
\begin{scriptsize}
\begin{tabular}{c|c|c|c|cccccccc}
\toprule
\multirow{2}{*}{\shortstack{CIFAR10 \\ Model}} & \multirow{2}{*}{Cln Acc} & \multirow{2}{*}{AvgAcc} & \multirow{2}{*}{AvgRobAcc} & \multicolumn{8}{c}{RobAcc under different $\eps$, whitebox only}  \\
 & & & & 4 & 8 & 12 & 16 & 20 & 24 & 28 & 32  \\\midrule
STD & 94.92 & 10.55 & 0.00 & 0.00 & 0.00 & 0.00 & 0.00 & 0.00 & 0.00 & 0.00 & 0.00 \\
\midrule
PGD-4 & 90.44 & 22.97 & 14.53 & 66.33 & 33.51 & 12.27 & 3.03 & 0.77 & 0.25 & 0.07 & 0.02 \\
PGD-8 & 85.14 & 27.28 & 20.05 & 67.73 & 46.49 & 26.69 & 12.37 & 4.71 & 1.58 & 0.62 & 0.23 \\
PGD-12 & 77.86 & 28.55 & 22.39 & 63.90 & 48.25 & 32.19 & 18.78 & 9.58 & 4.12 & 1.59 & 0.72 \\
PGD-16 & 68.86 & 28.42 & 23.36 & 58.07 & 46.17 & 33.84 & 22.99 & 13.65 & 7.19 & 3.43 & 1.57 \\
PGD-20 & 61.06 & 27.73 & 23.57 & 51.75 & 43.32 & 34.22 & 25.19 & 16.36 & 9.65 & 5.33 & 2.73 \\
PGD-24 & 10.90 & 9.98 & 9.86 & 10.60 & 10.34 & 10.11 & 10.01 & 9.91 & 9.74 & 9.39 & 8.81 \\
PGD-28 & 10.00 & 10.00 & 10.00 & 10.00 & 10.00 & 10.00 & 10.00 & 10.00 & 10.00 & 10.00 & 10.00 \\
PGD-32 & 10.00 & 10.00 & 10.00 & 10.00 & 10.00 & 10.00 & 10.00 & 10.00 & 10.00 & 10.00 & 10.00 \\
\midrule
PGDLS-4 & 89.87 & 22.43 & 14.00 & 63.98 & 31.93 & 11.57 & 3.43 & 0.77 & 0.18 & 0.09 & 0.05 \\
PGDLS-8 & 85.63 & 27.22 & 19.92 & 67.96 & 46.19 & 26.24 & 12.28 & 4.54 & 1.52 & 0.45 & 0.21 \\
PGDLS-12 & 79.39 & 28.50 & 22.14 & 64.63 & 48.10 & 31.40 & 17.99 & 8.80 & 4.01 & 1.51 & 0.67 \\
PGDLS-16 & 70.68 & 28.53 & 23.26 & 59.44 & 47.04 & 33.78 & 21.94 & 12.79 & 6.66 & 3.07 & 1.34 \\
PGDLS-20 & 65.81 & 27.82 & 23.07 & 54.96 & 44.46 & 33.41 & 22.94 & 14.27 & 8.07 & 4.37 & 2.08 \\
PGDLS-24 & 58.36 & 27.25 & 23.36 & 49.09 & 41.47 & 32.90 & 24.84 & 16.93 & 10.88 & 7.04 & 3.76 \\
PGDLS-28 & 50.07 & 25.68 & 22.63 & 40.77 & 35.07 & 30.18 & 24.76 & 19.40 & 14.22 & 9.96 & 6.65 \\
PGDLS-32 & 38.80 & 22.79 & 20.79 & 26.19 & 25.34 & 24.72 & 23.21 & 20.98 & 18.13 & 15.12 & 12.66 \\
\midrule
MMA-12-sd0 & 88.59 & 27.54 & 19.91 & 67.99 & 43.62 & 24.79 & 12.74 & 5.85 & 2.68 & 1.09 & 0.51 \\
MMA-12-sd1 & 88.91 & 26.68 & 18.90 & 67.17 & 43.63 & 23.62 & 10.80 & 4.07 & 1.20 & 0.50 & 0.18 \\
MMA-20-sd0 & 86.56 & 31.72 & 24.87 & 67.07 & 48.74 & 34.06 & 21.97 & 13.37 & 7.56 & 4.06 & 2.11 \\
MMA-20-sd1 & 85.87 & 33.07 & 26.47 & 65.63 & 48.11 & 34.70 & 24.73 & 16.45 & 10.97 & 7.00 & 4.14 \\
MMA-32-sd0 & 84.36 & 36.58 & 30.60 & 65.25 & 50.20 & 38.78 & 30.01 & 22.57 & 16.66 & 12.30 & 9.07 \\
MMA-32-sd1 & 84.76 & 33.49 & 27.08 & 64.66 & 48.23 & 35.65 & 25.74 & 17.86 & 11.86 & 7.79 & 4.88 \\
\midrule
OMMA-12-sd0 & 88.52 & 29.34 & 21.94 & 67.49 & 46.11 & 29.22 & 16.65 & 8.62 & 4.36 & 2.05 & 1.03 \\
OMMA-12-sd1 & 87.82 & 30.30 & 23.11 & 66.77 & 46.77 & 31.19 & 19.40 & 10.93 & 5.72 & 2.84 & 1.29 \\
OMMA-20-sd0 & 87.06 & 36.00 & 29.61 & 68.00 & 52.98 & 40.13 & 28.92 & 19.78 & 13.04 & 8.47 & 5.60 \\
OMMA-20-sd1 & 87.44 & 34.49 & 27.87 & 67.40 & 51.55 & 37.94 & 26.48 & 17.76 & 11.31 & 6.74 & 3.76 \\
OMMA-32-sd0 & 86.11 & 38.87 & 32.97 & 67.57 & 53.70 & 42.56 & 32.88 & 24.91 & 18.57 & 13.79 & 9.76 \\
OMMA-32-sd1 & 86.36 & 39.13 & 33.23 & 68.80 & 56.02 & 44.62 & 33.97 & 24.71 & 17.37 & 11.94 & 8.39 \\
\midrule
PGD-ens & 87.38 & 28.83 & 21.51 & 64.85 & 47.67 & 30.37 & 16.63 & 7.79 & 3.01 & 1.25 & 0.52 \\
PGDLS-ens & 76.73 & 30.60 & 24.83 & 61.16 & 49.46 & 36.63 & 23.90 & 13.92 & 7.62 & 3.91 & 2.05 \\
\midrule
PGD-Madry et al. & 87.14 & 27.36 & 19.89 & 68.01 & 44.70 & 25.15 & 12.52 & 5.50 & 2.25 & 0.73 & 0.27 \\
\midrule
TRADES & 84.92 & 30.46 & 23.66 & 70.96 & 52.92 & 33.04 & 18.26 & 8.37 & 3.59 & 1.42 & 0.69  \\
\midrule
\bottomrule
\end{tabular}
\end{scriptsize}
\end{center}
\end{table} 
\begin{table}[tb]
\caption{Accuracies of models trained on MNIST with $\l2$-norm constrained attacks. These robust accuracies are calculated under only whitebox PGD attacks. sd0 and sd1 indicate 2 different random seeds.}

\label{tab:mnist_l2_whitebox}
\begin{center}
\begin{scriptsize}
\begin{tabular}{c|c|c|c|cccc}
\toprule
\multirow{2}{*}{\shortstack{MNIST \\ Model}} & \multirow{2}{*}{Cln Acc} & \multirow{2}{*}{AvgAcc} & \multirow{2}{*}{AvgRobAcc} & \multicolumn{4}{c}{RobAcc under different $\eps$, whitebox only}  \\
 & & & & 1.0 & 2.0 & 3.0 & 4.0  \\\midrule
STD & 99.21 & 41.90 & 27.57 & 86.61 & 23.02 & 0.64 & 0.00 \\
\midrule
PGD-1.0 & 99.30 & 49.55 & 37.11 & 95.07 & 48.99 & 4.36 & 0.01 \\
PGD-2.0 & 98.76 & 56.38 & 45.79 & 94.82 & 72.94 & 15.08 & 0.31 \\
PGD-3.0 & 97.14 & 60.94 & 51.89 & 90.02 & 71.53 & 40.72 & 5.28 \\
PGD-4.0 & 93.41 & 59.93 & 51.56 & 82.41 & 66.49 & 44.36 & 12.99 \\
\midrule
PGDLS-1.0 & 99.39 & 48.17 & 35.36 & 94.35 & 43.96 & 2.97 & 0.16 \\
PGDLS-2.0 & 99.09 & 55.17 & 44.19 & 95.22 & 69.73 & 11.80 & 0.03 \\
PGDLS-3.0 & 97.52 & 60.60 & 51.37 & 90.87 & 72.24 & 38.39 & 3.99 \\
PGDLS-4.0 & 93.68 & 59.89 & 51.44 & 82.73 & 67.37 & 44.59 & 11.07 \\
\midrule
MMA-2.0-sd0 & 99.27 & 53.97 & 42.64 & 95.59 & 68.66 & 6.32 & 0.01 \\
MMA-2.0-sd1 & 99.28 & 54.46 & 43.26 & 95.79 & 68.45 & 8.79 & 0.01 \\
MMA-4.0-sd0 & 98.71 & 62.51 & 53.45 & 93.93 & 74.06 & 40.02 & 5.81 \\
MMA-4.0-sd1 & 98.81 & 62.22 & 53.07 & 93.98 & 73.81 & 38.76 & 5.75 \\
MMA-6.0-sd0 & 98.32 & 62.60 & 53.67 & 93.16 & 72.72 & 39.47 & 9.35 \\
MMA-6.0-sd1 & 98.50 & 62.73 & 53.79 & 93.48 & 73.57 & 39.25 & 8.86 \\
\midrule
OMMA-2.0-sd0 & 99.26 & 54.12 & 42.83 & 95.94 & 68.08 & 7.27 & 0.03 \\
OMMA-2.0-sd1 & 99.21 & 54.12 & 42.85 & 95.72 & 68.96 & 6.72 & 0.00 \\
OMMA-4.0-sd0 & 98.61 & 62.44 & 53.40 & 94.06 & 73.60 & 40.29 & 5.66 \\
OMMA-4.0-sd1 & 98.61 & 62.22 & 53.13 & 93.72 & 73.23 & 39.53 & 6.03 \\
OMMA-6.0-sd0 & 98.16 & 62.67 & 53.79 & 92.90 & 72.71 & 40.28 & 9.29 \\
OMMA-6.0-sd1 & 98.45 & 62.52 & 53.54 & 93.37 & 73.02 & 38.49 & 9.28 \\
\midrule
PGD-ens & 98.87 & 56.57 & 45.99 & 94.73 & 70.98 & 17.76 & 0.51 \\
PGDLS-ens & 99.14 & 54.98 & 43.93 & 94.86 & 68.08 & 12.68 & 0.12 \\
\midrule
DDN-Rony et al. & 99.02 & 60.34 & 50.67 & 95.65 & 77.79 & 26.59 & 2.64 \\
\midrule
\bottomrule
\end{tabular}
\end{scriptsize}
\end{center}
\end{table} 
\begin{table}[tb]
\caption{Accuracies of models trained on CIFAR10 with $\l2$-norm constrained attacks. These robust accuracies are calculated under only whitebox PGD attacks. sd0 and sd1 indicate 2 different random seeds.}

\label{tab:cifar10_l2_whitebox}
\begin{center}
\begin{scriptsize}
\begin{tabular}{c|c|c|c|ccccc}
\toprule
\multirow{2}{*}{\shortstack{CIFAR10 \\ Model}} & \multirow{2}{*}{Cln Acc} & \multirow{2}{*}{AvgAcc} & \multirow{2}{*}{AvgRobAcc} & \multicolumn{5}{c}{RobAcc under different $\eps$, whitebox only}  \\
 & & & & 0.5 & 1.0 & 1.5 & 2.0 & 2.5  \\\midrule
STD & 94.92 & 15.82 & 0.00 & 0.01 & 0.00 & 0.00 & 0.00 & 0.00 \\
\midrule
PGD-0.5 & 89.10 & 33.64 & 22.55 & 65.61 & 33.23 & 11.29 & 2.34 & 0.29 \\
PGD-1.0 & 83.25 & 39.74 & 31.04 & 66.69 & 46.11 & 26.16 & 12.00 & 4.26 \\
PGD-1.5 & 75.80 & 41.81 & 35.02 & 62.74 & 48.35 & 33.80 & 20.17 & 10.03 \\
PGD-2.0 & 71.05 & 41.88 & 36.05 & 59.80 & 47.92 & 35.39 & 23.34 & 13.81 \\
PGD-2.5 & 65.17 & 41.03 & 36.20 & 55.66 & 45.82 & 35.90 & 26.14 & 17.49 \\
\midrule
PGDLS-0.5 & 89.43 & 33.44 & 22.25 & 65.50 & 32.42 & 10.78 & 2.17 & 0.36 \\
PGDLS-1.0 & 83.62 & 39.50 & 30.68 & 67.30 & 45.35 & 25.49 & 11.19 & 4.08 \\
PGDLS-1.5 & 77.03 & 41.80 & 34.75 & 63.76 & 48.46 & 33.11 & 19.12 & 9.32 \\
PGDLS-2.0 & 72.14 & 42.24 & 36.27 & 60.96 & 48.28 & 35.32 & 23.38 & 13.39 \\
PGDLS-2.5 & 66.21 & 41.34 & 36.36 & 56.49 & 46.72 & 36.13 & 25.73 & 16.75 \\
\midrule
MMA-1.0-sd0 & 88.02 & 35.58 & 25.09 & 66.19 & 37.80 & 15.61 & 4.79 & 1.06 \\
MMA-1.0-sd1 & 88.92 & 35.74 & 25.10 & 66.81 & 37.22 & 15.78 & 4.57 & 1.14 \\
MMA-2.0-sd0 & 84.22 & 41.22 & 32.62 & 65.98 & 46.11 & 28.56 & 15.60 & 6.86 \\
MMA-2.0-sd1 & 85.16 & 40.60 & 31.69 & 65.45 & 45.27 & 28.07 & 13.99 & 5.67 \\
MMA-3.0-sd0 & 82.11 & 43.67 & 35.98 & 64.25 & 47.61 & 33.48 & 22.07 & 12.50 \\
MMA-3.0-sd1 & 81.79 & 43.75 & 36.14 & 63.82 & 47.33 & 33.79 & 22.36 & 13.40 \\
\midrule
OMMA-1.0-sd0 & 89.02 & 35.49 & 24.79 & 65.46 & 37.38 & 15.34 & 4.76 & 1.00 \\
OMMA-1.0-sd1 & 89.97 & 35.41 & 24.49 & 66.24 & 36.47 & 14.44 & 4.43 & 0.89 \\
OMMA-2.0-sd0 & 86.06 & 42.80 & 34.14 & 65.55 & 46.29 & 30.60 & 18.23 & 10.05 \\
OMMA-2.0-sd1 & 85.04 & 42.96 & 34.55 & 65.23 & 46.32 & 31.07 & 19.36 & 10.75 \\
OMMA-3.0-sd0 & 83.86 & 46.46 & 38.99 & 64.67 & 49.34 & 36.40 & 26.50 & 18.02 \\
OMMA-3.0-sd1 & 84.00 & 45.59 & 37.91 & 64.31 & 48.50 & 35.92 & 24.81 & 16.03 \\
\midrule
PGD-ens & 85.63 & 41.32 & 32.46 & 63.27 & 46.66 & 29.35 & 15.95 & 7.09 \\
PGDLS-ens & 86.11 & 41.39 & 32.45 & 64.04 & 46.99 & 29.11 & 15.51 & 6.59 \\
\midrule
DDN-Rony et al. & 89.05 & 36.25 & 25.69 & 66.51 & 39.02 & 16.63 & 5.05 & 1.24 \\
\midrule
\bottomrule
\end{tabular}
\end{scriptsize}
\end{center}
\end{table} 
\begin{table}[tb]
\caption{The TransferGap of models trained on MNIST with $\linf$-norm constrained attacks. TransferGap indicates the gap between robust accuracy under only whitebox PGD attacks and under combined (whitebox+transfer) PGD attacks. sd0 and sd1 indicate 2 different random seeds.}

\label{tab:mnist_linf_gap}
\begin{center}
\begin{scriptsize}
\begin{tabular}{c|c|c|c|cccc}
\toprule
\multirow{2}{*}{\shortstack{MNIST \\ Model}} & \multirow{2}{*}{Cln Acc} & \multirow{2}{*}{AvgAcc} & \multirow{2}{*}{AvgRobAcc} & \multicolumn{4}{c}{TransferGap: RobAcc drop after adding transfer attacks}  \\
 & & & & 0.1 & 0.2 & 0.3 & 0.4  \\\midrule
STD & - & 0.00 & 0.00 & 0.01 & 0.00 & 0.00 & 0.00 \\
\midrule
PGD-0.1 & - & 0.06 & 0.07 & 0.00 & 0.20 & 0.08 & 0.00 \\
PGD-0.2 & - & 0.00 & 0.00 & 0.00 & 0.00 & 0.02 & 0.00 \\
PGD-0.3 & - & 0.38 & 0.48 & 0.00 & 0.00 & 0.10 & 1.81 \\
PGD-0.4 & - & 2.14 & 2.67 & 0.10 & 0.70 & 2.33 & 7.55 \\
PGD-0.45 & - & 0.00 & 0.00 & 0.00 & 0.00 & 0.00 & 0.00 \\
\midrule
PGDLS-0.1 & - & 0.09 & 0.11 & 0.00 & 0.43 & 0.02 & 0.00 \\
PGDLS-0.2 & - & 0.08 & 0.11 & 0.00 & 0.00 & 0.42 & 0.00 \\
PGDLS-0.3 & - & 0.29 & 0.36 & 0.01 & 0.00 & 0.54 & 0.90 \\
PGDLS-0.4 & - & 2.42 & 3.02 & 0.01 & 0.13 & 1.01 & 10.93 \\
PGDLS-0.45 & - & 0.97 & 1.22 & 0.00 & 0.30 & 1.25 & 3.32 \\
\midrule
MMA-0.45-sd0 & - & 0.83 & 1.04 & 0.02 & 0.25 & 0.98 & 2.92 \\
MMA-0.45-sd1 & - & 0.80 & 0.99 & 0.01 & 0.18 & 0.71 & 3.08 \\
\midrule
OMMA-0.45-sd0 & - & 1.12 & 1.40 & 0.01 & 0.17 & 1.28 & 4.13 \\
OMMA-0.45-sd1 & - & 1.42 & 1.78 & 0.03 & 0.28 & 1.72 & 5.07 \\
\midrule
PGD-ens & - & 0.04 & 0.05 & 0.06 & 0.12 & 0.01 & 0.00 \\
PGDLS-ens & - & 0.05 & 0.06 & 0.02 & 0.16 & 0.07 & 0.00 \\
\midrule
PGD-Madry et al. & - & 0.04 & 0.05 & 0.00 & 0.04 & 0.15 & 0.02 \\
\midrule
TRADES & 0.0 & 0.01 & 0.0 & 0.0 & 0.0 & 0.01 & 0.02  \\
\midrule
\bottomrule
\end{tabular}
\end{scriptsize}
\end{center}
\end{table} 
\begin{table}[tb]
\caption{The TransferGap of models trained on CIFAR10 with $\linf$-norm constrained attacks. TransferGap indicates the gap between robust accuracy under only whitebox PGD attacks and under combined (whitebox+transfer) PGD attacks. sd0 and sd1 indicate 2 different random seeds.}

\label{tab:cifar10_linf_gap}
\begin{center}
\begin{scriptsize}
\begin{tabular}{c|c|c|c|cccccccc}
\toprule
\multirow{2}{*}{\shortstack{CIFAR10 \\ Model}} & \multirow{2}{*}{Cln Acc} & \multirow{2}{*}{AvgAcc} & \multirow{2}{*}{AvgRobAcc} & \multicolumn{8}{c}{TransferGap: RobAcc drop after adding transfer attacks}  \\
 & & & & 4 & 8 & 12 & 16 & 20 & 24 & 28 & 32  \\\midrule
STD & - & 0.00 & 0.00 & 0.00 & 0.00 & 0.00 & 0.00 & 0.00 & 0.00 & 0.00 & 0.00 \\
\midrule
PGD-4 & - & 0.02 & 0.02 & 0.02 & 0.02 & 0.05 & 0.02 & 0.02 & 0.01 & 0.01 & 0.01 \\
PGD-8 & - & 0.02 & 0.02 & 0.00 & 0.02 & 0.06 & 0.04 & 0.02 & 0.02 & 0.00 & 0.01 \\
PGD-12 & - & 0.05 & 0.05 & 0.02 & 0.03 & 0.06 & 0.11 & 0.10 & 0.07 & 0.03 & 0.02 \\
PGD-16 & - & 0.14 & 0.15 & 0.08 & 0.08 & 0.20 & 0.26 & 0.28 & 0.18 & 0.11 & 0.03 \\
PGD-20 & - & 0.39 & 0.44 & 0.03 & 0.19 & 0.49 & 0.64 & 0.70 & 0.60 & 0.59 & 0.31 \\
PGD-24 & - & 0.03 & 0.03 & 0.00 & 0.00 & 0.00 & 0.01 & 0.02 & 0.05 & 0.05 & 0.13 \\
PGD-28 & - & 0.00 & 0.00 & 0.00 & 0.00 & 0.00 & 0.00 & 0.00 & 0.00 & 0.00 & 0.00 \\
PGD-32 & - & 0.00 & 0.00 & 0.00 & 0.00 & 0.00 & 0.00 & 0.00 & 0.00 & 0.00 & 0.00 \\
\midrule
PGDLS-4 & - & 0.04 & 0.04 & 0.00 & 0.01 & 0.10 & 0.11 & 0.09 & 0.02 & 0.01 & 0.00 \\
PGDLS-8 & - & 0.02 & 0.02 & 0.00 & 0.00 & 0.05 & 0.06 & 0.03 & 0.04 & 0.01 & 0.00 \\
PGDLS-12 & - & 0.05 & 0.05 & 0.01 & 0.02 & 0.06 & 0.13 & 0.11 & 0.06 & 0.03 & 0.02 \\
PGDLS-16 & - & 0.09 & 0.10 & 0.01 & 0.04 & 0.14 & 0.22 & 0.13 & 0.12 & 0.09 & 0.03 \\
PGDLS-20 & - & 0.21 & 0.24 & 0.00 & 0.07 & 0.28 & 0.41 & 0.47 & 0.28 & 0.29 & 0.13 \\
PGDLS-24 & - & 0.73 & 0.82 & 0.04 & 0.34 & 0.80 & 1.08 & 1.23 & 1.22 & 1.18 & 0.65 \\
PGDLS-28 & - & 1.47 & 1.66 & 0.06 & 0.46 & 1.18 & 1.99 & 2.57 & 2.73 & 2.34 & 1.92 \\
PGDLS-32 & - & 2.91 & 3.28 & 0.03 & 0.38 & 1.50 & 3.25 & 4.76 & 5.21 & 5.30 & 5.78 \\
\midrule
MMA-12-sd0 & - & 0.67 & 0.76 & 0.03 & 0.20 & 0.72 & 1.29 & 1.58 & 1.25 & 0.64 & 0.35 \\
MMA-12-sd1 & - & 0.45 & 0.50 & 0.09 & 0.66 & 1.05 & 1.04 & 0.70 & 0.28 & 0.15 & 0.06 \\
MMA-20-sd0 & - & 2.86 & 3.22 & 0.15 & 1.85 & 4.23 & 5.42 & 5.23 & 4.31 & 2.89 & 1.68 \\
MMA-20-sd1 & - & 4.35 & 4.90 & 0.19 & 2.00 & 4.74 & 7.43 & 8.18 & 7.37 & 5.67 & 3.58 \\
MMA-32-sd0 & - & 7.19 & 8.09 & 0.43 & 3.02 & 7.29 & 11.10 & 12.41 & 11.89 & 10.33 & 8.26 \\
MMA-32-sd1 & - & 4.42 & 4.97 & 0.25 & 2.28 & 5.29 & 7.50 & 8.01 & 6.87 & 5.59 & 3.96 \\
\midrule
OMMA-12-sd0 & - & 3.02 & 3.40 & 0.53 & 3.53 & 6.00 & 6.36 & 5.19 & 3.12 & 1.59 & 0.90 \\
OMMA-12-sd1 & - & 4.06 & 4.57 & 0.54 & 3.67 & 7.62 & 9.08 & 7.37 & 4.68 & 2.46 & 1.15 \\
OMMA-20-sd0 & - & 8.59 & 9.66 & 1.46 & 7.59 & 13.84 & 15.83 & 14.46 & 11.08 & 7.68 & 5.37 \\
OMMA-20-sd1 & - & 6.72 & 7.56 & 1.12 & 5.95 & 10.61 & 12.48 & 11.72 & 9.08 & 6.00 & 3.51 \\
OMMA-32-sd0 & - & 10.51 & 11.83 & 1.55 & 7.39 & 13.68 & 16.90 & 17.47 & 15.63 & 12.67 & 9.31 \\
OMMA-32-sd1 & - & 10.38 & 11.67 & 1.94 & 8.90 & 14.99 & 17.88 & 17.15 & 13.99 & 10.63 & 7.92 \\
\midrule
PGD-ens & - & 0.73 & 0.82 & 0.26 & 0.72 & 1.49 & 1.53 & 1.44 & 0.66 & 0.34 & 0.13 \\
PGDLS-ens & - & 1.08 & 1.21 & 0.64 & 1.25 & 1.57 & 1.76 & 1.64 & 1.45 & 0.77 & 0.62 \\
\midrule
PGD-Madry et al. & - & 0.14 & 0.16 & 0.00 & 0.02 & 0.12 & 0.37 & 0.32 & 0.30 & 0.09 & 0.04 \\
\midrule
TRADES & 0.0 & 0.0 & 0.01 & 0.0 & 0.0 & 0.0 & 0.03 & 0.02 & 0.02 & 0.02 & 0.0  \\
\midrule
\bottomrule
\end{tabular}
\end{scriptsize}
\end{center}
\end{table} 
\begin{table}[tb]
\caption{The TransferGap of models trained on MNIST with $\l2$-norm constrained attacks. TransferGap indicates the gap between robust accuracy under only whitebox PGD attacks and under combined (whitebox+transfer) PGD attacks. sd0 and sd1 indicate 2 different random seeds.}

\label{tab:mnist_l2_gap}
\begin{center}
\begin{scriptsize}
\begin{tabular}{c|c|c|c|cccc}
\toprule
\multirow{2}{*}{\shortstack{MNIST \\ Model}} & \multirow{2}{*}{Cln Acc} & \multirow{2}{*}{AvgAcc} & \multirow{2}{*}{AvgRobAcc} & \multicolumn{4}{c}{TransferGap: RobAcc drop after adding transfer attacks}  \\
 & & & & 1.0 & 2.0 & 3.0 & 4.0  \\\midrule
STD & - & 0.06 & 0.07 & 0.00 & 0.24 & 0.05 & 0.00 \\
\midrule
PGD-1.0 & - & 0.76 & 0.96 & 0.01 & 2.15 & 1.65 & 0.01 \\
PGD-2.0 & - & 0.24 & 0.30 & 0.00 & 0.24 & 0.88 & 0.10 \\
PGD-3.0 & - & 0.57 & 0.72 & 0.01 & 0.50 & 1.79 & 0.57 \\
PGD-4.0 & - & 0.41 & 0.51 & 0.07 & 0.24 & 0.92 & 0.81 \\
\midrule
PGDLS-1.0 & - & 0.56 & 0.70 & 0.02 & 1.52 & 1.08 & 0.16 \\
PGDLS-2.0 & - & 0.44 & 0.55 & 0.00 & 0.40 & 1.79 & 0.02 \\
PGDLS-3.0 & - & 0.47 & 0.59 & 0.01 & 0.33 & 1.59 & 0.43 \\
PGDLS-4.0 & - & 0.39 & 0.49 & 0.06 & 0.16 & 0.91 & 0.84 \\
\midrule
MMA-2.0-sd0 & - & 0.12 & 0.15 & 0.00 & 0.29 & 0.29 & 0.00 \\
MMA-2.0-sd1 & - & 0.13 & 0.16 & 0.01 & 0.27 & 0.34 & 0.01 \\
MMA-4.0-sd0 & - & 0.26 & 0.33 & 0.00 & 0.05 & 0.68 & 0.57 \\
MMA-4.0-sd1 & - & 0.35 & 0.43 & 0.00 & 0.11 & 0.98 & 0.64 \\
MMA-6.0-sd0 & - & 0.29 & 0.36 & 0.00 & 0.09 & 0.69 & 0.66 \\
MMA-6.0-sd1 & - & 0.25 & 0.31 & 0.00 & 0.07 & 0.62 & 0.54 \\
\midrule
OMMA-2.0-sd0 & - & 0.11 & 0.13 & 0.00 & 0.30 & 0.24 & 0.00 \\
OMMA-2.0-sd1 & - & 0.09 & 0.11 & 0.00 & 0.13 & 0.30 & 0.00 \\
OMMA-4.0-sd0 & - & 0.27 & 0.34 & 0.00 & 0.09 & 0.63 & 0.64 \\
OMMA-4.0-sd1 & - & 0.21 & 0.26 & 0.00 & 0.05 & 0.55 & 0.45 \\
OMMA-6.0-sd0 & - & 0.22 & 0.27 & 0.00 & 0.12 & 0.60 & 0.36 \\
OMMA-6.0-sd1 & - & 0.28 & 0.35 & 0.00 & 0.09 & 0.86 & 0.45 \\
\midrule
PGD-ens & - & 0.44 & 0.55 & 0.36 & 0.82 & 0.97 & 0.05 \\
PGDLS-ens & - & 0.27 & 0.33 & 0.34 & 0.63 & 0.35 & 0.01 \\
\midrule
DDN-Rony et al. & - & 0.41 & 0.51 & 0.00 & 0.14 & 1.15 & 0.77 \\
\midrule
\bottomrule
\end{tabular}
\end{scriptsize}
\end{center}
\end{table} 
\begin{table}[tb]
\caption{The TransferGap of models trained on CIFAR10 with $\l2$-norm constrained attacks. TransferGap indicates the gap between robust accuracy under only whitebox PGD attacks and under combined (whitebox+transfer) PGD attacks. sd0 and sd1 indicate 2 different random seeds.}

\label{tab:cifar10_l2_gap}
\begin{center}
\begin{scriptsize}
\begin{tabular}{c|c|c|c|ccccc}
\toprule
\multirow{2}{*}{\shortstack{CIFAR10 \\ Model}} & \multirow{2}{*}{Cln Acc} & \multirow{2}{*}{AvgAcc} & \multirow{2}{*}{AvgRobAcc} & \multicolumn{5}{c}{TransferGap: RobAcc drop after adding transfer attacks}  \\
 & & & & 0.5 & 1.0 & 1.5 & 2.0 & 2.5  \\\midrule
STD & - & 0.00 & 0.00 & 0.00 & 0.00 & 0.00 & 0.00 & 0.00 \\
\midrule
PGD-0.5 & - & 0.02 & 0.02 & 0.00 & 0.02 & 0.04 & 0.03 & 0.01 \\
PGD-1.0 & - & 0.04 & 0.05 & 0.00 & 0.03 & 0.11 & 0.08 & 0.05 \\
PGD-1.5 & - & 0.06 & 0.07 & 0.04 & 0.03 & 0.08 & 0.10 & 0.12 \\
PGD-2.0 & - & 0.11 & 0.13 & 0.04 & 0.07 & 0.10 & 0.19 & 0.25 \\
PGD-2.5 & - & 0.10 & 0.12 & 0.06 & 0.06 & 0.14 & 0.14 & 0.22 \\
\midrule
PGDLS-0.5 & - & 0.03 & 0.04 & 0.01 & 0.02 & 0.05 & 0.08 & 0.03 \\
PGDLS-1.0 & - & 0.05 & 0.06 & 0.01 & 0.05 & 0.06 & 0.11 & 0.05 \\
PGDLS-1.5 & - & 0.06 & 0.07 & 0.00 & 0.03 & 0.07 & 0.12 & 0.15 \\
PGDLS-2.0 & - & 0.09 & 0.11 & 0.06 & 0.06 & 0.11 & 0.19 & 0.13 \\
PGDLS-2.5 & - & 0.13 & 0.15 & 0.04 & 0.06 & 0.20 & 0.22 & 0.24 \\
\midrule
MMA-1.0-sd0 & - & 0.03 & 0.03 & 0.01 & 0.05 & 0.03 & 0.05 & 0.03 \\
MMA-1.0-sd1 & - & 0.05 & 0.06 & 0.00 & 0.06 & 0.07 & 0.08 & 0.07 \\
MMA-2.0-sd0 & - & 0.74 & 0.89 & 0.07 & 0.45 & 1.16 & 1.42 & 1.36 \\
MMA-2.0-sd1 & - & 0.79 & 0.94 & 0.09 & 0.83 & 1.65 & 1.36 & 0.79 \\
MMA-3.0-sd0 & - & 2.08 & 2.49 & 0.03 & 1.20 & 3.25 & 4.22 & 3.77 \\
MMA-3.0-sd1 & - & 2.59 & 3.11 & 0.24 & 1.74 & 4.02 & 4.84 & 4.71 \\
\midrule
OMMA-1.0-sd0 & - & 0.31 & 0.38 & 0.03 & 0.49 & 0.57 & 0.58 & 0.21 \\
OMMA-1.0-sd1 & - & 0.20 & 0.24 & 0.08 & 0.37 & 0.40 & 0.26 & 0.10 \\
OMMA-2.0-sd0 & - & 3.48 & 4.18 & 0.27 & 2.47 & 5.75 & 6.70 & 5.69 \\
OMMA-2.0-sd1 & - & 3.28 & 3.94 & 0.54 & 1.96 & 5.18 & 6.44 & 5.56 \\
OMMA-3.0-sd0 & - & 5.85 & 7.02 & 0.53 & 3.73 & 8.28 & 11.50 & 11.05 \\
OMMA-3.0-sd1 & - & 4.93 & 5.92 & 0.50 & 3.28 & 7.45 & 9.40 & 8.95 \\
\midrule
PGD-ens & - & 0.94 & 1.12 & 0.29 & 0.79 & 1.44 & 1.72 & 1.37 \\
PGDLS-ens & - & 1.01 & 1.22 & 0.30 & 0.78 & 1.53 & 2.19 & 1.28 \\
\midrule
DDN-Rony et al. & - & 0.02 & 0.02 & 0.00 & 0.00 & 0.03 & 0.03 & 0.04 \\
\midrule
\bottomrule
\end{tabular}
\end{scriptsize}
\end{center}
\end{table}

\end{document}